\documentclass{article}

\usepackage{siunitx}
\usepackage{booktabs}        % for \toprule, \midrule, \bottomrule
\usepackage{siunitx}         % for S‑columns and decimal alignment
\usepackage[table]{xcolor}   % for \rowcolors
\usepackage{colortbl}        % extra coloring tools (often loaded alongside xcolor)

% (optional) tweak row spacing:

% if you need to pass options to natbib, use, e.g.:
     \PassOptionsToPackage{numbers, compress}{natbib}
% before loading neurips_2025

% ready for submission
%\usepackage{neurips_2025}

% to compile a preprint version, e.g., for submission to arXiv, add add the
% [preprint] option:
\usepackage[final]{neurips_2025}

% to avoid loading the natbib package, add option nonatbib:
%    \usepackage[nonatbib]{neurips_2025}
\usepackage{enumitem}
\usepackage{graphicx}
\usepackage[utf8]{inputenc} % allow utf-8 input
\usepackage[T1]{fontenc}    % use 8-bit T1 fonts
\usepackage{hyperref}       % hyperlinks
\usepackage{url}            % simple URL typesetting
\usepackage{booktabs}       % professional-quality tables
\usepackage{amsfonts}       % blackboard math symbols
\usepackage{nicefrac}       % compact symbols for 1/2, etc.
\usepackage{comment}
\usepackage{microtype}      % microtypography
\usepackage{xcolor}         % colors
\usepackage{wrapfig} 
\usepackage{float}
\usepackage{algorithm,algpseudocode}
\usepackage{YK}
\hypersetup{
    colorlinks,
    % linkcolor={black},
    linkcolor={red!50!black},
    citecolor={blue!60!black},
    urlcolor={blue!80!black}
}
\usepackage[most]{tcolorbox}
% load the listings and breakable extensions
\tcbuselibrary{listings,breakable}
% define an environment that auto-breaks long lines
\usepackage{ragged2e}        

\title{Bridging Human and LLM Judgments: Understanding and Narrowing the Gap}

\author{%
  Felipe Maia Polo\textsuperscript{1}\thanks{These authors contributed equally to this work. Corresponding authors: felipemaiapolo@gmail.com; xinhe.wang07@gmail.com},%
  \quad Xinhe Wang\textsuperscript{1}\footnotemark[1],%
  \quad Mikhail Yurochkin\textsuperscript{2}\\%
  \quad \textbf{Gongjun Xu\textsuperscript{1}},%
  \quad \textbf{Moulinath Banerjee\textsuperscript{1}},%
  \quad \textbf{Yuekai Sun\textsuperscript{1}} \\
  \textsuperscript{1}Department of Statistics, University of Michigan\\
  \textsuperscript{2}Institute of Foundation Models, MBZUAI
}

\begin{document}

\maketitle

\begin{abstract}
Large language models are increasingly used as judges (LLM-as-a-judge) to evaluate model outputs at scale, but their assessments often diverge systematically from human judgments. We present \texttt{Bridge}\footnote[1]{Please check our GitHub repository: \url{https://github.com/felipemaiapolo/bridge}}, a unified statistical framework that explicitly bridges human and LLM evaluations under both absolute scoring and pairwise comparison paradigms. \texttt{Bridge} posits a latent human preference score for each prompt-response pair and models LLM deviations as linear transformations of covariates that capture sources of discrepancies. This offers a simple and principled framework for refining LLM ratings and characterizing systematic discrepancies between humans and LLMs. We provide an efficient fitting algorithm with asymptotic guarantees for statistical inference. Using six LLM judges and two benchmarks (BigGen Bench and Chatbot Arena), \texttt{Bridge} achieves higher agreement with human ratings (accuracy, calibration, and KL divergence) and exposes systematic human-LLM gaps.
\end{abstract}

\vspace{-.3cm}
\section{Introduction}

Accurate and reliable evaluation is fundamental to the advancement and deployment of artificial intelligence (AI) systems, such as Large Language Models (LLMs). Traditional expert-based or automatic methods (\eg, ROUGE~\citep{lin2004rouge} or BLEU~\citep{papineni2002bleu}) for open-ended generated text often struggle with either scalability or poor quality. Recently, LLMs have shown significant promise in addressing these challenges through the emergent ``LLM-as-a-Judge'' (LLMJ) paradigm~\citep{gu2024survey, li2024generation, li2024llms}. Leveraging their extensive knowledge, flexible reasoning, instruction following, and natural language understanding, LLMs can effectively evaluate complex tasks by scoring, ranking, or selecting among diverse outputs. However, ensuring the trustworthiness, robustness, and human-alignment of LLM-based evaluation systems remains a critical challenge that necessitates careful attention.

A crucial step towards better judges involves deepening our understanding of LLMJ systems and recognizing their inherent strengths and limitations. Such knowledge allows practitioners to better quantify associated risks and steer the development of more robust evaluation methods. Recent research has extensively explored factors contributing to inaccuracies in LLM judgments and proposed ways to mitigate them. For example, various biases have been well-documented, including preferences for lengthier responses, overly generous scoring tendencies, or biases influenced simply by the presentation order during pairwise comparisons~\citep{dubois2024length,thakur2024judging,zheng2023judging,shi2024judging}. In this work, we introduce \texttt{Bridge}, a statistical framework that explicitly connects LLM ratings to human judgments, providing deeper insight into the sources of human-LLM discrepancies and enabling more reliable alignment between the two. Our framework is LLM-agnostic, does not require access to model weights, and can be applied on top of any API.

\texttt{Bridge} combines a statistical model with a specialized fitting algorithm via the proposed \emph{logit trick}, and we demonstrate the asymptotic normality of the resulting estimators. Our model assumes that both human annotators and LLM judges score each prompt-response pair according to a shared latent preference signal, while systematic deviations in LLM scores are captured by a linear transformation of covariates that encode potential human-LLM divergence sources (\eg, response length, text sentiment, writer's creativity). This formulation enables simultaneous, rigorous, interpretable estimation and testing of \emph{multiple discrepancies between human and LLM judgments}; something which current approaches cannot accomplish. In addition, it enables lightweight post-hoc corrections: with only a small set of human labels, we can recalibrate LLM scores for improved probabilistic alignment with human assessments.

In summary, our contributions are:
\begin{enumerate}[leftmargin=*,labelsep=5pt]
    \item Proposing \texttt{Bridge}, a statistical framework connecting human and LLM judgments, which combines a statistical model with a specialized fitting algorithm via the proposed \emph{logit trick}. Our approach (i) allows practitioners to better understand what makes humans and LLMJ different, (ii) enables better probabilistic alignment with human judgments, and (iii) is LLM-agnostic, making it applicable on top of any API.

    \item Deriving the asymptotic distribution of our parameter estimators. These asymptotic distributions allow us to construct confidence intervals for our parameters and formally test for different types of human-LLM gaps. Moreover, it allows us to construct confidence intervals for predictive quantities such as the probability of humans making a certain judgment.

    \item Validating our framework using six different LLM judges and queries from BigGen Bench and Chatbot Arena. We show that we can better align LLMs to human annotators using a few labeled data points and interpret the systematic differences between the two types of judges.
\end{enumerate}

\vspace{-.2cm}
\subsection{Related work}\label{sec:related}

Improving the alignment between LLM-based judges and human annotators is often an important factor for their reliable use at scale. Prior work has pursued several complementary strategies: supervised fine-tuning on human-labelled data~\citep{huang2024empirical}; supplying in-context examples to the judge~\citep{zheng2023judging}; post-hoc smoothing of raw scores~\citep{liu2023g,lee2024fleur}; decomposing complex evaluation tasks into simpler, verifiable criteria~\citep{saad2024lmunit,wei2025rocketeval}; harnessing reference answers or detailed rubrics~\citep{kim2024biggen,zhan2025spri}, or chain-of-thought (CoT) strategies to make use of the model's reasoning abilities~\citep{kim2024biggen,li2024crowdsourced}.

A parallel line of research first diagnoses systematic biases in LLMJ and then proposes corrective measures. For instance, \citet{dubois2024length} reveals a strong preference for longer responses and introduces the length-controlled AlpacaEval~\citep{alpaca_eval} benchmark. \citet{park2024offsetbias} catalogue six distinct bias types, construct counter-biased datasets, and fine-tune judges on these counter-examples. Additional studies on LLMJ biasing factors document, for example, position bias~\citep{ye2024justice,zheng2023judging,shi2024judging,wang2023large}, leniency bias~\citep{thakur2024judging}, and sentiment or authority biases~\citep{ye2024justice}. Few works have also uncovered biases in human ratings themselves~\citep{chen2024humans,li2024does}. On a related but different direction, \citet{buyl2025ai} studies how discretion is exercised by humans and LLMs when making choices based on key guiding principles.

Despite recent advances, no prior work has systematically examined divergences between human and LLM judgments in a comprehensive way. These discrepancies may carry negative connotations (\eg, LLM biases), be value-neutral, or even positive (\eg, human biases). Yet such a comparison is essential: it clarifies the strengths and limitations of LLM judges and underscores that corrections should be made \emph{relative} to human preferences, not in absolute terms, if human alignment is the goal. Otherwise, eliminating a feature valued by both humans and LLMs could inadvertently \emph{widen} the misalignment. To fill this gap, our work offers a principled way to analyze the systematic differences between human and LLM judges.

\vspace{-.3cm}
\section{Problem setup}

We consider two evaluation scenarios: absolute and relative ratings. In the absolute case, an entity under evaluation (\eg, an LLM or a human) is presented with an input prompt $I$ and produces a text output $O$\footnote{While $O$ may take other forms in principle, we restrict our attention to textual outputs.}. Given the pair $(I, O)$, both a human evaluator and an LLM judge assign scalar ratings $Y^{h}, Y^{l} \in \reals$, where the numerical values reflect absolute assessments, such as whether the response is satisfactory. In the relative case, two entities generate responses $O_A$ and $O_B$ to the same prompt $I$, and evaluators provide comparative judgments $Y^{h}$ and $Y^{l}$ that encode one of the options in $\{O_A \text{ wins}, \text{tie}, O_B \text{ wins}\}$. We denote the output by $O = (O_A, O_B)$ in this setup.

Our goal is to develop a statistical framework, \texttt{Bridge}, that bridges human and LLM scores, $Y^{h}$ and $Y^{l}$, under two key assumptions: (i) both are influenced by a shared latent score $Z^{h} = f(I, O) \in \reals$ representing human preferences, and (ii) LLM judgments may be systematically different from human judgments due to additional features encoded in a covariate $X = g(I, O) \in \reals^d$. The map $g$ is assumed to be known, and it can, for example, capture properties of the output $O$, such as formatting (\eg, markdown usage), structural attributes (\eg, number of paragraphs), stylistic elements (\eg, sentiment), or text-quality attributes such as creativity and factuality. \texttt{Bridge} enables both the refinement of LLM judges and the analysis of discrepancies between human and LLM evaluations.

\vspace{-.3cm}
\section{The \texttt{Bridge} framework}\label{sec:method}

\vspace{-.2cm}
\subsection{Statistical model and fitting}

\textbf{The model.} We consider discrete judgments $Y^{h}, Y^{l}\in\{0, \cdots, K\}$, where ratings reflect a clear ordinal structure.  For absolute scoring,  these judgments reflect ordered levels of satisfaction or agreement, so that a rating $Y^{h} = k+1$ is preferred over $Y^{h} = k$. For relative scoring,  each level $k$ captures the strength of preference between two responses. For instance, when $K = 2$, a judgment of $Y^{h} = 0$ may indicate that $O_A$ is preferred to $O_B$, $Y^{h} = 1$ denotes no preference (a tie), and $Y^{h} = 2$ indicates preference for $O_B$, giving rise to a Bradley-Terry-type model~\citep{bradley1952rank,rao1967ties}.

To model $Y^{h}$ and $Y^{l}$, we use the ordinal logistic regression (ordered logit) formulation~\citep{wooldridge2010econometric}. For human judgments $Y^{h}$, we assume the model $\Pr(Y^{h}=k \mid I, O) = p_k(\alpha_1, \ldots, \alpha_K, Z^{h})$, where
\[
p_k(\alpha_1, \ldots, \alpha_K, Z^{h}) \triangleq 
\begin{cases}
\sigma(\alpha_1 - Z^{h}), & \text{if } k = 0, \\
1 - \sigma(\alpha_K - Z^{h}), & \text{if } k = K, \\
\sigma(\alpha_{k+1} - Z^{h}) - \sigma(\alpha_k - Z^{h}), & \text{otherwise,}
\end{cases}
\]
$\sigma$ is the standard logistic function (\ie, sigmoid function), $\alpha_{k}<\alpha_{k+1}$ are ordered real cutoffs, and $Z^{h} = f(I,O)$ is a latent factor representing human preferences. The corresponding model for LLM judgments $Y^{l}$ replaces cutoffs $\alpha_k$ with $\eta_k$ and $Z^{h}$ with  $Z^{l}$, assuming $Z^{l} \triangleq \beta Z^{h} + \gamma^\top X$, where $X = g(I, O) \in \reals^d$ captures features associated with deviations between LLM and human evaluations. The function $g$ may represent either engineered features (\eg, text length or sentiment) or learned representations (\eg, neural embeddings of $(I,O)$). This formulation allows us to model systematic differences and refine LLM outputs accordingly.

\begin{figure}
    \centering
    \includegraphics[width=1\linewidth]{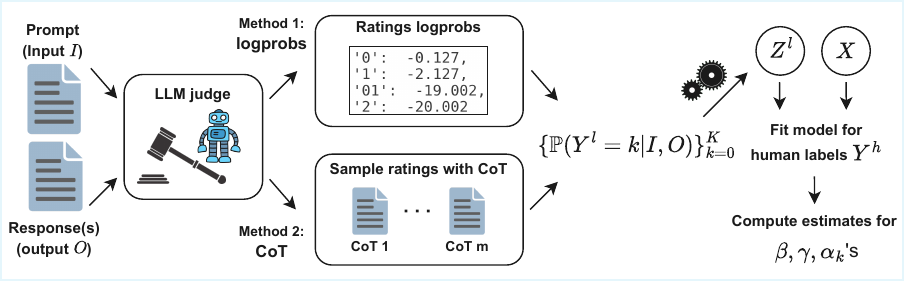}
    \caption{\small The \emph{\textbf{logit trick}} for model fitting. This procedure allows us to fit the statistical model without observing human latent scores $Z^h$. First, the LLM judge rates a pair (prompt, response(s)). Second, we compute/estimate the probability of each score $k$. Third, we process the probabilities, obtaining the LLM scores $Z^l\in \reals$. Finally, we fit an ordinal logistic regression model for human ratings $Y^h$ given $Z^l$ and covariates $X$ to explain the gap between human and LLM scores.} 
    \label{fig:logit-trick}
    \vspace{-.5cm}
\end{figure}

\textbf{Model fitting via the \emph{logit trick}.} Given a pair $(I, O)$, if the score $Z^{l}$, parameters $\beta$, $\gamma$, and cutoffs $\{\alpha_k\}_{k=1}^K, \{\eta_k\}_{k=1}^K$ were known, we could directly analyze the gap between judges and adjust LLM predictions to align with human preferences. However, none of these quantities are observed in principle. Moreover, the human latent scores $Z^{h}$ are unobserved, which makes parameter estimation appear intractable, even assuming access to $Z^{l}$. Nevertheless, when human labels $\{Y^h_i\}_{i=1}^n$ are available for a set of input-output pairs $\{(I_i, O_i)\}_{i=1}^n$, we can leverage a technique we call the \emph{\textbf{logit trick}} to circumvent this issue. For now, we assume $\Pr(Y_i^{l} = k \mid I_i, O_i)$ for all $k \in \{0, \ldots, K\}$ can be computed exactly (or estimated with high precision). Details on how this step is implemented are provided later in this section. The model fitting algorithm via the \emph{logit trick} is detailed in the following box:
\begin{tcolorbox}[enhanced,title= Model fitting via the \emph{logit trick},
        colframe=blue!40!black,
        colback=blue!2!white,
        fonttitle=\bfseries,
      attach boxed title to top text left={xshift=30mm,yshift=-2.5mm},
      boxed title
      style={size=small,colframe=blue!40!black,colback=blue!40!black}]
      \label{box:logit_trick}
  \begin{enumerate}[leftmargin=*,labelsep=5pt]
    \item For each example $(I_i, O_i)$, compute $\Pr(Y_i^{l} = k \mid I_i, O_i)$ for all $k \in \{0, \ldots, K\}$. 
    
    \item Compute $Z_i^{l}$ and cutoffs $\{\eta_k\}_{k=1}^K$ by solving:
    \[
    \left(\{\eta_k\}_{k=1}^K, \{Z^l_i\}_{i=1}^n\right) = \underset{\substack{\bar{\eta}_1 < \cdots < \bar{\eta}_K \in \reals,\\ z_1, \ldots, z_n \in \reals}}{\arg\min} \sum_{i=1}^n \sum_{k=0}^{K} \big|p_k(\bar{\eta}_1, \ldots, \bar{\eta}_K, z_i) - \Pr(Y_i^{l} = k \mid I_i, O_i)\big|.
    \]

    \item Fit the ordinal logistic model to the human labels $\{Y^h_i\}_{i=1}^n$ using maximum likelihood, with $Z_i^{h} = (1/\beta) Z_i^{l} - (1/\beta) \gamma^\top X_i$ as inputs, \ie,
    \[
    \hspace{-.7cm}
    (\{\hat{\alpha}_k\}_{k=1}^K,\hat{\beta},\hat{\gamma}) = \underset{\substack{\alpha_1 < \cdots < \alpha_K,\\ \beta \in \reals,\gamma \in \reals^d}}{\arg\max}\sum_{i=1}^n \sum_{k=0}^K\mathbf1\{Y_i^h=k\}\log p_k\big(\alpha_1, \ldots, \alpha_K, (1/\beta) Z_i^{l} - (1/\beta) \gamma^\top X_i\big).
    \]
\end{enumerate}
\end{tcolorbox}

When the model for $Y^l$ is correctly specified, the optimization in step 2 recovers the true values of $\{\eta_k\}_{k=1}^K$ and $\{Z^l_i\}_{i=1}^n$ up to an additive constant. To ensure identifiability, we fix $\eta_1 = 0$. As a simpler alternative, one may define $Z^l_i = -\sigma^{-1}(\Pr(Y^{l}_i = 0 \mid I_i, O_i))$, but this approach uses only one probability and ignores the full distribution\footnote{When $Y^h,Y^l\in\{0,1\}$, however, these two approaches are equivalent. This observation also clarifies why we call it the \emph{logit trick}: under the binary judgements, $Z^l$ is precisely the logit of $\Pr(Y^{l} = 1 \mid I, O)$.} of $Y^l$, which can be suboptimal under model misspecification. At test time, new values of $Z^l$ can be derived using the estimated thresholds $\{\eta_k\}_{k=1}^K$ and solving an optimization like in step 2. A better option, if the application permits, is computing test points $Z^l$'s jointly with those from training data. After the model is fitted, we can define the predicted human latent scores as $\hat{Z}^{h} \triangleq (1/\hat{\beta}) Z^{l} - (1/\hat{\beta}) \hat{\gamma}^\top X$.

We consider two strategies for computing $\Pr(Y^{l} = k \mid I, O)$. The first is based on log probabilities and return exact values: we identify the tokens associated with each possible outcome $k$, compute their probabilities from the LLM output distribution, and sum them. This method is computationally efficient and provides exact probabilities, but it requires prompting the LLM to output the final score without intermediate reasoning. When reasoning steps are used, these probabilities may become biased\footnote{Once the LLM generates reasoning steps $R$ (a sequence of tokens), its subsequent score is conditioned on $R$, biasing the probabilities toward outcomes that are more compatible with that specific reasoning. To avoid this bias, we marginalize over all possible reasoning paths, \ie, 
$\Pr(Y^{l}=k \mid I,O) = \sum_{r} \Pr(Y^{l}=k \mid I,O,R=r)\Pr(R=r \mid I,O)$,
and estimate this sum via Monte Carlo sampling rather than relying on a single conditional probability $\Pr(Y^{l}=k \mid I,O,R=r)$.}. As an alternative, we employ a chain-of-thought (CoT) prompting strategy, in which the LLM produces reasoning followed by a rating. In this case, we sample $m$ outputs from the LLM and estimate $\Pr(Y^{l} = k \mid I, O)$ via empirical frequencies. In this case, $\Pr(Y^{l} = k \mid I, O)$ is not computed exactly. While this approach is more computationally intensive, it typically yields higher-quality judgements by leveraging the model's reasoning capabilities. In both strategies, we regularize the output probabilities by adding a small constant (e.g., $0.01$) to each $\Pr(Y^{l} = k \mid I, O)$ before renormalizing to ensure they sum to one and avoid degenerate distributions.

Figure~\ref{fig:logit-trick} provides a visual overview of the entire procedure, and Appendix \ref{append:prompts} contains the prompt templates used to collect LLM judgements in both the log probabilities and CoT cases.

\textbf{Model extensions.} We work under the setup in which $Y^{h}$ and $Y^{l}$ are discrete and obey a notion of ordering because it covers the majority of practical use cases, including binary ratings (\ie, $Y^h, Y^l \in \{0, 1\}$). However, it does not account for continuous or unordered categorical ratings. We develop extensions of our model to deal with those cases, and, due to space constraints, we include them in Appendix \ref{append:ext}.

\vspace{-.2cm}
\subsection{Non-exhaustive set of applications}\label{sec:app}

\textbf{Better alignment and calibration.} A key application of our framework arises when practitioners seek to improve the quality of LLM-generated judgments, especially in low-resource settings where human-labeled data is limited due to the high cost of annotation. In such scenarios, fine-tuning LLM judges is often impractical or even impossible when inference APIs are used. We propose using our model to enhance both the alignment and probabilistic calibration of LLM judgments. Alignment between LLM and human judgments is crucial for enabling high-quality evaluations at reduced cost. Calibration is equally important, as well-calibrated models yield more interpretable outputs, facilitating uncertainty quantification. Moreover, well-calibrated scores do not suffer from position bias (relative to humans) by definition, for example.

We quantify alignment by measuring the discrepancy between LLM-inferred rating probabilities and the target distribution $\Pr(Y^{h} = k \mid I, O)$, using human labels and cross-entropy loss. Specifically, we expect that the model-implied probabilities $p_k(\hat{\alpha}_1, \ldots, \hat{\alpha}_K, \hat{Z}^{h})$ more closely approximate $\Pr(Y^{h} = k \mid I, O)$ than the raw LLM outputs $\Pr(Y^{l} = k \mid I, O)$ in terms of the Kullback-Leibler (KL) divergence~\citep{kullback1951information}. Additionally, we also check alignment in terms of accuracy. For calibration, we adopt the class-wise notion~\citep{pavlovic2025understanding}, which requires
\[
\Pr(Y^{h} = k \mid p_k(\hat{\alpha}_1, \ldots, \hat{\alpha}_K, \hat{Z}^{h}) = p) \approx p \quad \text{for all } p \in [0,1], \; k \in \{0, \ldots, K\}.
\]
A property like this is unlikely to hold for raw LLM predictions, but becomes more plausible when LLM scores are corrected using our model, assuming it is reasonably well-specified. Analogous calibration methods are common in classification tasks, such as Platt scaling~\citep{platt1999probabilistic}, where a logistic regression is applied to uncalibrated classifier scores to improve their probabilistic interpretation.

As we demonstrate in our experiments, even under the simplifying assumption $\gamma = 0$, \ie, without using any covariates, our model yields improved LLM judgment predictions at test time, highlighting its practical utility in resource-constrained settings.

\textbf{Human-LLM discrepancies quantification and formal testing.} Another important application is the detection and quantification of human-LLM judgement divergences. To that end, we assume $X$ contains possible sources of differences between LLM and human judgements, and we want to better understand which differences are relevant by analysing $\gamma$. As detailed in Section~\ref{sec:asymptotic}, the estimator $\hat{\gamma}_j$ for the $j$-th entry of $\gamma$ is asymptotically normal, \ie, $(n/\hat{V}_{j+1, j+1})^{1/2}(\hat{\gamma}_j - \gamma_j)$ converges in distribution to $\mathcal{N}(0,1)$ as $n \to \infty$, where $\hat{V}_{j+1, j+1}$ is a variance estimate derived from the data. This result enables formal hypothesis testing\footnote{Here, rejecting $H_0$ means feature $j$ systematically shifts the LLM's judgments relative to humans.}, \eg, testing $H_0: \gamma_j = 0$ versus $H_1: \gamma_j \neq 0$, using the p-value
\[
\text{p-value} = 2\Phi\left(-\sqrt{n/\hat{V}_{j+1, j+1}} \left| \hat{\gamma}_j \right|\right),
\]
where $\Phi$ denotes the distribution function of the standard normal distribution. Additionally, if the practitioner wants to control the false discovery rate (FDR) of human-LLM divergences, multiple hypothesis testing can be carried out in conjunction with the Benjamini-Yekutieli procedure~\citep{benjamini2001control}. Confidence intervals can also be constructed, as discussed in Section~\ref{sec:asymptotic}. 

\vspace{-.3cm}
\subsection{Asymptotic distributions of our estimators}\label{sec:asymptotic}
\vspace{-.2cm}
First, we analyze the properties of estimators for the cutoffs $\{\eta_k\}_{k=1}^K$ and latent scores $\{Z^l_i\}_{i=1}^n$ under the CoT prompting strategy for estimating $p_{ik} = \Pr(Y_i^l = k \mid I_i, O_i)$. Fix $n$ evaluation samples $\{(I_i, O_i)\}_{i=1}^n$. For each \(i\), draw \(m_n\) i.i.d.\ CoT judgements $\{Y^l_{i,m}\}_{m=1}^{m_n}$ and estimate $p_{ik}$ with
$
  \hat p_{ik,m_n} = \sum_{m=1}^{m_n} \mathbf1\{Y^l_{i,m}=k\} / m_n.
$
Denote $\eta=(\eta_1,\dots,\eta_K)$, and define the empirical and population losses as
\[
  Q_{n,m_n}(\eta,z_{1:n})
  = \sum_{i=1}^n\sum_{k=0}^K
    \bigl|p_k(\eta,z_i) - \hat p_{ik,m_n}\bigr|,
\quad
  Q_n(\eta,z_{1:n})
  = \sum_{i=1}^n\sum_{k=0}^K
    \bigl|p_k(\eta,z_i) - p_{ik}\bigr|.
\]
Constrain $\eta, Z^l_{1:n}$ to $\Theta_\eta = \{ \eta \in \mathbb{R}^K : 0=\eta_1 < \cdots < \eta_K \}$ and $\mathcal{Z} = [-M, M]^n$ for some large $M$.

\begin{proposition}[Consistency of CoT estimates $\hat\eta_k$ and $\hat Z_i^l$]
\label{prop:stage1_clt}
Under Conditions \ref{asmp:ident} and \ref{asmp:hessian} (stated in Appendix \ref{append:conds}), the estimator
$
  (\hat\eta,\hat Z^l_{1:n})
  \in\arg\min_{(\eta, z_{1:n})\in\Theta_\eta\times\mathcal Z} Q_{n,m_n}(\eta,z_{1:n})
$
satisfies
$
\sqrt{m_n}[(\hat\eta,\hat Z^l_{1:n})-(\eta^*, Z^{l,*}_{1:n})]
$ converges in distribution to a mean-zero distribution with covariance $\Sigma$ (defined in Appendix \ref{append:conds}) as $m_n\to\infty$.
\end{proposition}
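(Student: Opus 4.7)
I would prove this along the standard template for non-smooth M-estimators: (i) establish consistency of $\hat\theta := (\hat\eta, \hat Z^l_{1:n})$ for $\theta^* := (\eta^*, Z^{l,*}_{1:n})$, (ii) Taylor-linearize at the $\sqrt{m_n}$-rate around $\theta^*$ to reduce the loss to a convex $L^1$ expression, and (iii) pass to the limit via a convex-argmin theorem of Hjort--Pollard / Knight type. Throughout $n$ and $K$ are fixed while $m_n \to \infty$.

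\textbf{Consistency and linearization.} The triangle inequality gives $\sup_\theta |Q_{n, m_n}(\theta) - Q_n(\theta)| \le \sum_{i,k}|\hat p_{ik, m_n} - p_{ik}|$, and the right side is $o_p(1)$ by the LLN on $n(K{+}1)$ bounded Bernoulli averages. Condition \ref{asmp:ident} makes $\theta^*$ the unique minimizer of $Q_n$ with $Q_n(\theta^*) = 0$, so a standard extremum-estimator argument yields $\hat\theta \to \theta^*$ in probability. Having localized, smoothness of the sigmoid gives $p_k(\theta^* + u;\, i) - p_{ik} = J_{i,k}^\top u + O(\|u\|^2)$ with $J_{i,k} := \nabla_\theta p_k(\theta^*; i)$; reparameterizing by $\tilde\delta = \sqrt{m_n}(\theta - \theta^*)$ and $V_{ik, n} := \sqrt{m_n}(\hat p_{ik, m_n} - p_{ik})$ yields, uniformly on compacts in $\tilde\delta$, $G_{n, m_n}(\tilde\delta) := \sqrt{m_n}\, Q_{n, m_n}(\theta^* + \tilde\delta/\sqrt{m_n}) = \sum_{i, k}|J_{i,k}^\top \tilde\delta - V_{ik, n}| + o_p(1)$, since the Taylor remainder contributes $O(1/\sqrt{m_n})$ per term after rescaling.

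\textbf{Passage to the limit.} By the multivariate CLT applied independently across $i$ to the i.i.d.\ CoT draws $\{Y^l_{i,m}\}_{m=1}^{m_n}$, $V_n \Rightarrow V$, a mean-zero Gaussian vector with block-diagonal multinomial covariance. The limit criterion $G(\tilde\delta) := \sum_{i, k}|J_{i, k}^\top \tilde\delta - V_{ik}|$ is convex in $\tilde\delta$ as a sum of absolute values of affine forms, and Condition \ref{asmp:hessian}---a full-rank / positive-definiteness condition on the stacked Jacobian at $\theta^*$---guarantees almost-sure uniqueness of $\arg\min G$. The Hjort--Pollard convexity lemma (equivalently, Geyer's convex-argmax theorem or Knight's epi-convergence argument) then promotes the pointwise-on-compacts convergence of $G_{n, m_n}$ to convergence of their minimizers, giving $\sqrt{m_n}(\hat\theta - \theta^*) \Rightarrow \arg\min_{\tilde\delta} G(\tilde\delta)$. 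Because $V \stackrel{d}{=} -V$ and $\arg\min_{\tilde\delta} \sum |J^\top \tilde\delta + V| = -\arg\min_{\tilde\delta} \sum |J^\top \tilde\delta - V|$, the limit is mean-zero, and its covariance is the $\Sigma$ catalogued in Appendix \ref{append:conds}.

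\textbf{Main obstacle.} The central difficulty is that the $L^1$ criterion is non-smooth precisely where differentiation is needed: near $\theta^*$ the random residuals $p_k(\theta; i) - \hat p_{ik, m_n}$ straddle zero, defeating a classical delta-method / score-Taylor calculation. The workaround---consistency first to localize, then convex-argmin on the linearized loss---hinges on (a) uniformity of the Taylor linearization over $O(1/\sqrt{m_n})$-neighborhoods of $\theta^*$, and (b) almost-sure uniqueness of $\arg\min G$. These are exactly what Conditions \ref{asmp:ident} and \ref{asmp:hessian} are designed to secure; verifying (a) is the most delicate bookkeeping step, since the remainder must be controlled simultaneously across all $n(K{+}1)$ summands and uniformly over a compact set of $\tilde\delta$.
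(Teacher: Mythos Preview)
Your route via the Hjort--Pollard/Knight convex-argmin machinery is genuinely different from the paper's. The paper works at the subgradient level: it writes the first-order condition $0\in\sum_{i,k}\operatorname{sgn}\bigl(p_k(\hat\theta;i)-\hat p_{ik}\bigr)J_{i,k}$, then linearizes $\operatorname{sgn}$ around $\theta^*$ using the \emph{density} of the CoT error $\varepsilon_{ik}=\hat p_{ik}-p_{ik}$ at zero, $f_{ik}(0)\propto\sqrt{m_n}/\sqrt{p_{ik}(1-p_{ik})}$, obtaining a Bahadur-type representation $\sqrt{m_n}(\hat\theta-\theta^*)=-A_n^{-1}S_{n,m_n}+o_p(1)$ with $S_{n,m_n}=\sum_{i,k}\operatorname{sgn}(-\varepsilon_{ik})J_{i,k}$. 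The limit is then read off from $\operatorname{sgn}(-\varepsilon_{ik})\Rightarrow\xi_{ik}\sim\mathrm{Rad}(1/2)$, yielding the closed form $-A_n^{-1}S_n$ whose covariance is exactly the sandwich $\Sigma=A_n^{-1}B_nA_n^{-1}$ recorded in the appendix.

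Your derived limit is a different object: $\arg\min_{\tilde\delta}\sum_{i,k}|J_{i,k}^\top\tilde\delta-V_{ik}|$ with \emph{Gaussian} $V$. This is the nonlinear LAD solution against a continuous Gaussian input, whereas the paper's is a linear map of a discrete Rademacher vector; already in the one-parameter case $K=1$, $n=1$ the former is Gaussian while the latter is supported on finitely many points, so the two are not the same distribution. Your closing sentence that ``its covariance is the $\Sigma$ catalogued in Appendix~\ref{append:conds}'' is therefore unsupported by your argument---to connect with the stated $\Sigma$ you would have to analyze the second moment of your L1 argmin and recover the $A_n^{-1}B_nA_n^{-1}$ form, which your sketch does not attempt. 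A smaller technical point: $Q_{n,m_n}(\theta)=\sum_{i,k}|p_k(\theta;i)-\hat p_{ik}|$ is not convex in $\theta$ because the $p_k$'s are sigmoidal, so the Hjort--Pollard convexity lemma does not apply to the original sequence; you would need to argue $\sqrt{m_n}$-tightness separately and then pass to the convex linearized surrogate, which your sketch elides.
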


When the log probabilities are used to extract $p_{ik}$ exactly from the LLM output, the population loss $Q_n$ is known and $\hat Z_i^l=Z^l_i$ for all $i$.
Next, we derive the asymptotic distribution of \((\hat\beta,\hat\gamma)\) as $n$ tends to infinity, under either log probability or CoT-based estimation of the latent scores. Write the ordered logit model as
    \(
      \Pr(Y_i^h = k \mid I_i,O_i)
      \triangleq l_k(\theta; Z^l_i,X_i)= p_k(\alpha_1, \dots, \alpha_K, (1/\beta)Z_i^l - (1/\beta) \gamma^T X_i ).
    \)
Define the Fisher information matrix $
      \mathcal I(\theta^*)
      = - \bbE\bigl[\nabla^2_{\theta}\log l_{Y_i^h}\bigl(\theta^*;Z_{i}^{l,*},X_i\bigr)\bigr].
    $

\begin{theorem}[Asymptotic normality of $(\hat\beta,\hat\gamma)$]\label{thm:beta_gamma_normality}
Under Conditions \ref{A1}--\ref{A4} (stated in Appendix \ref{append:conds}), let the MLE
$\hat\theta_n=(\hat\alpha_{1},\dots,\hat\alpha_{K}, \hat\beta,\hat\gamma)$
maximize the log-likelihood
\[
\ell_n(\theta;\hat Z^l, X) = \sum_{i=1}^n \sum_{k=0}^K\mathbf1\{Y_i^h=k\}\log l_k(\theta; \hat Z_{i}^l,X_i).
\] 
over $\theta=(\alpha_1,\dots,\alpha_K, \beta, \gamma)$. If the CoT prompting strategy is used to estimate $\Pr(Y^l_i=k\mid I_i,O_i),$ also assume Conditions \ref{asmp:ident} and \ref{asmp:hessian} and let $n/m_n\to0$ as $n\to\infty$.
Then
\[
  \sqrt n\bigl(\hat\theta_n-\theta^*\bigr) \xrightarrow{d} \cN\bigl(0, \cI(\theta^*)^{-1}\bigr)
  \quad\text{and}\quad
  \sqrt n\begin{pmatrix}\hat\beta-\beta^*\\\hat\gamma-\gamma^*\end{pmatrix}
  \xrightarrow{d}\cN\bigl(0, \{\cI(\theta^*)^{-1}\}_{(\beta,\gamma)}\bigr) \ \text{ as } n\to\infty.
\]
\end{theorem}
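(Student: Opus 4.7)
The strategy is the classical smooth M-estimator Taylor expansion, applied to $\hat\theta_n$ with the extra wrinkle that in the CoT regime the nuisance quantity $\hat Z^l$ appears in place of the true $Z^{l,*}$ in the log-likelihood. First I would establish consistency $\hat\theta_n\xrightarrow{p}\theta^*$ via a uniform law of large numbers for $(1/n)\ell_n(\theta;\hat Z^l,X)$, using Conditions \ref{A1}--\ref{A4} (compactness of the parameter space, identifiability of $\theta^*$, smoothness of $l_k$, bounded $X$, and nonsingular $\cI(\theta^*)$). In the CoT case, the perturbation $(1/n)[\ell_n(\theta;\hat Z^l,X)-\ell_n(\theta;Z^{l,*},X)]$ is bounded above by $C\cdot(1/n)\sum_i|\hat Z_i^l-Z_i^{l,*}|=O_p(1/\sqrt{m_n})=o_p(1)$ by Proposition \ref{prop:stage1_clt} and smoothness of $\log l_k$ in its $z$-argument, so consistency reduces to the log-probability case.

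\textbf{Score expansion at $\theta^*$.} Next I would start from $0=\nabla_\theta\ell_n(\hat\theta_n;\hat Z^l,X)$ and Taylor-expand in $\theta$ around $\theta^*$:
\[
0=\nabla_\theta\ell_n(\theta^*;\hat Z^l,X)+\nabla^2_\theta\ell_n(\tilde\theta_n;\hat Z^l,X)(\hat\theta_n-\theta^*).
\]
For the log-probability regime (where $\hat Z^l=Z^{l,*}$), this delivers the conclusion directly: the normalized score $(1/\sqrt n)\nabla_\theta\ell_n(\theta^*;Z^{l,*},X)$ is an i.i.d.\ sum of mean-zero terms with covariance $\cI(\theta^*)$ by the standard score identity, so the CLT gives a $\cN(0,\cI(\theta^*))$ limit; the Hessian $(1/n)\nabla^2_\theta\ell_n(\tilde\theta_n;Z^{l,*},X)$ converges in probability to $-\cI(\theta^*)$ by a uniform LLN on derivatives together with consistency of $\tilde\theta_n$; and inverting yields the stated limit by Slutsky's lemma.

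\textbf{Plug-in error in the CoT case.} The extra work is to swap $\hat Z^l$ for $Z^{l,*}$ in the score and Hessian. A first-order Taylor expansion in the $z$-argument gives
\[
\nabla_\theta\ell_n(\theta^*;\hat Z^l,X)-\nabla_\theta\ell_n(\theta^*;Z^{l,*},X)=\sum_{i=1}^n\nabla_\theta\partial_z\log l_{Y_i^h}(\theta^*;\tilde Z_i^l,X_i)(\hat Z_i^l-Z_i^{l,*}).
\]
Under Conditions \ref{A1}--\ref{A4} the mixed partials are uniformly bounded on a neighborhood of $\theta^*\times[-M,M]$ and over the bounded support of $X$; combined with $\hat Z_i^l-Z_i^{l,*}=O_p(1/\sqrt{m_n})$ from Proposition \ref{prop:stage1_clt} and the independence of the CoT batches across $i$ (and from $\{Y_i^h\}$ given $\{(I_i,O_i)\}$), the right-hand side is $O_p(n/\sqrt{m_n})$. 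Dividing by $\sqrt n$ gives $O_p(\sqrt{n/m_n})=o_p(1)$ under $n/m_n\to0$, so the normalized score has the same weak limit as in the log-probability case. A parallel Taylor expansion of the Hessian, using the same rate and boundedness of the relevant partials, yields $(1/n)\nabla^2_\theta\ell_n(\tilde\theta_n;\hat Z^l,X)\xrightarrow{p}-\cI(\theta^*)$.

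\textbf{Conclusion and main obstacle.} Solving the first-order expansion for $\sqrt n(\hat\theta_n-\theta^*)$ and invoking Slutsky yields $\sqrt n(\hat\theta_n-\theta^*)\xrightarrow{d}\cN(0,\cI(\theta^*)^{-1})$ in both regimes, and the marginal statement for $(\hat\beta,\hat\gamma)$ follows by reading off the corresponding block of $\cI(\theta^*)^{-1}$. The main technical hurdle is the plug-in control in the CoT regime: one must verify that the accumulated first-stage noise does not degrade the $\sqrt n$ rate of the second-stage CLT, which is exactly what the joint scaling $n/m_n\to0$ is designed to ensure. The remaining steps are routine applications of standard smooth MLE asymptotics once Conditions \ref{A1}--\ref{A4} are in force.
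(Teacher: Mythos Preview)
Your proposal is correct and follows essentially the same route as the paper: uniform LLN plus a perturbation bound for consistency, a mean-value expansion of the score at $\theta^*$, control of the plug-in error $\nabla_\theta\ell_n(\theta^*;\hat Z^l,X)-\nabla_\theta\ell_n(\theta^*;Z^{l,*},X)$ via a Taylor expansion in the $z$-argument combined with the $O_p(m_n^{-1/2})$ rate from Proposition~\ref{prop:stage1_clt}, and Slutsky to conclude. The paper in fact proves a slightly more general version (Theorem~\ref{thm:beta_gamma_normality-ext}) allowing $n/m_n\to c\in[0,\infty)$, where the $\sqrt n\Delta_n$ term contributes an additional variance $c\,G\Sigma G^\top$, and then recovers Theorem~\ref{thm:beta_gamma_normality} as the $c=0$ special case; your direct $c=0$ argument is the appropriate specialization.
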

\vspace{-.3cm}
\textbf{Consistent variance estimator and confidence intervals.} To estimate the variance of $(\hat\beta,\hat\gamma) $, calculate the observed Fisher information matrix:  
$
\hat{\mathcal{I}}_{\text{obs}}(\hat\theta_n) = -({1}/{n}) \nabla^2_\theta \ell_n(\hat\theta_n; \hat Z^l, X)
$
at the MLE \(\hat\theta_n = (\hat\alpha_1, \dots, \hat\alpha_K, \hat\beta, \hat\gamma)\), 
where \(\nabla^2_\theta \ell_n\) is the second derivative matrix of \(\ell_n\) with respect to all parameters \(\theta = (\alpha_1, \dots, \alpha_K, \beta, \gamma)\). Let $\hat V =\hat{\cI}_{\text{obs}}(\hat\theta_n)^{-1} $.
Extract the $(\beta,\gamma)$-block $\hat V_{(\beta,\gamma)}$, which is the bottom right $(1+d)\times (1+d)$ block of $\hat V$.
The $100(1-\alpha)\%$ marginal confidence intervals (CIs) for the parameters are
\[
\hat\beta\pm z_{1-\alpha/2}\;\frac{\sqrt{\hat V_{11}}}{\sqrt n},
\quad
\hat\gamma_j\pm z_{1-\alpha/2} \frac{\sqrt{\hat V_{j+1,j+1}}}{\sqrt n},\ j=1,\dots,d.
\]
The joint confidence region is
\(
\bigl\{(\beta,\gamma):\ 
n[(\beta,\gamma)-(\hat\beta,\hat\gamma)] \hat V_{(\beta,\gamma)}^{-1}[(\beta,\gamma)-(\hat\beta,\hat\gamma)]^\top
\le \chi^2_{d+1,1-\alpha}\bigr\}.
\)
Under previous conditions, the variance estimator is consistent, and the coverage probability converges to $1-\alpha$ as $n\to\infty$.
Additionally, Appendix \ref{append:diff_funct_infer} outlines methods for constructing confidence intervals for differentiable functions of $\theta$. These methods can be employed to construct prediction intervals and to evaluate the ``partial effect'' of a covariate, as further detailed in Appendix \ref{append:diff_funct_infer}.

\vspace{-.3cm}
\section{Understanding and narrowing the human-LLM gap in practice}
\vspace{-.2cm}
In this section, we examine the applications described in Section \ref{sec:app} using real-world data and popular LLM judges. We have included extra semi-synthetic (more controlled) and robustness-check experiments in Appendix \ref{append:emp_res}.

We begin by detailing the data used in our experiments.
\subsection{LLM judges, datasets, and LLM judgments collection}
\textbf{LLM judges.} We utilize six distinct LLM judges\footnote{The official model names are \texttt{gpt-4.1-nano}, \texttt{gpt-4.1}, \texttt{gpt-4o-mini-2024-07-18}, \texttt{meta-llama/Llama-3.1-8B-Instruct}, \texttt{AtlaAI/Selene-1-Mini-Llama-3.1-8B}, and \texttt{prometheus-eval/prometheus-8x7b-v2.0}.}: GPT-4.1~\citep{openai_gpt41_2025}, GPT-4.1-nano~\citep{openai_gpt41_2025}, GPT-4o-mini~\citep{hurst2024gpt}, LLaMa-3.1-8B-Instruct~\citep{grattafiori2024llama}, Selene-1-Mini~\citep{alexandru2025atlaseleneminigeneral}, and Prometheus-v2~\citep{kim2024prometheus}. The GPTs and LLaMa-3.1-8B-It represent high-performing, general-purpose models, while Selene-1-Mini and Prometheus-v2 are specialized, state-of-the-art open judges. 

\textbf{Datasets.} Our experiments are conducted using publicly available human-annotated datasets:

\begin{itemize}[leftmargin=*,labelsep=5pt]
\item \textbf{BigGen Bench (BGB)}: BGB~\citep{kim2024biggen} evaluates language model outputs based on detailed rubrics across five satisfaction levels (originally from 1 to 5, converted here to 0 to 4). We focus on the subset containing English-language human annotations, comprising 695 instances across 77 tasks and nine evaluated capabilities (\eg, planning, tool usage). Each instance has responses from four different models, totaling 2780 data points. We exclude a few data points with invalid annotations.

\item \textbf{Chatbot Arena (CA)}: We use the dataset \texttt{arena-human-preference-100k}~\citep{tang2025explorer}, derived from Chatbot Arena~\citep{chiang2024chatbot}. This dataset consists of 100k queries, each responded to simultaneously by two anonymous models, with user preferences annotated as either a clear choice or ``good''/``bad'' ties, where both responses are equilavently good or bad. We randomly select a subset of 5000 queries that are not multi-turn conversations and merge ``good''/``bad'' ties into a single category.

\end{itemize}

\textbf{Judgment collection.} Following Section \ref{sec:method}, we gather LLM judgments via two distinct prompting methods. The first explicitly instructs the judges to provide only a rating without explanation, allowing us to extract log probabilities for each rating directly. The second prompts the judges to elaborate on their reasoning before explicitly stating their final judgment, from which we sample 50 times to estimate rating probabilities. We adapt distinct prompts for absolute and relative ratings: for absolute ratings, we modify reference-free Prometheus prompts from~\citet{kim2024prometheus}; for relative ratings, we adjust AlpacaEval 2.0~\citep{alpaca_eval} (for log probabilities) and ArenaHard prompts\footnote{ArenaHard has 5 levels: response A is much better than B, A is better than B, A and B are equally good, B is better than A, response B is much better than A. Given that we use three levels in our experiments, we compute the level probability estimates, and then convert to three levels by summing the probabilities of edge classes.}~\citep{li2024crowdsourced} (for chain-of-thought reasoning). To ensure quality in our analyses and comparability across judges, we retain only instances where at least 25 valid CoT-generated scores were produced within 1k output tokens by all judges; this filtering removes at most 10\% of instances per dataset. In our main experiments, we use log probabilities for closed models (GPTs) and chain-of-thought (CoT) sampling for other judges. We adopt CoT for open models since it generally yields higher-quality judgments, whereas for closed models, it is prohibitively costly to run, so we rely only on log probabilities. Prompt templates are provided in the Appendix \ref{append:prompts}.

\vspace{-.2cm}
\subsection{Application 1: Improved LLM judgements with few human annotations}\label{sec:exp_app1}

In this application, our goal is to improve the performance of LLM judges using only a small number of human-annotated data points. This scenario is particularly relevant since (i) obtaining high-quality human annotations is costly, and (ii) fine-tuning an LLM judge becomes challenging when labeled examples are limited\footnote{Moreover, \texttt{Bridge} is still applicable in cases where only an inference API is provided.}. In this section, we demonstrate that our method can still provide benefits even without using covariates (\ie, setting $\gamma = 0$), which can be hard to use when the training set is small.
\begin{figure}[t]
    \centering
    % stack the two plots vertically
    \includegraphics[width=\linewidth]{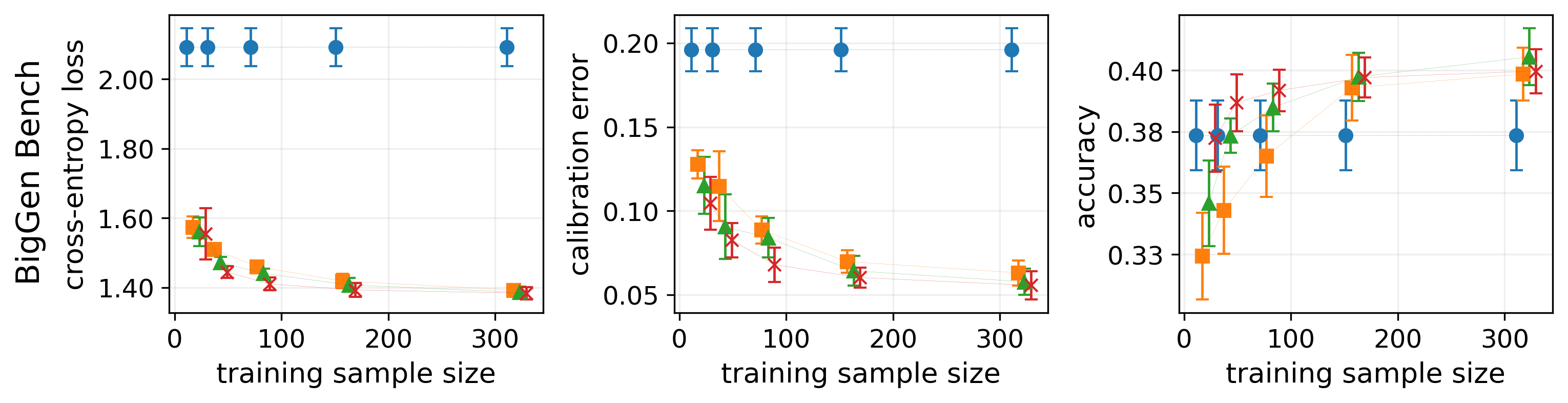}\\
    \includegraphics[width=\linewidth]{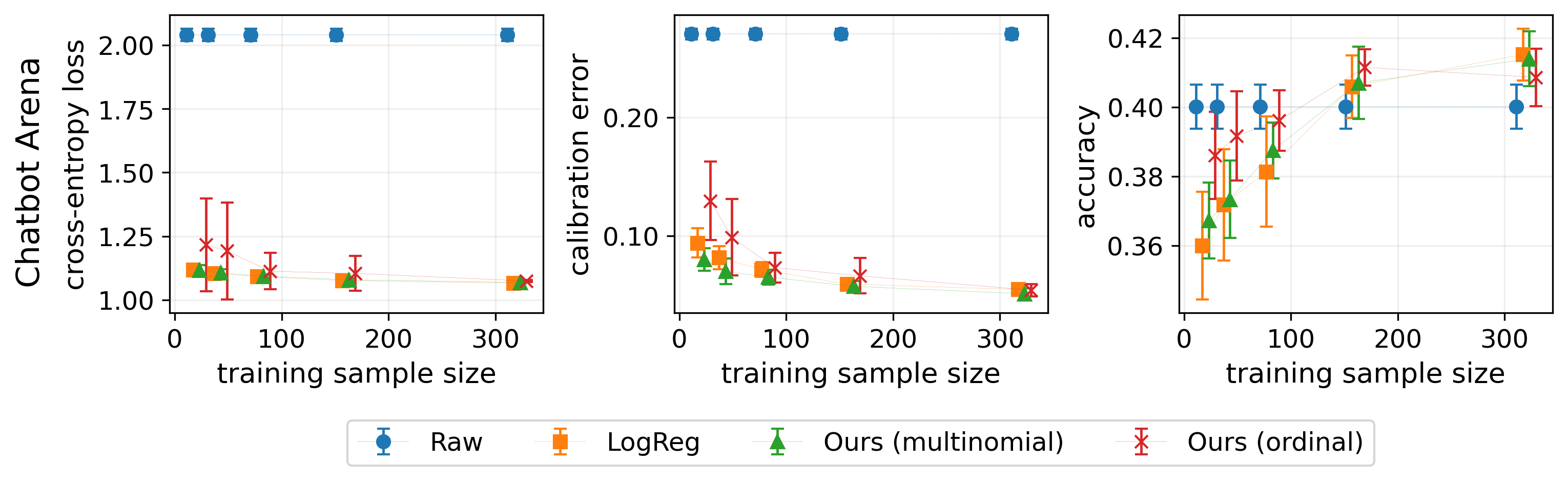}
    \vspace{-.5cm}
    \caption{\small Performance comparison of our proposed methods, logistic-regression baseline, and raw LLM judgments across all datasets. Our methods consistently match or outperform the baselines, notably excelling on BigGen Bench, likely thanks to sensible inductive biases.}
    \label{fig:align_calib}
    \vspace{-.5cm}
\end{figure}

\textbf{Metrics.} To evaluate judge quality, we consider three metrics: (i) cross-entropy loss, comparing the predicted rating probabilities with the actual labels; (ii) calibration error; and (iii) accuracy, which involves selecting the predicted class with the highest probability and comparing it with true labels. The results reported are averages across all judges. To compute the calibration error, we first calculate the error for each class individually and then average them. For class $k$, the steps for computing calibration error are: (i) using a probabilistic classifier (any of the methods reported in Figure \ref{fig:align_calib}), predict the probability of class $k$ for all $n_{te}$ test data points, obtaining $\{\hat{p}_{ki}\}_{i=1}^{n_{te}}$, (ii) discretize $\{\hat{p}_{ki}\}_{i=1}^{n_{te}}$ into 10 bins and, for each bin, compute the difference of the average predicted probability and the relative frequency of class $k$, and (iii) average these differences. To construct the bins, we use equally spaced quantiles of $\{\hat{p}_{ki}\}_{i=1}^{n_{te}}$.

\textbf{Data splitting.} Using a fixed random seed, we split each dataset into training and testing sets with an 80:20 ratio. For Chatbot Arena, we randomly divide data points into training and testing subsets. For BigGen Bench, we ensure instances do not overlap between the training and testing sets, simulating a realistic and challenging scenario in which new, unknown queries appear at test time. For a given sample size $n_{tr}\in\{20,40,80,160,320\}$, we randomly select $n_{tr}$ points from the full set of training queries to fit our models. Across all datasets, we perform this procedure using 10 different random seeds, and the reported results reflect averages and standard deviations across these splits.

\textbf{Methods.} We use two different versions of our method, assuming an ordinal structure of responses (``ordinal'', default model defined in Section \ref{sec:method}) or not  (``multinomial'', Appendix \ref{append:ext}). Regarding baselines, we primarily focus on two: the first baseline (``Raw'') directly utilizes the raw probability outputs from the LLM judges, while the second baseline (``LogReg'') involves fitting a multinomial logistic regression model on top of these raw probabilities to potentially achieve better performance; this last baseline can be seen as a naive version of the ``multinomial'' approach but lacks a principled modeling foundation and is not interpretable. Additionally, we explore providing in-context learning (ICL) examples to the judge as another baseline, reporting results for this experiment in Appendix \ref{append:emp_res}; this last method performs poorly compared to other approaches.

\textbf{Results.} Figure \ref{fig:align_calib} presents the experimental results. Across datasets, the different versions of our method and the logistic regression baseline outperform the raw LLM judgments consistently on all metrics. Notably, our methods never underperform relative to the baselines and significantly excel on the BigGen Bench. This advantage is likely due to the effective inductive biases provided by our models. Interestingly enough, the two different versions of our method perform well. However, the ``ordinal'' (default) method will often be preferable since it is simpler and more interpretable, as it explores the notion of order in the data and has fewer parameters.

\vspace{-.2cm}
\subsection{Application 2: Detecting and testing for human-LLM gaps}\label{sec:exp_app2}

Different from the previous experiment, this analysis incorporates a set of covariates $X$ that represent potential sources of discrepancies in LLM judgments relative to humans. We focus exclusively on covariates derived from the outputs $O$, as they are more direct to collect and interpret. Initially, we consider 47 interpretable covariates, comprising lightweight automated metrics (\eg, word count, sentiment polarity) and features extracted via LLM scoring (\eg, conciseness, fluency, creativity). For relative ratings, we compute differences between covariates from the second and first responses. We then cluster these covariates based on their correlations, substantially reducing their number by $\approx 30\%$ for BigGen Bench and $\approx 20\%$ for Chatbot Arena. Appendix \ref{append:covs} gives a comprehensive description of the used covariates, the clustering algorithm, and the resultant covariate clusters. After clustering, we extract the first principal component from each cluster and standardize the resulting variables to have zero mean and unit variance. Subsequently, we apply our proposed method, calculate p-values, and adjust them using the Benjamini-Yekutieli~\citep{benjamini2001control} procedure\footnote{We use the Python package \emph{statsmodels} for this adjustment: \url{https://www.statsmodels.org/dev/generated/statsmodels.stats.multitest.multipletests.html}} for false discovery rate (FDR) control when conducting multiple tests at the same time.
\begin{table}[t]
\centering
\caption{Human-LLM judgement discrepancies on BigGen Bench}
\label{tab:bgb}
\rowcolors{2}{white}{gray!10}
\resizebox{\textwidth}{!}{
\begin{tabular}{lS[table-format=2.2]S[table-format=2.2]S[table-format=2.2]S[table-format=2.2]S[table-format=2.2]S[table-format=2.2]}
\toprule
 & {GPT-4.1-nano} & {GPT-4.1} & {GPT-4o-mini} & {LLaMa-3.1-8B-It} & {Selene-1-Mini} & {Prometheus-v2} \\
\midrule
Writing Quality & \textbf{-0.38***} & -0.10 & -0.02 & \textbf{-0.22**} & -0.02 & \textbf{-0.22***} \\
Text Length & \textbf{-0.83***} & \textbf{-0.39***} & \textbf{-0.43***} & \textbf{-0.78***} & \textbf{-0.44***} & \textbf{-0.74***} \\
Positive Sentiment & \textbf{-0.31***} & \textbf{-0.12*} & \textbf{-0.15**} & \textbf{-0.22**} & \textbf{-0.18***} & \textbf{-0.21***} \\
Layout Density & -0.23 & \textbf{-0.15*} & -0.11 & -0.21 & -0.13 & -0.17 \\
Causal Markers & \textbf{-0.19**} & -0.09 & -0.10 & -0.12 & -0.08 & -0.07 \\
Structure Counts & \textbf{0.35***} & \textbf{0.16*} & 0.11 & \textbf{0.29**} & 0.12 & \textbf{0.29***} \\
Sentiment & \textbf{0.24**} & 0.10 & 0.11 & 0.11 & 0.09 & 0.10 \\
Code Block & \textbf{0.20**} & 0.07 & 0.09 & \textbf{0.22***} & \textbf{0.14**} & \textbf{0.20***} \\
Character Density & 0.25 & 0.13 & 0.12 & 0.23 & \textbf{0.20**} & 0.13 \\
Compound Sentiment & \textbf{0.27***} & 0.06 & 0.08 & 0.16 & \textbf{0.13**} & \textbf{0.19**} \\
Question Count & 0.16 & 0.09 & \textbf{0.13*} & 0.13 & 0.11 & 0.15 \\
\bottomrule
\multicolumn{7}{l}{\footnotesize Significance: \textbf{\textit{***}} $p<0.01$, \textbf{**} $p<0.05$, \textit{*} $p<0.10$.} \\
\end{tabular}
}
\end{table}

\begin{table}[t]
\centering
\caption{Human-LLM judgement discrepancies on Chatbot Arena}
\label{tab:ca}
\rowcolors{2}{white}{gray!10}
\resizebox{\textwidth}{!}{
\begin{tabular}{lS[table-format=2.2]S[table-format=2.2]S[table-format=2.2]S[table-format=2.2]S[table-format=2.2]S[table-format=2.2]S[table-format=2.2]}
\toprule
 &  {GPT-4.1-nano} & {GPT-4.1} & {GPT-4o-mini} & {LLaMa-3.1-8B-It} & {Selene-1-Mini} & {Prometheus-v2} \\
\midrule
Text Length  & \textbf{-2.05***} & \textbf{-0.54***} & \textbf{-1.02***} & \textbf{-1.61**} & \textbf{-1.17**} & \textbf{-1.20***} \\
Creativity/Engagement  & \textbf{-1.27***} & \textbf{-0.32***} & \textbf{-0.64***} & \textbf{-1.10**} & \textbf{-0.78**} & \textbf{-0.77***} \\
Bold Text & \textbf{0.74**} & \textbf{0.25***} & \textbf{0.50***} & \textbf{0.77**} & \textbf{0.66***} & \textbf{0.62***} \\
\bottomrule
\multicolumn{7}{l}{\footnotesize Significance: \textbf{\textit{***}} $p<0.01$, \textbf{**} $p<0.05$, \textit{*} $p<0.10$.} \\
\end{tabular}
}
\vspace{-.5cm}
\end{table}

\textbf{Results.} Tables \ref{tab:bgb} and \ref{tab:ca} summarize our findings for BigGen Bench and Chatbot Arena, respectively. These tables include only covariates that show a statistically significant contribution for at least one judge; for full tables and unadjusted p-values, see Appendix \ref{append:app2}. The direction of these effects is important: positive values indicate attributes preferred more strongly by LLM judges, whereas negative values indicate attributes preferred by humans. Some insights about the results are:

\begin{itemize}[leftmargin=*,labelsep=5pt]
\item Across datasets and judges, longer responses receive systematically lower scores, showing that LLM judges favor brevity relative to humans. This contrasts with~\citet{dubois2024length}, who argue that length-controlled (assuming LLMs are positively biased towards lengthier responses) scoring aids alignment. In Appendix \ref{append:app2}, we show that GPT-4-Turbo (main judge on AlpacaEval) exhibits the same pattern. The discrepancy with~\citet{dubois2024length} likely stems from methodological differences, our instance-level analysis versus their system-level aggregation, which introduces extra complications to comparability. At the end of the day, working on the instance level is a more direct and reliable way of drawing such conclusions. 
\item Human annotators reward creativity and engaging responses more than LLM judges, a discrepancy most pronounced on Chatbot Arena. This pattern is intuitive: users of that platform often are there to ``play'' with generations, while the LLM judges were never instructed to value creativity. A similar conclusion can be drawn from the ``Positive sentiment'' dimension in BigGen Bench.
\item Bias profiles overlap considerably across LLM judges, suggesting common underlying biases that are inherited from similar training sets and procedures.
\end{itemize}
\begin{wrapfigure}{r}{0.7\textwidth}
\vspace{-.5cm}
    \centering
    \includegraphics[width=0.6\textwidth]{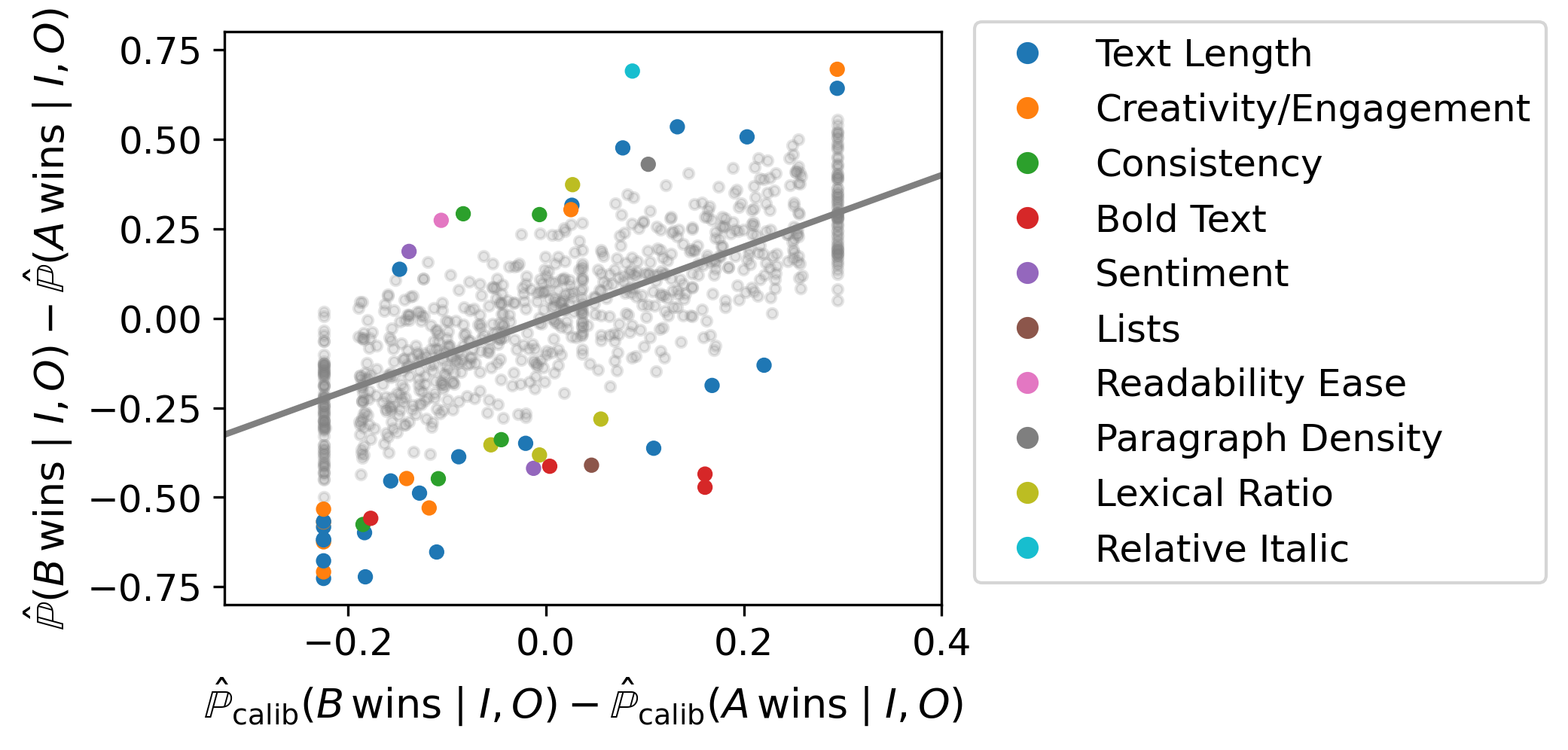}
    \caption{\small Covariates $X$ are important. Dots indicate how adding covariates alters the predicted human preference, with colors marking the most influential.}
    \label{fig:gaps}
\vspace{-.35cm}
\end{wrapfigure}

In Appendix \ref{append:app2}, we conduct extra related analyses. For example, we check how robust our findings are for different sample sizes. We split Chatbot Arena queries into technical and non-technical categories using GPT-4o-mini as a zero-shot classifier. For technical queries, Human–LLM divergences were not statistically significant, suggesting that discrepancies arise mainly in subjective, non-technical content. An important limitation of this analysis is the reduced sample size after splitting, which lowers statistical power.

To quantify the practical impact of the bias covariates, we zoom in on the Chatbot Arena evaluation with the Selene-1-Mini judge. Two variants of our model are fitted: the full specification, denoted $\hat{\Pr}$, which incorporates the gap term $\gamma^\top X$, and a ``calibration'' variant, $\hat{\Pr}_{\text{calib}}$, obtained by setting $\gamma=0$. For every pair of responses $(O_A,O_B)$ and variants of our method, we predict the probabilities that humans prefer $B$ over $A$ and $A$ over $B$ and then take their difference. Figure \ref{fig:gaps} plots these differences, highlighting instances where biasing covariates substantially alter the prediction. Each point is colored by the covariate whose contribution $|\gamma_j X_{ij}|$ is largest in magnitude, thereby revealing the dominant factor driving each discrepancy. The discrepancies have large magnitudes for some data points. As extra results, we place figures in Appendix \ref{append:app2} that show that including covariates in our model can induce improved prediction performance, even though the biggest improvements are obtained without the need to add extra covariates.

\vspace{-.3cm}
\section{Discussion}
\vspace{-.2cm}
We propose \texttt{Bridge}, a unified statistical framework that simultaneously models ratings from both human annotators and LLM judges. The framework couples a statistical model with a specialized estimation procedure, enabling (i) principled calibration/limitations of LLM scores and (ii) a clearer characterization of the divergences between human and LLM evaluations.

\textbf{Limitations.} The chief limitation of our approach is vulnerability to model misspecification. When the assumed data-generating process is inaccurate, owing to unrealistic distributional assumptions or omitted covariates, the resulting parameter estimates must be interpreted cautiously; please check Appendix \ref{append:missp} for a detailed discussion on model misspecification. A second practical challenge is the construction of informative covariates $X$; while users can start with generic, off-the-shelf metrics (e.g., response length, readability grade), domain knowledge could be needed to devise variables that capture salient sources of differences between LLM and human judgments, especially when those also depend on the input $I$ and not only on the output $O$. 

\textbf{The significance of this work in today's LLM-evaluation landscape.} We recognize that as LLM tasks become more sophisticated, achieving a truly reliable ``gold standard'' through human annotation is increasingly difficult. However, \texttt{Bridge} is flexible by design and is not tied to a single definition of the gold standard. Whether the benchmark consists of individual human judgments, a consensus among multiple annotators, or even alternative proxies, \texttt{Bridge} is meant to detect and reduce inconsistencies between whatever reference is chosen and LLM assessments. Moreover, aligning LLMs with human preferences and judgments will continue to be important for many real-world applications, especially where trust, safety, and social acceptance matter. 

\textbf{Observational vs. experimental data.} Our experiments rely exclusively on observational data, in the sense that we do not intervene on $X$. This choice has both strengths and limitations. On the one hand, observational data capture the diversity and unpredictability of real user-LLM interactions, enhancing the external validity of our findings. On the other hand, because the data are not generated under controlled interventions, our estimated parameters should be interpreted as descriptive associations rather than causal effects. Confounding variables and selection biases may also influence these relationships.

\textbf{Future work and extensions.} Promising directions include developing estimation routines robust to model misspecification, extending the framework to automatically infer divergence factors, and leveraging representation learning to construct covariates $X$ on the fly. In principle, \texttt{Bridge} can also incorporate text embeddings as $X$ (provided enough training data and regularization), though we currently see no clear advantage in doing so; nevertheless, this remains an interesting avenue for future exploration. Extending \texttt{Bridge} to open-ended, natural-language evaluations is an important direction for future work and will likely require principled ways to represent free-form judgments as comparable quantities. Moreover, \texttt{Bridge} can also be applied in settings where model outputs span multiple modalities (\eg, image, video, audio). In such cases, however, constructing meaningful covariates $X$ may be more challenging. We encourage adoption and extensions of \texttt{Bridge} in these directions for future work.
\vspace{-.3cm}
\section{Acknowledgements}
\vspace{-.2cm}
This paper is supported by the National Science Foundation (NSF) grants DMS-2027737, DMS-2113373, DMS-2414918, SES-1846747, and a gift from OpenAI.

\newpage
\bibliographystyle{plainnat}
\bibliography{NeurIPS25/FMP}

\appendix

\newpage
\newpage
\section{Prompt templates}\label{append:prompts}
\subsection{BigGen Bench prompts (absolute ratings)}

For BigGen Bench, the system prompt is instance-dependent. Therefore, we do not report them here.

\begin{tcblisting}{
  enhanced,
  breakable,
  listing only,
  colframe=blue!40!black,
  colback=blue!2!white,
  fonttitle=\bfseries,
  title=BigGen Bench (logprobs),
  listing options={
    basicstyle=\ttfamily\small,
    breaklines        = true,
    breakatwhitespace = true,
    breakindent       = 0pt,   % ← no indent for wrapped lines
    breakautoindent   = false, % ← don’t copy code indentation
  },
}
###Task Description:
An instruction (might include an Input inside it), a response to evaluate, and a score rubric representing a evaluation criteria are given.
1. Write a score that is an integer between 1 and 5. You should refer to the score rubric.
2. Your output must be only an integer number between 1 and 5, and nothing else
3. Please do not generate any other opening, closing, and explanations. Do not include any spaces or linebreaks before your judgement."

###The instruction to evaluate:
{instruction}

###Response to evaluate:
{response}

###Score Rubrics:
{rubric}

###Score:
\end{tcblisting}

\begin{tcblisting}{
  enhanced,
  breakable,
  listing only,
  colframe=blue!40!black,
  colback=blue!2!white,
  fonttitle=\bfseries,
  title=BigGen Bench (CoT),
  listing options={
    basicstyle=\ttfamily\small,
    breaklines        = true,
    breakatwhitespace = true,
    breakindent       = 0pt,   % ← no indent for wrapped lines
    breakautoindent   = false, % ← don’t copy code indentation
  },
}
###Task Description:
An instruction (might include an Input inside it), a response to evaluate, and a score rubric representing a evaluation criteria are given.
1. Write a detailed feedback that assess the quality of the response strictly based on the given score rubric, not evaluating in general.
2. After writing a feedback, write a score that is an integer between 1 and 5. You should refer to the score rubric.
3. The output format should look as follows: "(write a feedback for criteria) [RESULT] (an integer number between 1 and 5)"
4. Please do not generate any other opening, closing, and explanations.

###The instruction to evaluate:
{instruction}

###Response to evaluate:
{response}

###Score Rubrics:
{rubric}

###Feedback: 
\end{tcblisting}

\subsection{Chatbot Arena prompts (relative ratings)}
\begin{tcblisting}{
  enhanced,
  breakable,
  listing only,
  colframe=blue!40!black,
  colback=blue!2!white,
  fonttitle=\bfseries,
  title=Chatbot Arena system prompt (logprobs),
  listing options={
    basicstyle=\ttfamily\small,
    breaklines        = true,
    breakatwhitespace = true,
    breakindent       = 0pt,   % ← no indent for wrapped lines
    breakautoindent   = false, % ← don’t copy code indentation
  },
}
You are a highly efficient assistant, who evaluates and rank large language models (LLMs) based on the quality of their responses to given prompts. This process will create a leaderboard reflecting the most accurate and human-preferred answers.
\end{tcblisting}

\begin{tcblisting}{
  enhanced,
  breakable,
  listing only,
  colframe=blue!40!black,
  colback=blue!2!white,
  fonttitle=\bfseries,
  title=Chatbot Arena user prompt (logprobs),
  listing options={
    basicstyle=\ttfamily\small,
    breaklines        = true,
    breakatwhitespace = true,
    breakindent       = 0pt,   % ← no indent for wrapped lines
    breakautoindent   = false, % ← don’t copy code indentation
  },
}
I require a leaderboard for various large language models. I'll provide you with prompts given to these models and their corresponding outputs. Your task is to assess these responses, and select the model that produces the best output from a human perspective. The input prompt can possibly include an image; in that case the user question or instruction will be related to that image and you must take that into account.

## Instruction

{
    "instruction": "{instruction}",
}

## Model Outputs

Here are the unordered outputs from the models. Each output is associated with a specific model, identified by a unique model identifier.

{
    {
        "model_identifier": "A",
        "output": "{output_1}"
    },
    {
        "model_identifier": "B",
        "output": "{output_2}"
    }
}

## Task

Evaluate the models based on the quality and relevance of their outputs, and be prepared for the possibility of a tie. If one model clearly produces the best output, respond with its identifier. However, if the responses are equally good or bad and result in a tie, return C. Your output must contain only one of these identifiers (no quotes, spaces, or new lines): A, B, or C.

## Judgment

Best Model Identifier:
\end{tcblisting}

\begin{tcblisting}{
  enhanced,
  breakable,
  listing only,
  colframe=blue!40!black,
  colback=blue!2!white,
  fonttitle=\bfseries,
  title=Chatbot Arena system prompt (CoT),
  listing options={
    basicstyle=\ttfamily\small,
    breaklines        = true,
    breakatwhitespace = true,
    breakindent       = 0pt,   % ← no indent for wrapped lines
    breakautoindent   = false, % ← don’t copy code indentation
  },
}
Please act as an impartial judge and evaluate the quality of the responses provided by two AI assistants to the user prompt displayed below. The input prompt can possibly include an image; in that case the user question or instruction will be related to that image and you must take that into account. You will be given assistant A's answer and assistant B's answer. Your job is to evaluate which assistant's answer is better.

Begin your evaluation by generating your own answer to the prompt. You must provide your answers before judging any answers.

When evaluating the assistants' answers, compare both assistants' answers with your answer. You must identify and correct any mistakes or inaccurate information.

Then consider if the assistant's answers are helpful, relevant, and concise. Helpful means the answer correctly responds to the prompt or follows the instructions. Note when user prompt has any ambiguity or more than one interpretation, it is more helpful and appropriate to ask for clarifications or more information from the user than providing an answer based on assumptions. Relevant means all parts of the response closely connect or are appropriate to what is being asked. Concise means the response is clear and not verbose or excessive.

Then consider the creativity and novelty of the assistant's answers when needed. Finally, identify any missing important information in the assistants' answers that would be beneficial to include when responding to the user prompt.

After providing your explanation, you must output only one of the following choices as your final verdict with a label:

1. Assistant A is significantly better: [[A>>B]]
2. Assistant A is slightly better: [[A>B]]
3. Tie, relatively the same: [[A=B]]
4. Assistant B is slightly better: [[B>A]]
5. Assistant B is significantly better: [[B>>A]]

Example output: "My final verdict is tie: [[A=B]]".
\end{tcblisting}

\begin{tcblisting}{
  enhanced,
  breakable,
  listing only,
  colframe=blue!40!black,
  colback=blue!2!white,
  fonttitle=\bfseries,
  title=Chatbot Arena user prompt (CoT),
  listing options={
    basicstyle=\ttfamily\small,
    breaklines        = true,
    breakatwhitespace = true,
    breakindent       = 0pt,   % ← no indent for wrapped lines
    breakautoindent   = false, % ← don’t copy code indentation
  },
}
<|User Prompt|>
{instruction}

<|The Start of Assistant A's Answer|>
{output_1}
<|The End of Assistant A's Answer|>

<|The Start of Assistant B's Answer|>
{output_2}
<|The End of Assistant B's Answer|>
\end{tcblisting}

\subsection{Prompts for extracting LLM-scored covariates}
\label{append:prompt_covariate}
The following prompts were used to obtain LLM-scored covariates, adapted from the prompts by \cite{liu2023g}.

\begin{tcblisting}{
  enhanced,
  breakable,
  listing only,
  colframe=blue!40!black,
  colback=blue!2!white,
  fonttitle=\bfseries,
  title=Coherence,
  listing options={
    basicstyle=\ttfamily\small,
    breaklines        = true,
    breakatwhitespace = true,
    breakindent       = 0pt,   % ← no indent for wrapped lines
    breakautoindent   = false, % ← don’t copy code indentation
  },
}
You will be given one LLM response, generated in reply to a human prompt. Note that only the LLM response is provided for evaluation.

Your task is to rate the LLM response on one evaluation metric.

Please make sure you read and understand these instructions carefully. Please keep this document open while reviewing, and refer to it as needed.

Evaluation Criteria:
Coherence (0-5): Assess how well the LLM response is structured and organized.
A highly coherent response (score 5) will present ideas in a clear, logical progression. The text should flow naturally, with each sentence and paragraph connecting logically to build a coherent narrative or argument.
A score of 0 indicates a response that is disorganized, disconnected, or otherwise hard to follow.

Evaluation Steps:
1. Read the LLM response carefully to understand its content and structure.
2. Assess overall structure and flow. Determine whether the response is well-organized and whether the ideas and arguments progress in a logical order.
3. Assign a score for coherence on a scale of 0 to 5, where 0 is the lowest and 5 is the highest based on the Evaluation Criteria.
Provide only a single numeric value (e.g., 0.75) without any additional text.

Response:
{response}
\end{tcblisting}

\begin{tcblisting}{
  enhanced,
  breakable,
  listing only,
  colframe=blue!40!black,
  colback=blue!2!white,
  fonttitle=\bfseries,
  title=Factuality,
  listing options={
    basicstyle=\ttfamily\small,
    breaklines        = true,
    breakatwhitespace = true,
    breakindent       = 0pt,   % ← no indent for wrapped lines
    breakautoindent   = false, % ← don’t copy code indentation
  },
}
You will be given one LLM response, generated in reply to a human prompt. Note that only the LLM response is provided for evaluation.

Your task is to rate the LLM response on one evaluation metric.

Please make sure you read and understand these instructions carefully. Please keep this document open while reviewing, and refer to it as needed.

Evaluation Criteria:
Factuality (0-5): Assess how factually accurate and evidence-based the LLM response is. A highly factual response (score 5) will contain accurate, verifiable information. A score of 0 indicates a response that includes inaccuracies or unsupported claims.

Evaluation Steps:
1. Read the LLM response carefully to identify its content.
2. Check if the response contains information that can be verified as accurate.
3. Assign a score for factuality on a scale of 0 to 5, where 0 is the lowest and 5 is the highest based on the Evaluation Criteria.
Provide only a single numeric value (e.g., 0.75) without any additional text.

Response:
{response}
\end{tcblisting}

\begin{tcblisting}{
  enhanced,
  breakable,
  listing only,
  colframe=blue!40!black,
  colback=blue!2!white,
  fonttitle=\bfseries,
  title=Clarity,
  listing options={
    basicstyle=\ttfamily\small,
    breaklines        = true,
    breakatwhitespace = true,
    breakindent       = 0pt,   % ← no indent for wrapped lines
    breakautoindent   = false, % ← don’t copy code indentation
  },
}
You will be given one LLM response, generated in reply to a human prompt. Note that only the LLM response is provided for evaluation.

Your task is to rate the LLM response on one evaluation metric.

Please make sure you read and understand these instructions carefully. Please keep this document open while reviewing, and refer to it as needed.

Evaluation Criteria:
Clarity (0-5): Assess how clear and understandable the language of the LLM response is. A highly clear response (score 5) will communicate ideas in a straightforward and unambiguous manner. A score of 0 indicates a response that is vague or confusing.

Evaluation Steps:
1. Read the LLM response carefully to understand its content and intent.
2. Evaluate whether the language used is precise and easy to follow.
3. Assign a score for clarity on a scale of 0 to 5, where 0 is the lowest and 5 is the highest based on the Evaluation Criteria.
Provide only a single numeric value (e.g., 0.75) without any additional text.

Response:
{response}
\end{tcblisting}

\begin{tcblisting}{
  enhanced,
  breakable,
  listing only,
  colframe=blue!40!black,
  colback=blue!2!white,
  fonttitle=\bfseries,
  title=Conciseness,
  listing options={
    basicstyle=\ttfamily\small,
    breaklines        = true,
    breakatwhitespace = true,
    breakindent       = 0pt,   % ← no indent for wrapped lines
    breakautoindent   = false, % ← don’t copy code indentation
  },
}
You will be given one LLM response, generated in reply to a human prompt. Note that only the LLM response is provided for evaluation.

Your task is to rate the LLM response on one evaluation metric.

Please make sure you read and understand these instructions carefully. Please keep this document open while reviewing, and refer to it as needed.

Evaluation Criteria:
Conciseness (0-5): Assess how succinct and to-the-point the LLM response is. A highly concise response (score 5) will deliver its message without unnecessary verbosity. A score of 0 indicates a response that is overly wordy or includes redundant details.

Evaluation Steps:
1. Read the LLM response carefully to capture its core ideas.
2. Evaluate whether the response expresses its content in a succinct manner.
3. Assign a score for conciseness on a scale of 0 to 5, where 0 is the lowest and 5 is the highest based on the Evaluation Criteria.
            
Provide only a single numeric value (e.g., 0.75) without any additional text.

Response:
{response}
\end{tcblisting}

\begin{tcblisting}{
  enhanced,
  breakable,
  listing only,
  colframe=blue!40!black,
  colback=blue!2!white,
  fonttitle=\bfseries,
  title=Creativity,
  listing options={
    basicstyle=\ttfamily\small,
    breaklines        = true,
    breakatwhitespace = true,
    breakindent       = 0pt,   % ← no indent for wrapped lines
    breakautoindent   = false, % ← don’t copy code indentation
  },
}
You will be given one LLM response, generated in reply to a human prompt. Note that only the LLM response is provided for evaluation.

Your task is to rate the LLM response on one evaluation metric.

Please make sure you read and understand these instructions carefully. Please keep this document open while reviewing, and refer to it as needed.

Evaluation Criteria:
Creativity (0-5): Assess how original and inventive the LLM response is. A highly creative response (score 5) will present ideas in a unique and engaging way. A score of 0 indicates a response that is unoriginal or formulaic.

Evaluation Steps:
1. Read the LLM response carefully to appreciate its content and style.
2. Evaluate whether the response demonstrates inventive thought and originality in its presentation.
3. Assign a score for creativity on a scale of 0 to 5, where 0 is the lowest and 5 is the highest based on the Evaluation Criteria.
Provide only a single numeric value (e.g., 0.75) without any additional text.

Response:
{response}
\end{tcblisting}

\begin{tcblisting}{
  enhanced,
  breakable,
  listing only,
  colframe=blue!40!black,
  colback=blue!2!white,
  fonttitle=\bfseries,
  title=Consistency,
  listing options={
    basicstyle=\ttfamily\small,
    breaklines        = true,
    breakatwhitespace = true,
    breakindent       = 0pt,   % ← no indent for wrapped lines
    breakautoindent   = false, % ← don’t copy code indentation
  },
}
You will be given one LLM response, generated in reply to a human prompt. Note that only the LLM response is provided for evaluation.

Your task is to rate the LLM response on one evaluation metric.

Please make sure you read and understand these instructions carefully. Please keep this document open while reviewing, and refer to it as needed.

Evaluation Criteria:
Consistency (0-5): Assess how uniform and steady the style and content of the LLM response are. A highly consistent response (score 5) will maintain a uniform approach throughout without contradictions. A score of 0 indicates a response that contains conflicting information or fluctuates in style.

Evaluation Steps:
1. Read the LLM response carefully to understand its content and style.
2. Evaluate whether the response maintains consistency in its presentation.
3. Assign a score for consistency on a scale of 0 to 5, where 0 is the lowest and 5 is the highest based on the Evaluation Criteria.
Provide only a single numeric value (e.g., 0.75) without any additional text.

Response:
{response}
\end{tcblisting}

\begin{tcblisting}{
  enhanced,
  breakable,
  listing only,
  colframe=blue!40!black,
  colback=blue!2!white,
  fonttitle=\bfseries,
  title=Engagement,
  listing options={
    basicstyle=\ttfamily\small,
    breaklines        = true,
    breakatwhitespace = true,
    breakindent       = 0pt,   % ← no indent for wrapped lines
    breakautoindent   = false, % ← don’t copy code indentation
  },
}
You will be given one LLM response, generated in reply to a human prompt. Note that only the LLM response is provided for evaluation.

Your task is to rate the LLM response on one evaluation metric.

Please make sure you read and understand these instructions carefully. Please keep this document open while reviewing, and refer to it as needed.

Evaluation Criteria:
Engagement (0-5): Assess how engaging the LLM response is to a reader. A highly engaging response (score 5) will capture and sustain the reader's attention effectively. A score of 0 indicates a response that is dull or fails to hold interest.

Evaluation Steps:
1. Read the LLM response carefully to understand its content and appeal.
2. Evaluate whether the response is able to maintain a reader's interest throughout.
3. Assign a score for engagement on a scale of 0 to 5, where 0 is the lowest and 5 is the highest based on the Evaluation Criteria.
Provide only a single numeric value (e.g., 0.75) without any additional text.

Response:
{response}
\end{tcblisting}

\begin{tcblisting}{
  enhanced,
  breakable,
  listing only,
  colframe=blue!40!black,
  colback=blue!2!white,
  fonttitle=\bfseries,
  title=Fluency,
  listing options={
    basicstyle=\ttfamily\small,
    breaklines        = true,
    breakatwhitespace = true,
    breakindent       = 0pt,   % ← no indent for wrapped lines
    breakautoindent   = false, % ← don’t copy code indentation
  },
}
You will be given one LLM response, generated in reply to a human prompt. Note that only the LLM response is provided for evaluation.

Your task is to rate the LLM response on one evaluation metric.

Please make sure you read and understand these instructions carefully. Please keep this document open while reviewing, and refer to it as needed.

Evaluation Criteria:
Fluency (0-5): Assess how smoothly and naturally the LLM response reads. A highly fluent response (score 5) will have a natural flow and is easy to read and follow. A score of 0 indicates a response that is choppy or awkward in its language, or is hard to understand.

Evaluation Steps:
1. Read the LLM response carefully to understand its content and structure.
2. Evaluate whether the response exhibits a smooth and natural flow of language.
3. Assign a score for fluency on a scale of 0 to 5, where 0 is the lowest and 5 is the highest based on the Evaluation Criteria.
Provide only a single numeric value (e.g., 0.75) without any additional text.

Response:
{response}
\end{tcblisting}

\begin{tcblisting}{
  enhanced,
  breakable,
  listing only,
  colframe=blue!40!black,
  colback=blue!2!white,
  fonttitle=\bfseries,
  title=Appropriateness,
  listing options={
    basicstyle=\ttfamily\small,
    breaklines        = true,
    breakatwhitespace = true,
    breakindent       = 0pt,   % ← no indent for wrapped lines
    breakautoindent   = false, % ← don’t copy code indentation
  },
}
You will be given one LLM response, generated in reply to a human prompt. Note that only the LLM response is provided for evaluation.

Your task is to rate the LLM response on one evaluation metric.

Please make sure you read and understand these instructions carefully. Please keep this document open while reviewing, and refer to it as needed.

Evaluation Criteria:
Appropriateness (0-5): Assess whether the tone and style of the LLM response are suitable for the intended content. An appropriate response (score 5) will use language and tone that fits the content and purpose. A score of 0 indicates a response that is mismatched in tone or style for the given context.

Evaluation Steps:
1. Read the LLM response carefully to understand its content.
2. Evaluate whether the tone and style are appropriate for the intended content and context.
3. Assign a score for appropriateness on a scale of 0 to 5, where 0 is the lowest and 5 is the highest based on the Evaluation Criteria.
Provide only a single numeric value (e.g., 0.75) without any additional text.

Response:
{response}
\end{tcblisting}

\begin{tcblisting}{
  enhanced,
  breakable,
  listing only,
  colframe=blue!40!black,
  colback=blue!2!white,
  fonttitle=\bfseries,
  title=Sentiment,
  listing options={
    basicstyle=\ttfamily\small,
    breaklines        = true,
    breakatwhitespace = true,
    breakindent       = 0pt,   % ← no indent for wrapped lines
    breakautoindent   = false, % ← don’t copy code indentation
  },
}
You will be given one LLM response, generated in reply to a human prompt. Note that only the LLM response is provided for evaluation.

Your task is to rate the LLM response on one evaluation metric.

Please make sure you read and understand these instructions carefully. Please keep this document open while reviewing, and refer to it as needed.

Evaluation Criteria:
Sentiment (0-5): Assess the overall emotional tone of the LLM response. A response with a highly positive sentiment (score 5) will convey optimism and positive emotion, while a score of 0 indicates a negative sentiment that conveys negative emotion.

Evaluation Steps:
1. Read the LLM response carefully to understand its emotional undertone.
2. Evaluate whether the response expresses a positive emotional tone.
3. Assign a score for sentiment on a scale of 0 to 5, where 0 is the lowest (not positive) and 5 is the highest (positive) based on the Evaluation Criteria.
Provide only a single numeric value (e.g., 0.75) without any additional text.

Response:
{response}
\end{tcblisting}

\newpage
\section{Additional empirical results}\label{append:emp_res}

\subsection{Probability Distribution Reconstruction Loss via Logit Trick}

Figures \ref{fig:rec_bgb} and \ref{fig:rec_ca} report the values of
\[
     \underset{\substack{\bar{\eta}_1 < \cdots < \bar{\eta}_K \in \reals,\\ z_1, \ldots, z_n \in \reals}}{\min} \frac{1}{nK}\sum_{i=1}^n \sum_{k=0}^{K} \big|p_k(\bar{\eta}_1, \ldots, \bar{\eta}_K, z_i) - \Pr(Y_i^{l} = k \mid I_i, O_i)\big|,
\]
which is used in the logit trick. This metric provides a measure of how well the ordinal assumption captures LLM judgments. From the figures, we observe that the average deviation is below $0.04$ for all judges. Deviations are typically slightly higher for judges whose judgment probabilities we estimate, reflecting finite-sample approximations.

\begin{figure}[H]
\centering
\includegraphics[width=0.95\linewidth]{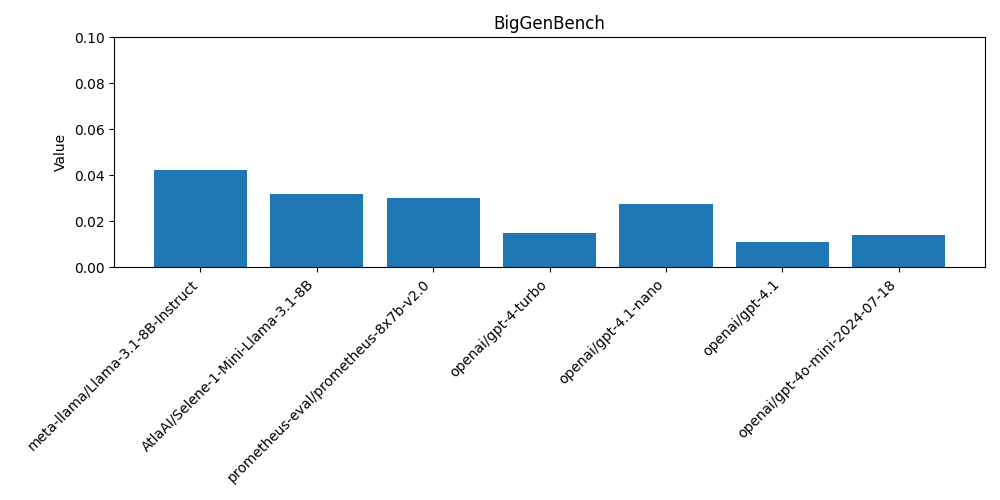}
\caption{Reconstruction loss for BigGenBench}
\label{fig:rec_bgb}
\end{figure}

\begin{figure}[H]
\centering
\includegraphics[width=0.95\linewidth]{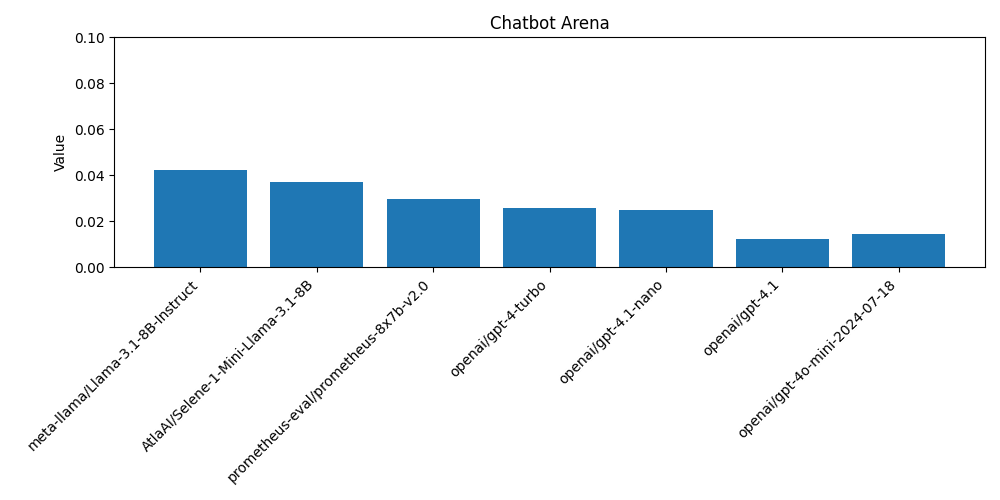}
\caption{Reconstruction loss for Chatbot Arena}
\label{fig:rec_ca}
\end{figure}

\subsection{Controlled experiments to show the effectiveness of the framework}

We have conducted two additional controlled experiments to further validate our method, both of which use semi-synthetic setups that are more realistic than purely artificial data.

\textbf{First experiment:} We use GPT-4o-mini to simulate human ratings on BigGenBench queries. Next, we run GPT-4o-mini again, this time artificially biasing its latent scores $Z^l$ to \emph{disfavor} specific markdown features; namely, bold/italicized words, headers, and lists. For each markdown feature (corresponding to the covariates $X_j$, $j=1,2,3$), we bias $Z^l$ by subtracting $X_j$ (one at a time). We then estimate $\gamma = (\gamma_1, \gamma_2, \gamma_3)$ with our method. In this controlled setting, biasing toward feature $j$ should result in $\gamma_j = -1$ while $\gamma_i = 0$ for $i \neq j$. Standard errors are shown in parentheses.

\begin{table}[H]
\centering
\caption{Estimated bias parameters $\gamma$ (with standard errors) in the tightly controlled experiment.}
\label{tab:controlled}
\rowcolors{2}{white}{gray!10}
\begin{tabular}{lccc}
\toprule
\textbf{Setting} & $\mathbf{\gamma_1}$ \textbf{(SE)} & $\mathbf{\gamma_2}$ \textbf{(SE)} & $\mathbf{\gamma_3}$ \textbf{(SE)} \\
\midrule
no bias     & -0.26 (0.21) & -0.36 (0.35) & -0.17 (0.16) \\
bold/italic & -1.26 (0.21) & -0.36 (0.35) & -0.17 (0.16) \\
headers     & -0.26 (0.21) & -1.36 (0.35) & -0.17 (0.16) \\
lists       & -0.26 (0.21) & -0.36 (0.35) & -1.17 (0.16) \\
\bottomrule
\end{tabular}
\end{table}
    
\textbf{Second experiment:} We conduct a slightly less controlled experiment by prompting GPT-4o-mini to give lower scores to responses containing each markdown feature, one at a time. Since this manipulation is done through prompting rather than direct latent score adjustment, the resulting biases are not as clean; for example, the LLM tends to be consistently biased against lists. Still, the results largely follow the expected direction.

\begin{table}[H]
\centering
\caption{Estimated bias parameters $\gamma$ (with standard errors) in the prompt-based experiment.}
\label{tab:prompt}
\rowcolors{2}{white}{gray!10}
\begin{tabular}{lccc}
\toprule
\textbf{Setting} & $\mathbf{\gamma_1}$ \textbf{(SE)} & $\mathbf{\gamma_2}$ \textbf{(SE)} & $\mathbf{\gamma_3}$ \textbf{(SE)} \\
\midrule
no bias     & -0.255 (0.208) & -0.362 (0.345) & -0.167 (0.161) \\
bold/italic & -1.992 (0.290) &  0.466 (0.486) & -1.583 (0.234) \\
headers     & -0.069 (0.427) & -1.014 (0.713) & -3.061 (0.352) \\
lists       & -0.172 (0.319) & -0.229 (0.536) & -5.660 (0.260) \\
\bottomrule
\end{tabular}
\end{table}

These experiments demonstrate that our framework can recover the direction and magnitude of induced discrepancies/biases, both in tightly controlled and more realistic, prompt-based scenarios.

\subsection{Robustness against model misspecification}\label{append:missp}

When our model is used for prediction, misspecification is not a significant concern; much of machine learning relies on models that are not exactly correct. If our focus is on statistical inference, the model can still be valuable for uncovering discrepancies in LLM judgments, provided the misspecification is not too severe. Moreover, the linear predictor we use is quite flexible, as we can include any basis functions of $X$ as covariates (and then capture nonlinear relationships).

Empirically, we present below a simple simulation in which we introduce a mild nonlinearity into the LLM's latent score generation to test robustness to misspecification, both for prediction and inference. We draw $Z^h_i$ from a Normal distribution $\mathcal{N}(0, 1)$ and sample $Y^h_i$ from $\mathrm{Categorical}(p(\alpha, Z^h_i))$. We then set
$$
Z^{l}_i = \beta Z^h_i + \gamma^\top X_i + \delta (\gamma^\top X_i)^2,
$$
where $X_i$ is drawn from a multivariate Normal $\mathcal{N}(0, I_3)$ distribution and $\delta$ takes values in $\{0, 0.1, 0.25, 0.5, 1, 5\}$, controlling the degree of quadratic distortion. We set $\beta = 1$, $\gamma = (1, 1, 1)$, and $\alpha = (-1, 1)$. LLM judgments $Y^l_i$ are sampled from the usual ordered-logit link $p(\eta, Z^l_i)$, and we fit our original linear model, assuming $Z^l_i = \beta Z^h_i + \gamma^\top X_i$. By comparing the estimated parameters $(\hat\beta, \hat\gamma, \hat{Z}^h, \mathbb{P}(Y^h = k \mid I, O))$ to their true values in terms of mean absolute error (MAE) as $\delta$ increases, we directly measure the impact of model misspecification. The results below (Table \ref{tab:missp}) show that we can still recover $\gamma$ (then at least we know approximately how big are the main effects; main channel of discrepancies) and predict $\mathbb{P}(Y^h = k \mid I, O)$ with high accuracy, even under moderate misspecification. We will add this experiment to the paper. Interestingly, the most affected results are the ones for $\beta$ and $Z^h$, which is of less interest.

\begin{table}[H]
\centering
\caption{Mean absolute error (MAE) of parameter estimations.}
\label{tab:missp}
\rowcolors{2}{white}{gray!10}
\begin{tabular}{lcccc}
\toprule
\textbf{$\delta$} & \textbf{$\beta$} & \textbf{$\gamma$} & \textbf{$Z^h$} & \textbf{$\mathbb{P}(Y^h = k \mid I, O)$} \\
\midrule
0    & 0.010  & 0.014  & 0.014  & 0.002 \\
0.1  & 0.024  & 0.014  & 0.072  & 0.010 \\
0.25 & 0.047  & 0.016  & 0.169  & 0.025 \\
0.5  & 0.266  & 0.017  & 0.340  & 0.045 \\
1    & 0.960  & 0.011  & 0.563  & 0.077 \\
5    & 24.958 & 0.345  & 0.789  & 0.117 \\
\bottomrule
\end{tabular}
\end{table}

\subsection{For Application 2, how do results vary based on the amount of training data?}

To assess the robustness of our method in detecting human-LLM gaps, we repeat our analysis using varying random fractions of the available data. We set a significance threshold of 10\% (i.e., $p$-values below 0.10 are considered significant) and, for each LLM, treat the detection of nonzero $\gamma_j$ coefficients as a binary classification problem (reject vs.\ not reject the null hypothesis). The ``ground truth'' label is determined using the full dataset. For each data fraction, we evaluate how well the method predicts these significance decisions by reporting precision, recall, and accuracy. Our results (see tables below) show that, for both BigGenBench and Chatbot Arena, strong precision and accuracy can be achieved with as little as 50\% of the data. Recall is more challenging to improve, indicating that some discrepancies may require more data to detect reliably. For this experiment, we do not correct the p-values using the B-Y procedure.

\begin{table}[H]
\centering
\caption{Significance prediction performance for BigGenBench as a function of training data fraction.}
\label{tab:signif_biggenbench}
\rowcolors{2}{white}{gray!10}
\begin{tabular}{lccc}
\toprule
\textbf{\% Data} & \textbf{Precision} & \textbf{Recall} & \textbf{Accuracy} \\
\midrule
10   & 0.67 & 0.14 & 0.56 \\
25   & 0.63 & 0.23 & 0.57 \\
50   & 0.90 & 0.60 & 0.78 \\
75   & 0.90 & 0.79 & 0.86 \\
100  & 1.00 & 1.00 & 1.00 \\
\bottomrule
\end{tabular}
\end{table}

\begin{table}[H]
\centering
\caption{Significance prediction performance for Chatbot Arena as a function of training data fraction.}
\label{tab:signif_chatbotarena}
\rowcolors{2}{white}{gray!10}
\begin{tabular}{lccc}
\toprule
\textbf{\% Data} & \textbf{Precision} & \textbf{Recall} & \textbf{Accuracy} \\
\midrule
10   & 0.33 & 0.08 & 0.75 \\
25   & 0.92 & 0.38 & 0.85 \\
50   & 0.61 & 0.46 & 0.80 \\
75   & 0.93 & 0.69 & 0.92 \\
100  & 1.00 & 1.00 & 1.00 \\
\bottomrule
\end{tabular}
\end{table}

Overall, these results suggest that our method for detecting human-LLM discrepancies is quite robust, with high precision and accuracy even when only half of the data is used. Recall improves with larger data fractions, highlighting the benefit of more data for sensitivity to weaker effects.

\subsection{Application 1}

\subsubsection{ICL baseline}\label{append:icl}

We conduct an additional experiment where we provide in-context learning (ICL) examples to the judge, using the same training samples employed by our method, to assess whether this strategy improves performance. Given the high token count, context length constraints, and associated computational costs, we limit this experiment to a single random split from Section \ref{sec:exp_app1}, use only GPT4o-mini, and cap the number of training examples at 80. Figure \ref{fig:icl} compares ICL with raw LLM scores, the logistic regression baseline, and our default ``ordinal'' method. While ICL yields slight improvements, its benefits remain marginal relative to our approach.

\begin{figure}[H]
    \centering
    \includegraphics[width=1\linewidth]{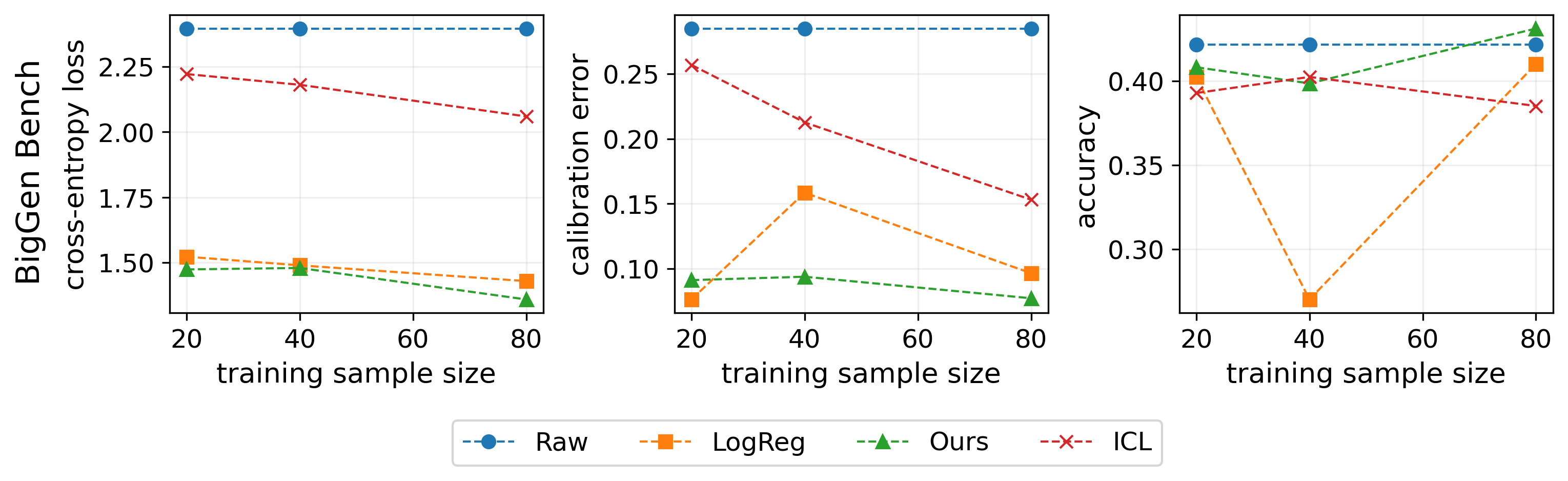}
    \caption{In-context-learning judge provides only marginal gains}
    \label{fig:icl}
\end{figure}

\newpage
\subsection{Application 2}\label{append:app2}

\subsubsection{Performance gains}\label{sec:perf}

In this section, we show that including covariates in the model can lead to some performance gains if the objective is human alignment. In the next figures, we compare raw LLM judgements (``raw'') with the application of our method using covariates (``covs'') or not (``calib''). For both BGB and CA, we have gains in terms of cross-entropy loss (Figure \ref{fig:ce_loss}), giving hints of better human-aligned judgments, while the gains in accuracy are only more pronounced in CA judgments (Figure \ref{fig:acc}).

%\begin{figure}[H]
%    \centering
%    \includegraphics[width=1\linewidth]{NeurIPS25/figs/densities_ca.png}
%    \caption{Enter Caption}
%    \label{fig:enter-label}
%\end{figure}

\begin{figure}[H]
    \centering
    % stack the two plots vertically
    \includegraphics[width=.95\linewidth]{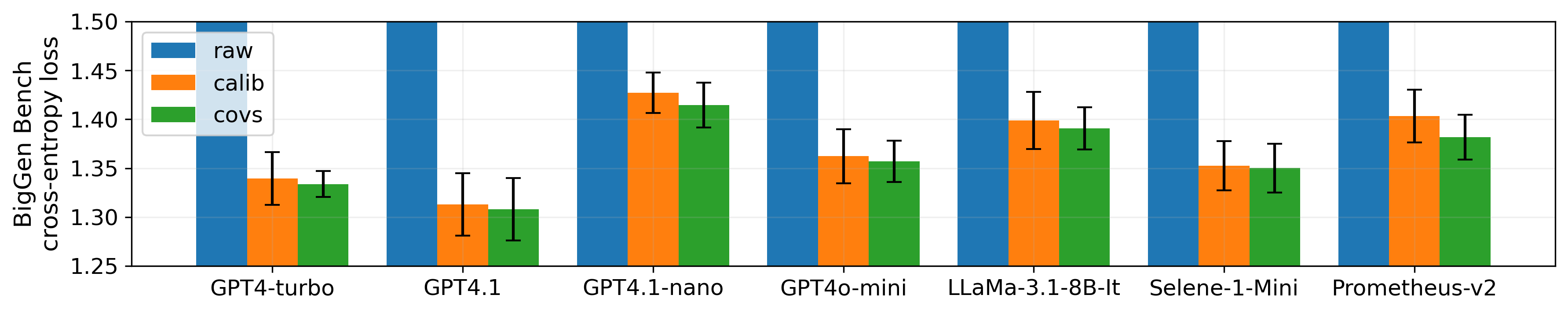} \\
    \includegraphics[width=.95\linewidth]{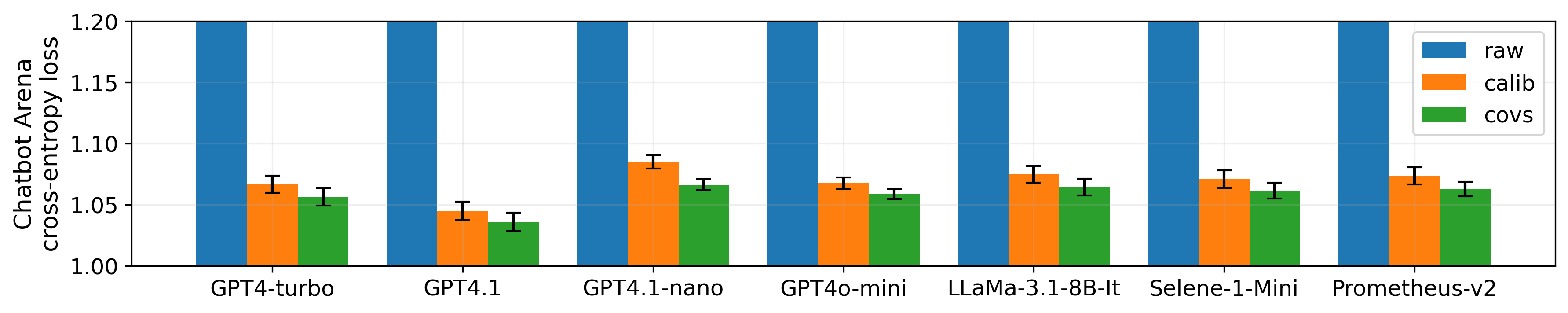}
    \caption{\small Performance in terms of cross-entropy loss.}
    \label{fig:ce_loss}
    \vspace{-0.75cm}
\end{figure}

\begin{figure}[H]
    \centering
    % stack the two plots vertically
    \includegraphics[width=.95\linewidth]{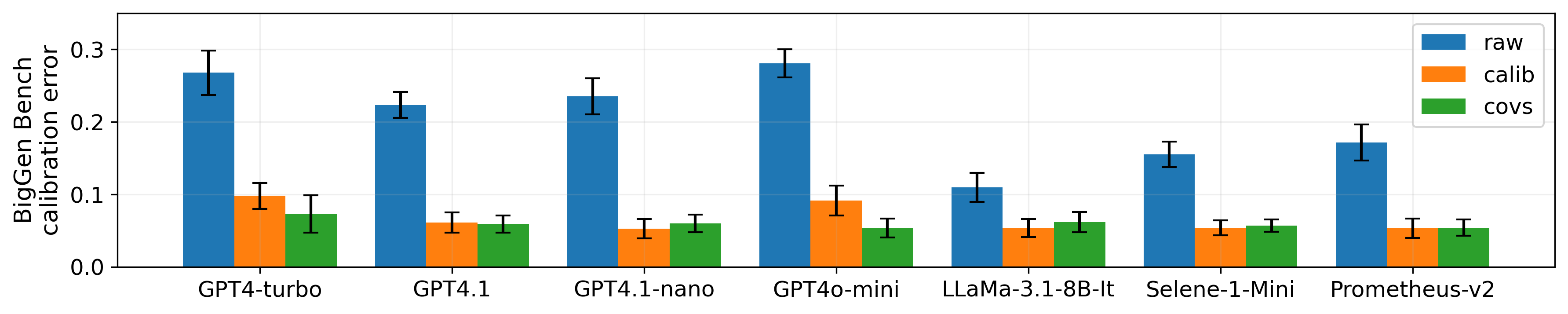} \\
    \includegraphics[width=.95\linewidth]{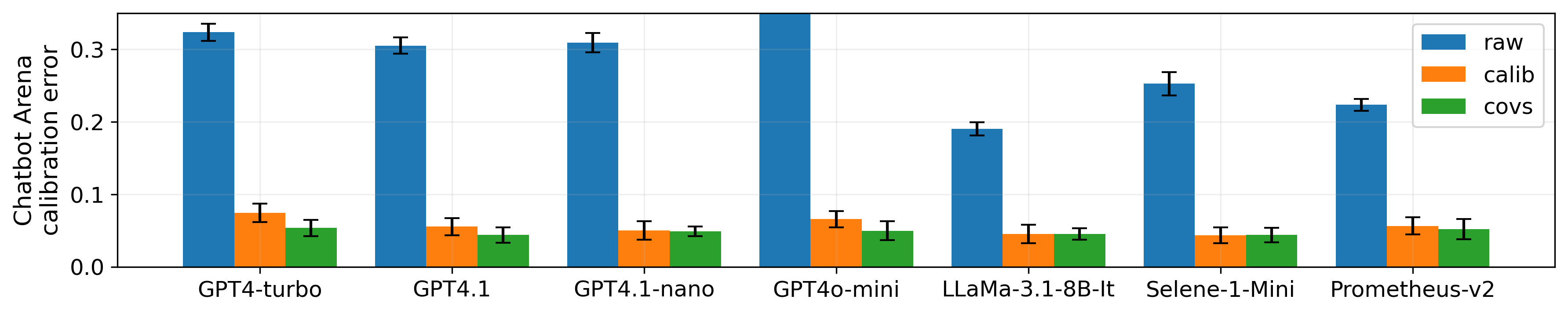}
    \caption{\small Performance in terms of probabilistic calibration.}
    \label{fig:probabilistic_calibration}
    \vspace{-0.75cm}
\end{figure}

\begin{figure}[H]
    \centering
    % stack the two plots vertically
    \includegraphics[width=.95\linewidth]{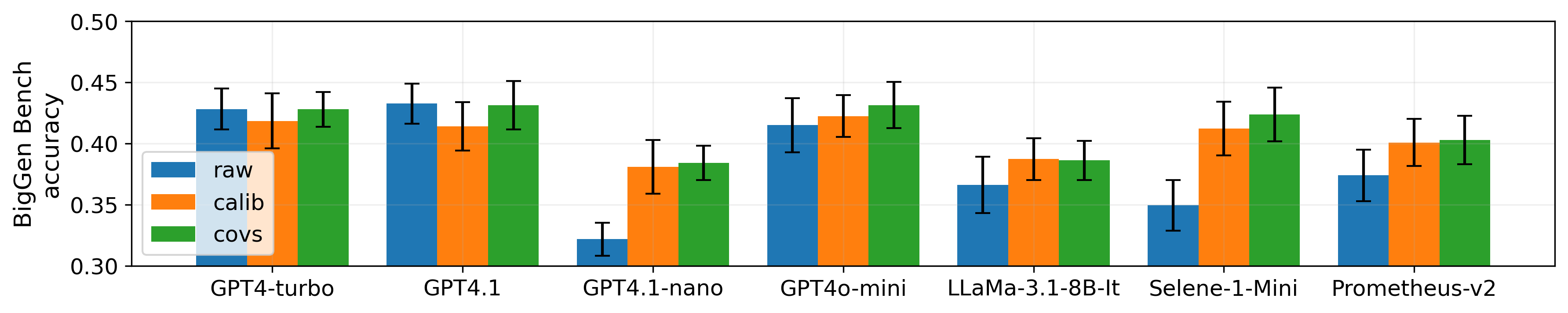} \\
    \includegraphics[width=.95\linewidth]{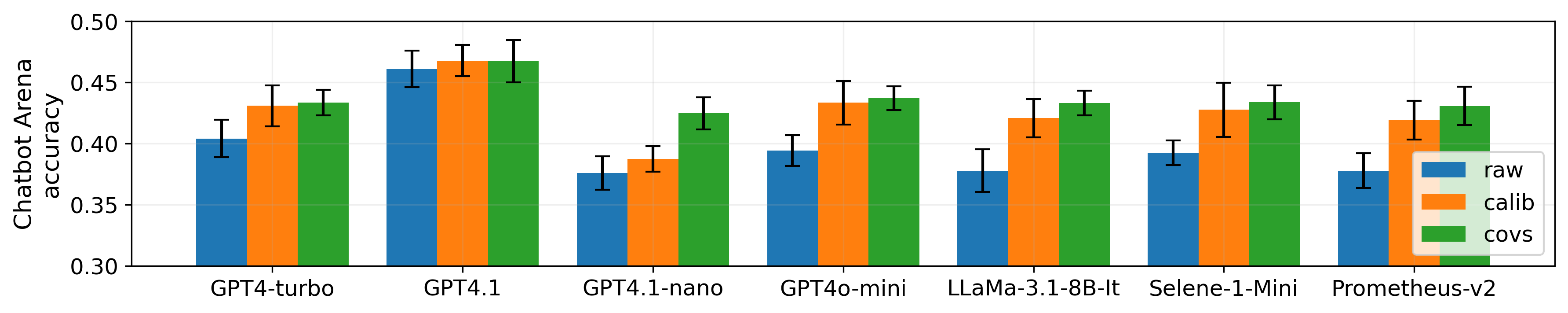}
    \caption{\small Performance in terms of accuracy.}
    \label{fig:acc}
    \vspace{-0.75cm}
\end{figure}

\newpage
\subsubsection{Tables BigGen Bench}

\begin{table}[H]
\centering
\caption{Human-LLM judgement discrepancies on BigGen Bench (with \textit{no} Benjamini-Yekutieli correction)}
\label{tab:biggen-no-BYcorr}
\rowcolors{2}{white}{gray!10}
\resizebox{\textwidth}{!}{
\begin{tabular}{lS[table-format=2.2]S[table-format=2.2]S[table-format=2.2]S[table-format=2.2]S[table-format=2.2]S[table-format=2.2]S[table-format=2.2]}
\toprule
 & {GPT4-turbo} & {GPT4.1-nano} & {GPT4.1} & {GPT4o-mini} & {LLaMa-3.1-8B-It} & {Selene-1-Mini} & {Prometheus-v2} \\
\midrule
Writing Quality & \textbf{-0.07*} & \textbf{-0.38***} & \textbf{-0.10***} & -0.02 & \textbf{-0.22***} & -0.02 & \textbf{-0.22***} \\
Text Length & \textbf{-0.49***} & \textbf{-0.83***} & \textbf{-0.39***} & \textbf{-0.43***} & \textbf{-0.78***} & \textbf{-0.44***} & \textbf{-0.74***} \\
Italics & 0.04 & 0.06 & 0.05 & 0.04 & \textbf{0.08*} & 0.05 & 0.06 \\
Bold Text & -0.05 & -0.04 & 0.01 & -0.04 & -0.08 & -0.04 & -0.06 \\
Lists & 0.07 & 0.07 & 0.04 & \textbf{0.08*} & 0.09 & \textbf{0.11**} & 0.05 \\
Headers & 0.03 & 0.02 & 0.03 & 0.01 & -0.01 & 0.01 & 0.03 \\
Creativity/Engagement & 0.07 & 0.04 & 0.05 & \textbf{0.09**} & 0.06 & \textbf{0.09*} & 0.02 \\
Positive Sentiment & \textbf{-0.18***} & \textbf{-0.31***} & \textbf{-0.12***} & \textbf{-0.15***} & \textbf{-0.22***} & \textbf{-0.18***} & \textbf{-0.21***} \\
Conciseness & -0.07 & -0.09 & \textbf{-0.07*} & -0.02 & -0.07 & -0.04 & -0.02 \\
Contrast Markers & 0.01 & -0.00 & 0.02 & -0.01 & 0.02 & -0.01 & -0.01 \\
Layout Density & \textbf{-0.10*} & \textbf{-0.23**} & \textbf{-0.15***} & \textbf{-0.11*} & \textbf{-0.21**} & \textbf{-0.13**} & \textbf{-0.17**} \\
Causal Markers & \textbf{-0.13***} & \textbf{-0.19***} & \textbf{-0.09***} & \textbf{-0.10***} & \textbf{-0.12**} & \textbf{-0.08**} & -0.07 \\
Structure Counts & \textbf{0.15**} & \textbf{0.35***} & \textbf{0.16***} & \textbf{0.11*} & \textbf{0.29***} & \textbf{0.12**} & \textbf{0.29***} \\
Sentiment & \textbf{0.12***} & \textbf{0.24***} & \textbf{0.10**} & \textbf{0.11**} & 0.11 & \textbf{0.09*} & 0.10 \\
Readability Grade & 0.03 & 0.24 & 0.13 & -0.01 & \textbf{0.27*} & 0.07 & \textbf{0.27*} \\
Exclamation Density & -0.05 & \textbf{-0.18**} & -0.08 & -0.06 & -0.01 & 0.01 & -0.02 \\
Readability Ease & 0.10 & \textbf{0.39**} & \textbf{0.18*} & 0.08 & \textbf{0.45**} & 0.17 & \textbf{0.43***} \\
Polarity & 0.01 & -0.04 & 0.02 & 0.00 & -0.01 & 0.01 & 0.01 \\
Question Density & \textbf{-0.10**} & -0.09 & -0.06 & \textbf{-0.11**} & \textbf{-0.11*} & \textbf{-0.09*} & \textbf{-0.12**} \\
Paragraph Length & 0.02 & 0.04 & 0.01 & 0.04 & 0.05 & 0.03 & 0.07 \\
Code Block & \textbf{0.08**} & \textbf{0.20***} & \textbf{0.07**} & \textbf{0.09**} & \textbf{0.22***} & \textbf{0.14***} & \textbf{0.20***} \\
Additive Markers & -0.01 & 0.05 & 0.00 & 0.00 & 0.06 & 0.00 & 0.03 \\
Summary Markers & \textbf{-0.08**} & \textbf{-0.09*} & \textbf{-0.08***} & \textbf{-0.08**} & \textbf{-0.11**} & \textbf{-0.10***} & \textbf{-0.12**} \\
Character Density & \textbf{0.10*} & \textbf{0.25**} & \textbf{0.13**} & \textbf{0.12**} & \textbf{0.23**} & \textbf{0.20***} & 0.13 \\
Exclamation Count & 0.03 & \textbf{0.13*} & 0.03 & -0.01 & 0.08 & -0.03 & 0.05 \\
Lexical Ratio & -0.04 & -0.10 & 0.05 & -0.02 & \textbf{-0.20**} & -0.08 & \textbf{-0.13*} \\
Subjectivity & 0.00 & 0.02 & -0.00 & 0.02 & 0.02 & 0.01 & 0.02 \\
Sentence Length & -0.04 & \textbf{-0.13**} & \textbf{-0.07**} & -0.03 & \textbf{-0.10**} & -0.04 & \textbf{-0.10**} \\
Example Markers & 0.04 & 0.05 & 0.02 & 0.02 & 0.06 & 0.02 & 0.04 \\
Compound Sentiment & \textbf{0.11***} & \textbf{0.27***} & \textbf{0.06*} & \textbf{0.08*} & \textbf{0.16***} & \textbf{0.13***} & \textbf{0.19***} \\
Question Count & \textbf{0.12***} & \textbf{0.16**} & \textbf{0.09**} & \textbf{0.13***} & \textbf{0.13**} & \textbf{0.11***} & \textbf{0.15***} \\
\bottomrule
\multicolumn{8}{l}{\footnotesize Significance: \textbf{\textit{***}} $p<0.01$, \textbf{**} $p<0.05$, \textit{*} $p<0.10$.} \\
\end{tabular}
}
\end{table}

\begin{table}[H]
\centering
\caption{Human-LLM judgement discrepancies on BigGen Bench (with Benjamini-Yekutieli correction)}
\label{tab:biggen-with-BYcorr}
\rowcolors{2}{white}{gray!10}
\resizebox{\textwidth}{!}{
\begin{tabular}{lS[table-format=2.2]S[table-format=2.2]S[table-format=2.2]S[table-format=2.2]S[table-format=2.2]S[table-format=2.2]S[table-format=2.2]}
\toprule
 & {GPT4-turbo} & {GPT4.1-nano} & {GPT4.1} & {GPT4o-mini} & {LLaMa-3.1-8B-It} & {Selene-1-Mini} & {Prometheus-v2} \\
\midrule
Writing Quality & -0.07 & \textbf{-0.38***} & -0.10 & -0.02 & \textbf{-0.22**} & -0.02 & \textbf{-0.22***} \\
Text Length & \textbf{-0.49***} & \textbf{-0.83***} & \textbf{-0.39***} & \textbf{-0.43***} & \textbf{-0.78***} & \textbf{-0.44***} & \textbf{-0.74***} \\
Italics & 0.04 & 0.06 & 0.05 & 0.04 & 0.08 & 0.05 & 0.06 \\
Bold Text & -0.05 & -0.04 & 0.01 & -0.04 & -0.08 & -0.04 & -0.06 \\
Lists & 0.07 & 0.07 & 0.04 & 0.08 & 0.09 & 0.11 & 0.05 \\
Headers & 0.03 & 0.02 & 0.03 & 0.01 & -0.01 & 0.01 & 0.03 \\
Creativity/Engagement & 0.07 & 0.04 & 0.05 & 0.09 & 0.06 & 0.09 & 0.02 \\
Positive Sentiment & \textbf{-0.18***} & \textbf{-0.31***} & \textbf{-0.12*} & \textbf{-0.15**} & \textbf{-0.22**} & \textbf{-0.18***} & \textbf{-0.21***} \\
Conciseness & -0.07 & -0.09 & -0.07 & -0.02 & -0.07 & -0.04 & -0.02 \\
Contrast Markers & 0.01 & -0.00 & 0.02 & -0.01 & 0.02 & -0.01 & -0.01 \\
Layout Density & -0.10 & -0.23 & \textbf{-0.15*} & -0.11 & -0.21 & -0.13 & -0.17 \\
Causal Markers & \textbf{-0.13**} & \textbf{-0.19**} & -0.09 & -0.10 & -0.12 & -0.08 & -0.07 \\
Structure Counts & 0.15 & \textbf{0.35***} & \textbf{0.16*} & 0.11 & \textbf{0.29**} & 0.12 & \textbf{0.29***} \\
Sentiment & 0.12 & \textbf{0.24**} & 0.10 & 0.11 & 0.11 & 0.09 & 0.10 \\
Readability Grade & 0.03 & 0.24 & 0.13 & -0.01 & 0.27 & 0.07 & 0.27 \\
Exclamation Density & -0.05 & -0.18 & -0.08 & -0.06 & -0.01 & 0.01 & -0.02 \\
Readability Ease & 0.10 & 0.39 & 0.18 & 0.08 & 0.45 & 0.17 & 0.43 \\
Polarity & 0.01 & -0.04 & 0.02 & 0.00 & -0.01 & 0.01 & 0.01 \\
Question Density & -0.10 & -0.09 & -0.06 & -0.11 & -0.11 & -0.09 & -0.12 \\
Paragraph Length & 0.02 & 0.04 & 0.01 & 0.04 & 0.05 & 0.03 & 0.07 \\
Code Block & 0.08 & \textbf{0.20**} & 0.07 & 0.09 & \textbf{0.22***} & \textbf{0.14**} & \textbf{0.20***} \\
Additive Markers & -0.01 & 0.05 & 0.00 & 0.00 & 0.06 & 0.00 & 0.03 \\
Summary Markers & -0.08 & -0.09 & -0.08 & -0.08 & -0.11 & -0.10 & -0.12 \\
Character Density & 0.10 & 0.25 & 0.13 & 0.12 & 0.23 & \textbf{0.20**} & 0.13 \\
Exclamation Count & 0.03 & 0.13 & 0.03 & -0.01 & 0.08 & -0.03 & 0.05 \\
Lexical Ratio & -0.04 & -0.10 & 0.05 & -0.02 & -0.20 & -0.08 & -0.13 \\
Subjectivity & 0.00 & 0.02 & -0.00 & 0.02 & 0.02 & 0.01 & 0.02 \\
Sentence Length & -0.04 & -0.13 & -0.07 & -0.03 & -0.10 & -0.04 & -0.10 \\
Example Markers & 0.04 & 0.05 & 0.02 & 0.02 & 0.06 & 0.02 & 0.04 \\
Compound Sentiment & 0.11 & \textbf{0.27***} & 0.06 & 0.08 & 0.16 & \textbf{0.13**} & \textbf{0.19**} \\
Question Count & \textbf{0.12*} & 0.16 & 0.09 & \textbf{0.13*} & 0.13 & 0.11 & 0.15 \\
\bottomrule
\multicolumn{8}{l}{\footnotesize Significance: \textbf{\textit{***}} $p<0.01$, \textbf{**} $p<0.05$, \textit{*} $p<0.10$.} \\
\end{tabular}
}
\end{table}

\newpage
\subsubsection{Tables Chatbot Arena}

\begin{table}[H]
\centering
\caption{Human-LLM judgement discrepancies on Chatbot Arena (with \textit{no} Benjamini-Yekutieli correction)}
\label{tab:arena-no-BYcorr}
\rowcolors{2}{white}{gray!10}
\resizebox{\textwidth}{!}{
\begin{tabular}{lS[table-format=2.2]S[table-format=2.2]S[table-format=2.2]S[table-format=2.2]S[table-format=2.2]S[table-format=2.2]S[table-format=2.2]}
\toprule
 & {GPT4-turbo} & {GPT4.1-nano} & {GPT4.1} & {GPT4o-mini} & {LLaMa-3.1-8B-It} & {Selene-1-Mini} & {Prometheus-v2} \\
\midrule
Text Length & \textbf{-0.90***} & \textbf{-2.05***} & \textbf{-0.54***} & \textbf{-1.02***} & \textbf{-1.61***} & \textbf{-1.17***} & \textbf{-1.20***} \\
Creativity/Engagement & \textbf{-0.55***} & \textbf{-1.27***} & \textbf{-0.32***} & \textbf{-0.64***} & \textbf{-1.10***} & \textbf{-0.78***} & \textbf{-0.77***} \\
Readability Grade & 0.01 & 0.15 & -0.01 & 0.02 & 0.11 & 0.07 & 0.07 \\
Consistency & -0.17 & -0.38 & -0.12 & -0.17 & -0.32 & -0.23 & -0.23 \\
Bold Text & \textbf{0.38***} & \textbf{0.74***} & \textbf{0.25***} & \textbf{0.50***} & \textbf{0.77***} & \textbf{0.66***} & \textbf{0.62***} \\
Causal Markers & 0.09 & 0.20 & 0.08 & 0.11 & 0.15 & 0.13 & 0.11 \\
Language Quality & 0.12 & 0.23 & 0.07 & 0.15 & 0.21 & 0.15 & 0.21 \\
Example Markers & 0.11 & 0.24 & 0.08 & 0.15 & 0.20 & 0.16 & 0.15 \\
Conciseness & \textbf{-0.20*} & -0.36 & -0.08 & \textbf{-0.24*} & -0.33 & -0.27 & \textbf{-0.26*} \\
Structure Counts & 0.16 & 0.43 & 0.11 & 0.22 & 0.34 & 0.28 & 0.25 \\
Additive Markers & 0.01 & 0.08 & 0.03 & 0.04 & 0.07 & 0.03 & 0.04 \\
Polarity & 0.09 & 0.15 & 0.05 & 0.09 & 0.12 & 0.09 & 0.10 \\
Contrast Markers & -0.06 & -0.10 & -0.04 & -0.08 & -0.03 & -0.09 & -0.05 \\
Sentiment & \textbf{0.26**} & \textbf{0.57**} & \textbf{0.14*} & \textbf{0.34**} & \textbf{0.41*} & \textbf{0.35**} & \textbf{0.37**} \\
Coherence & -0.16 & \textbf{-0.47*} & -0.10 & -0.25 & -0.36 & -0.28 & -0.30 \\
Linebreak Density & 0.19 & 0.36 & 0.13 & 0.24 & 0.35 & 0.27 & 0.26 \\
Lists & -0.07 & -0.23 & -0.03 & -0.14 & -0.26 & -0.18 & -0.17 \\
Subjectivity & \textbf{0.17*} & 0.27 & 0.07 & \textbf{0.19*} & \textbf{0.28*} & \textbf{0.22*} & \textbf{0.21*} \\
Readability Ease & 0.17 & 0.49 & 0.11 & 0.22 & 0.41 & 0.28 & 0.24 \\
Paragraph Length & \textbf{0.24**} & \textbf{0.51**} & \textbf{0.17**} & \textbf{0.28**} & \textbf{0.43**} & \textbf{0.32**} & \textbf{0.30**} \\
Code Block & -0.08 & -0.20 & -0.06 & -0.09 & -0.18 & -0.14 & -0.13 \\
Question Density & -0.00 & 0.02 & -0.01 & -0.00 & 0.02 & 0.00 & 0.01 \\
List Density & 0.02 & 0.04 & -0.02 & 0.04 & 0.08 & 0.09 & 0.04 \\
Exclamation Count & 0.06 & 0.18 & 0.06 & 0.13 & 0.13 & 0.11 & 0.10 \\
Paragraph Density & \textbf{-0.23*} & \textbf{-0.50**} & \textbf{-0.16**} & \textbf{-0.26*} & \textbf{-0.46**} & \textbf{-0.36**} & \textbf{-0.37**} \\
Character Density & 0.01 & 0.00 & 0.01 & 0.01 & 0.01 & -0.02 & 0.00 \\
Count Italic & 0.24 & 0.49 & 0.19 & 0.27 & 0.45 & 0.36 & 0.37 \\
Question Count & 0.00 & 0.07 & 0.01 & 0.02 & 0.07 & 0.02 & 0.02 \\
Positive Sentiment & 0.03 & 0.12 & 0.01 & 0.08 & 0.10 & 0.07 & 0.07 \\
Compound Sentiment & 0.06 & 0.12 & 0.03 & 0.06 & 0.13 & 0.10 & 0.13 \\
Summary Markers & -0.07 & -0.08 & -0.05 & -0.06 & -0.08 & -0.06 & -0.08 \\
Lexical Ratio & \textbf{0.26*} & \textbf{0.56**} & \textbf{0.22**} & \textbf{0.30*} & \textbf{0.48*} & \textbf{0.45**} & \textbf{0.35*} \\
Relative Italic & -0.22 & -0.43 & -0.14 & -0.25 & -0.41 & -0.31 & \textbf{-0.31*} \\
Exclamation Density & \textbf{-0.18*} & -0.30 & -0.08 & -0.19 & -0.27 & -0.21 & \textbf{-0.24*} \\
Header Density & -0.00 & 0.02 & 0.01 & 0.02 & 0.00 & 0.01 & 0.00 \\
Headers & 0.02 & 0.01 & 0.03 & 0.05 & 0.01 & 0.03 & -0.01 \\
Sentence Length & -0.09 & -0.16 & -0.08 & -0.09 & -0.20 & -0.17 & -0.19 \\
\bottomrule
\multicolumn{8}{l}{\footnotesize Significance: \textbf{\textit{***}} $p<0.01$, \textbf{**} $p<0.05$, \textit{*} $p<0.10$.} \\
\end{tabular}
}
\end{table}

\begin{table}[H]
\centering
\caption{Human-LLM judgement discrepancies on Chatbot Arena (with Benjamini-Yekutieli correction)}
\label{tab:arena-with-BYcorr}
\rowcolors{2}{white}{gray!10}
\resizebox{\textwidth}{!}{
\begin{tabular}{lS[table-format=2.2]S[table-format=2.2]S[table-format=2.2]S[table-format=2.2]S[table-format=2.2]S[table-format=2.2]S[table-format=2.2]}
\toprule
 & {GPT4-turbo} & {GPT4.1-nano} & {GPT4.1} & {GPT4o-mini} & {LLaMa-3.1-8B-It} & {Selene-1-Mini} & {Prometheus-v2} \\
\midrule
Text Length & \textbf{-0.90***} & \textbf{-2.05***} & \textbf{-0.54***} & \textbf{-1.02***} & \textbf{-1.61**} & \textbf{-1.17**} & \textbf{-1.20***} \\
Creativity/Engagement & \textbf{-0.55***} & \textbf{-1.27***} & \textbf{-0.32***} & \textbf{-0.64***} & \textbf{-1.10**} & \textbf{-0.78**} & \textbf{-0.77***} \\
Readability Grade & 0.01 & 0.15 & -0.01 & 0.02 & 0.11 & 0.07 & 0.07 \\
Consistency & -0.17 & -0.38 & -0.12 & -0.17 & -0.32 & -0.23 & -0.23 \\
Bold Text & \textbf{0.38***} & \textbf{0.74**} & \textbf{0.25***} & \textbf{0.50***} & \textbf{0.77**} & \textbf{0.66***} & \textbf{0.62***} \\
Causal Markers & 0.09 & 0.20 & 0.08 & 0.11 & 0.15 & 0.13 & 0.11 \\
Language Quality & 0.12 & 0.23 & 0.07 & 0.15 & 0.21 & 0.15 & 0.21 \\
Example Markers & 0.11 & 0.24 & 0.08 & 0.15 & 0.20 & 0.16 & 0.15 \\
Conciseness & -0.20 & -0.36 & -0.08 & -0.24 & -0.33 & -0.27 & -0.26 \\
Structure Counts & 0.16 & 0.43 & 0.11 & 0.22 & 0.34 & 0.28 & 0.25 \\
Additive Markers & 0.01 & 0.08 & 0.03 & 0.04 & 0.07 & 0.03 & 0.04 \\
Polarity & 0.09 & 0.15 & 0.05 & 0.09 & 0.12 & 0.09 & 0.10 \\
Contrast Markers & -0.06 & -0.10 & -0.04 & -0.08 & -0.03 & -0.09 & -0.05 \\
Sentiment & 0.26 & 0.57 & 0.14 & 0.34 & 0.41 & 0.35 & 0.37 \\
Coherence & -0.16 & -0.47 & -0.10 & -0.25 & -0.36 & -0.28 & -0.30 \\
Linebreak Density & 0.19 & 0.36 & 0.13 & 0.24 & 0.35 & 0.27 & 0.26 \\
Lists & -0.07 & -0.23 & -0.03 & -0.14 & -0.26 & -0.18 & -0.17 \\
Subjectivity & 0.17 & 0.27 & 0.07 & 0.19 & 0.28 & 0.22 & 0.21 \\
Readability Ease & 0.17 & 0.49 & 0.11 & 0.22 & 0.41 & 0.28 & 0.24 \\
Paragraph Length & 0.24 & 0.51 & 0.17 & 0.28 & 0.43 & 0.32 & 0.30 \\
Code Block & -0.08 & -0.20 & -0.06 & -0.09 & -0.18 & -0.14 & -0.13 \\
Question Density & -0.00 & 0.02 & -0.01 & -0.00 & 0.02 & 0.00 & 0.01 \\
List Density & 0.02 & 0.04 & -0.02 & 0.04 & 0.08 & 0.09 & 0.04 \\
Exclamation Count & 0.06 & 0.18 & 0.06 & 0.13 & 0.13 & 0.11 & 0.10 \\
Paragraph Density & -0.23 & -0.50 & -0.16 & -0.26 & -0.46 & -0.36 & -0.37 \\
Character Density & 0.01 & 0.00 & 0.01 & 0.01 & 0.01 & -0.02 & 0.00 \\
Count Italic & 0.24 & 0.49 & 0.19 & 0.27 & 0.45 & 0.36 & 0.37 \\
Question Count & 0.00 & 0.07 & 0.01 & 0.02 & 0.07 & 0.02 & 0.02 \\
Positive Sentiment & 0.03 & 0.12 & 0.01 & 0.08 & 0.10 & 0.07 & 0.07 \\
Compound Sentiment & 0.06 & 0.12 & 0.03 & 0.06 & 0.13 & 0.10 & 0.13 \\
Summary Markers & -0.07 & -0.08 & -0.05 & -0.06 & -0.08 & -0.06 & -0.08 \\
Lexical Ratio & 0.26 & 0.56 & 0.22 & 0.30 & 0.48 & 0.45 & 0.35 \\
Relative Italic & -0.22 & -0.43 & -0.14 & -0.25 & -0.41 & -0.31 & -0.31 \\
Exclamation Density & -0.18 & -0.30 & -0.08 & -0.19 & -0.27 & -0.21 & -0.24 \\
Header Density & -0.00 & 0.02 & 0.01 & 0.02 & 0.00 & 0.01 & 0.00 \\
Headers & 0.02 & 0.01 & 0.03 & 0.05 & 0.01 & 0.03 & -0.01 \\
Sentence Length & -0.09 & -0.16 & -0.08 & -0.09 & -0.20 & -0.17 & -0.19 \\
\bottomrule
\multicolumn{8}{l}{\footnotesize Significance: \textbf{\textit{***}} $p<0.01$, \textbf{**} $p<0.05$, \textit{*} $p<0.10$.} \\
\end{tabular}
}
\end{table}

\newpage
\subsubsection{Tables Chatbot Arena (non-technical queries)}

\begin{table}[H]
\centering
\caption{Human-LLM judgement discrepancies on non-technical Chatbot Arena queries (with \textit{no} Benjamini-Yekutieli correction)}
\label{tab:arena2-no-BYcorr}
\rowcolors{2}{white}{gray!10}
\resizebox{\textwidth}{!}{
\begin{tabular}{lS[table-format=2.2]S[table-format=2.2]S[table-format=2.2]S[table-format=2.2]S[table-format=2.2]S[table-format=2.2]S[table-format=2.2]}
\toprule
 & {GPT4-turbo} & {GPT4.1-nano} & {GPT4.1} & {GPT4o-mini} & {LLaMa-3.1-8B-It} & {Selene-1-Mini} & {Prometheus-v2} \\
\midrule
Text Length & \textbf{-0.90***} & \textbf{-2.18***} & \textbf{-0.55***} & \textbf{-1.09***} & \textbf{-2.45**} & \textbf{-1.25***} & \textbf{-1.67***} \\
Creativity/Engagement & \textbf{-0.55***} & \textbf{-1.33***} & \textbf{-0.33***} & \textbf{-0.70***} & \textbf{-1.64**} & \textbf{-0.86***} & \textbf{-1.07***} \\
Readability Grade & 0.28 & 0.78 & 0.13 & 0.35 & 0.90 & 0.46 & 0.64 \\
Consistency & \textbf{-0.24*} & \textbf{-0.56**} & \textbf{-0.16*} & \textbf{-0.27*} & \textbf{-0.63*} & \textbf{-0.35*} & \textbf{-0.42*} \\
Bold Text & \textbf{0.39***} & \textbf{0.75**} & \textbf{0.24***} & \textbf{0.52***} & \textbf{1.01**} & \textbf{0.66***} & \textbf{0.77***} \\
Causal Markers & 0.14 & 0.32 & 0.13 & 0.19 & 0.34 & 0.21 & 0.24 \\
Language Quality & 0.24 & 0.52 & 0.14 & 0.30 & 0.62 & 0.36 & 0.47 \\
Example Markers & -0.14 & -0.27 & -0.12 & -0.16 & -0.33 & -0.21 & -0.24 \\
Conciseness & \textbf{-0.26*} & \textbf{-0.49*} & -0.11 & \textbf{-0.32*} & \textbf{-0.63*} & \textbf{-0.38*} & \textbf{-0.45*} \\
Structure Counts & 0.21 & 0.57 & 0.10 & 0.34 & 0.70 & 0.39 & 0.42 \\
Additive Markers & -0.06 & -0.06 & -0.01 & -0.04 & -0.07 & -0.08 & -0.09 \\
Polarity & 0.15 & 0.31 & 0.10 & 0.19 & 0.38 & 0.21 & 0.27 \\
Contrast Markers & -0.04 & -0.05 & -0.03 & -0.04 & 0.06 & -0.02 & 0.00 \\
Sentiment & \textbf{0.22*} & \textbf{0.46*} & 0.12 & \textbf{0.30*} & 0.44 & 0.29 & \textbf{0.38*} \\
Coherence & -0.06 & -0.23 & -0.03 & -0.10 & -0.24 & -0.13 & -0.21 \\
Linebreak Density & 0.10 & 0.21 & 0.07 & 0.18 & 0.28 & 0.19 & 0.20 \\
Lists & \textbf{-0.35*} & \textbf{-0.84*} & -0.21 & \textbf{-0.51*} & \textbf{-1.07*} & \textbf{-0.60*} & \textbf{-0.71*} \\
Subjectivity & 0.06 & 0.09 & 0.01 & 0.07 & 0.14 & 0.06 & 0.11 \\
Readability Ease & 0.17 & 0.60 & 0.11 & 0.27 & 0.65 & 0.30 & 0.39 \\
Paragraph Length & 0.09 & 0.20 & 0.07 & 0.11 & 0.24 & 0.12 & 0.13 \\
Code Block & -0.18 & -0.39 & -0.15 & -0.23 & -0.53 & -0.33 & -0.36 \\
Question Density & 0.00 & 0.06 & -0.00 & 0.02 & 0.05 & 0.01 & 0.04 \\
List Density & 0.03 & 0.02 & -0.01 & 0.03 & 0.08 & 0.08 & 0.03 \\
Exclamation Count & 0.11 & 0.31 & 0.11 & 0.22 & 0.32 & 0.20 & 0.22 \\
Paragraph Density & \textbf{-0.25*} & \textbf{-0.52*} & \textbf{-0.15*} & \textbf{-0.30*} & -0.64 & \textbf{-0.38*} & \textbf{-0.46*} \\
Character Density & 0.16 & 0.21 & 0.11 & 0.15 & 0.28 & 0.12 & 0.20 \\
Count Italic & 0.26 & 0.46 & 0.18 & 0.33 & 0.67 & 0.44 & 0.45 \\
Question Count & 0.04 & 0.15 & 0.04 & 0.08 & 0.21 & 0.07 & 0.10 \\
Positive Sentiment & -0.12 & -0.17 & -0.10 & -0.10 & -0.25 & -0.15 & -0.20 \\
Compound Sentiment & \textbf{0.22*} & \textbf{0.46*} & \textbf{0.15*} & \textbf{0.28*} & 0.59 & \textbf{0.37*} & \textbf{0.46**} \\
Summary Markers & -0.05 & -0.05 & -0.05 & -0.05 & -0.09 & -0.07 & -0.09 \\
Lexical Ratio & 0.26 & \textbf{0.60*} & \textbf{0.22**} & \textbf{0.36*} & 0.72 & \textbf{0.50**} & 0.48 \\
Relative Italic & -0.16 & -0.30 & -0.08 & -0.18 & -0.42 & -0.25 & -0.26 \\
Exclamation Density & -0.12 & -0.20 & -0.05 & -0.13 & -0.22 & -0.13 & -0.19 \\
Header Density & 0.12 & 0.21 & 0.08 & 0.12 & 0.24 & 0.16 & 0.18 \\
Headers & -0.05 & -0.06 & 0.02 & 0.01 & -0.06 & -0.02 & -0.09 \\
Sentence Length & \textbf{-1.02**} & \textbf{-1.97*} & \textbf{-0.59*} & \textbf{-1.13*} & \textbf{-2.55*} & \textbf{-1.47*} & \textbf{-1.87**} \\
\bottomrule
\multicolumn{8}{l}{\footnotesize Significance: \textbf{\textit{***}} $p<0.01$, \textbf{**} $p<0.05$, \textit{*} $p<0.10$.} \\
\end{tabular}
}
\end{table}

\begin{table}[H]
\centering
\caption{Human-LLM judgement discrepancies on non-technical Chatbot Arena queries (with Benjamini-Yekutieli correction)}
\label{tab:arena2-with-BYcorr}
\rowcolors{2}{white}{gray!10}
\resizebox{\textwidth}{!}{
\begin{tabular}{lS[table-format=2.2]S[table-format=2.2]S[table-format=2.2]S[table-format=2.2]S[table-format=2.2]S[table-format=2.2]S[table-format=2.2]}
\toprule
 & {GPT4-turbo} & {GPT4.1-nano} & {GPT4.1} & {GPT4o-mini} & {LLaMa-3.1-8B-It} & {Selene-1-Mini} & {Prometheus-v2} \\
\midrule
Text Length & \textbf{-0.90**} & \textbf{-2.18*} & \textbf{-0.55**} & \textbf{-1.09*} & -2.45 & -1.25 & -1.67 \\
Creativity/Engagement & \textbf{-0.55**} & \textbf{-1.33*} & \textbf{-0.33**} & \textbf{-0.70*} & -1.64 & -0.86 & -1.07 \\
Readability Grade & 0.28 & 0.78 & 0.13 & 0.35 & 0.90 & 0.46 & 0.64 \\
Consistency & -0.24 & -0.56 & -0.16 & -0.27 & -0.63 & -0.35 & -0.42 \\
Bold Text & 0.39 & 0.75 & 0.24 & \textbf{0.52*} & 1.01 & 0.66 & 0.77 \\
Causal Markers & 0.14 & 0.32 & 0.13 & 0.19 & 0.34 & 0.21 & 0.24 \\
Language Quality & 0.24 & 0.52 & 0.14 & 0.30 & 0.62 & 0.36 & 0.47 \\
Example Markers & -0.14 & -0.27 & -0.12 & -0.16 & -0.33 & -0.21 & -0.24 \\
Conciseness & -0.26 & -0.49 & -0.11 & -0.32 & -0.63 & -0.38 & -0.45 \\
Structure Counts & 0.21 & 0.57 & 0.10 & 0.34 & 0.70 & 0.39 & 0.42 \\
Additive Markers & -0.06 & -0.06 & -0.01 & -0.04 & -0.07 & -0.08 & -0.09 \\
Polarity & 0.15 & 0.31 & 0.10 & 0.19 & 0.38 & 0.21 & 0.27 \\
Contrast Markers & -0.04 & -0.05 & -0.03 & -0.04 & 0.06 & -0.02 & 0.00 \\
Sentiment & 0.22 & 0.46 & 0.12 & 0.30 & 0.44 & 0.29 & 0.38 \\
Coherence & -0.06 & -0.23 & -0.03 & -0.10 & -0.24 & -0.13 & -0.21 \\
Linebreak Density & 0.10 & 0.21 & 0.07 & 0.18 & 0.28 & 0.19 & 0.20 \\
Lists & -0.35 & -0.84 & -0.21 & -0.51 & -1.07 & -0.60 & -0.71 \\
Subjectivity & 0.06 & 0.09 & 0.01 & 0.07 & 0.14 & 0.06 & 0.11 \\
Readability Ease & 0.17 & 0.60 & 0.11 & 0.27 & 0.65 & 0.30 & 0.39 \\
Paragraph Length & 0.09 & 0.20 & 0.07 & 0.11 & 0.24 & 0.12 & 0.13 \\
Code Block & -0.18 & -0.39 & -0.15 & -0.23 & -0.53 & -0.33 & -0.36 \\
Question Density & 0.00 & 0.06 & -0.00 & 0.02 & 0.05 & 0.01 & 0.04 \\
List Density & 0.03 & 0.02 & -0.01 & 0.03 & 0.08 & 0.08 & 0.03 \\
Exclamation Count & 0.11 & 0.31 & 0.11 & 0.22 & 0.32 & 0.20 & 0.22 \\
Paragraph Density & -0.25 & -0.52 & -0.15 & -0.30 & -0.64 & -0.38 & -0.46 \\
Character Density & 0.16 & 0.21 & 0.11 & 0.15 & 0.28 & 0.12 & 0.20 \\
Count Italic & 0.26 & 0.46 & 0.18 & 0.33 & 0.67 & 0.44 & 0.45 \\
Question Count & 0.04 & 0.15 & 0.04 & 0.08 & 0.21 & 0.07 & 0.10 \\
Positive Sentiment & -0.12 & -0.17 & -0.10 & -0.10 & -0.25 & -0.15 & -0.20 \\
Compound Sentiment & 0.22 & 0.46 & 0.15 & 0.28 & 0.59 & 0.37 & 0.46 \\
Summary Markers & -0.05 & -0.05 & -0.05 & -0.05 & -0.09 & -0.07 & -0.09 \\
Lexical Ratio & 0.26 & 0.60 & 0.22 & 0.36 & 0.72 & 0.50 & 0.48 \\
Relative Italic & -0.16 & -0.30 & -0.08 & -0.18 & -0.42 & -0.25 & -0.26 \\
Exclamation Density & -0.12 & -0.20 & -0.05 & -0.13 & -0.22 & -0.13 & -0.19 \\
Header Density & 0.12 & 0.21 & 0.08 & 0.12 & 0.24 & 0.16 & 0.18 \\
Headers & -0.05 & -0.06 & 0.02 & 0.01 & -0.06 & -0.02 & -0.09 \\
Sentence Length & -1.02 & -1.97 & -0.59 & -1.13 & -2.55 & -1.47 & -1.87 \\
\bottomrule
\multicolumn{8}{l}{\footnotesize Significance: \textbf{\textit{***}} $p<0.01$, \textbf{**} $p<0.05$, \textit{*} $p<0.10$.} \\
\end{tabular}
}
\end{table}

\newpage
\subsubsection{Tables Chatbot Arena (technical queries)}

\begin{table}[H]
\centering
\caption{Human-LLM judgement discrepancies on technical Chatbot Arena queries (with \textit{no} Benjamini-Yekutieli correction)}
\label{tab:arena3-no-BYcorr}
\rowcolors{2}{white}{gray!10}
\resizebox{\textwidth}{!}{
\begin{tabular}{lS[table-format=2.2]S[table-format=2.2]S[table-format=2.2]S[table-format=2.2]S[table-format=2.2]S[table-format=2.2]S[table-format=2.2]}
\toprule
 & {GPT4-turbo} & {GPT4.1-nano} & {GPT4.1} & {GPT4o-mini} & {LLaMa-3.1-8B-It} & {Selene-1-Mini} & {Prometheus-v2} \\
\midrule
Text Length & \textbf{-1.06***} & \textbf{-2.36**} & \textbf{-0.60***} & \textbf{-1.03***} & \textbf{-1.10**} & \textbf{-1.07**} & \textbf{-0.92***} \\
Creativity/Engagement & \textbf{-0.60*} & \textbf{-1.32*} & -0.27 & \textbf{-0.52*} & \textbf{-0.62*} & \textbf{-0.58*} & \textbf{-0.48*} \\
Readability Grade & 0.27 & 0.58 & 0.17 & 0.23 & 0.32 & 0.31 & 0.25 \\
Consistency & 0.08 & 0.18 & 0.04 & 0.12 & 0.10 & 0.13 & 0.07 \\
Bold Text & \textbf{0.45**} & \textbf{0.90**} & \textbf{0.28**} & \textbf{0.52***} & \textbf{0.62***} & \textbf{0.65***} & \textbf{0.52***} \\
Causal Markers & 0.04 & 0.12 & 0.04 & 0.05 & 0.05 & 0.07 & 0.03 \\
Language Quality & 0.12 & 0.17 & 0.05 & 0.17 & 0.12 & 0.06 & 0.16 \\
Example Markers & \textbf{0.30*} & \textbf{0.64*} & \textbf{0.21**} & \textbf{0.33**} & \textbf{0.35*} & \textbf{0.35**} & \textbf{0.28**} \\
Conciseness & -0.05 & -0.16 & 0.00 & -0.10 & -0.05 & -0.03 & -0.04 \\
Structure Counts & 0.15 & 0.41 & 0.12 & 0.19 & 0.14 & 0.20 & 0.15 \\
Additive Markers & 0.06 & 0.16 & 0.05 & 0.07 & 0.09 & 0.08 & 0.08 \\
Polarity & -0.03 & -0.11 & -0.05 & -0.06 & -0.09 & -0.09 & -0.06 \\
Contrast Markers & -0.12 & -0.22 & -0.08 & -0.16 & -0.13 & -0.20 & -0.12 \\
Sentiment & 0.40 & \textbf{0.95*} & 0.22 & \textbf{0.45*} & 0.44 & 0.46 & 0.37 \\
Coherence & \textbf{-1.00**} & \textbf{-2.24**} & \textbf{-0.55**} & \textbf{-1.10**} & \textbf{-1.19**} & \textbf{-1.15**} & \textbf{-0.94**} \\
Linebreak Density & 0.33 & 0.67 & 0.22 & 0.31 & 0.43 & 0.36 & 0.28 \\
Lists & 0.09 & 0.07 & 0.08 & 0.03 & -0.01 & 0.02 & 0.03 \\
Subjectivity & \textbf{0.40**} & 0.70 & \textbf{0.21*} & \textbf{0.40**} & \textbf{0.46**} & \textbf{0.48**} & \textbf{0.34**} \\
Readability Ease & 0.42 & 0.96 & 0.26 & 0.39 & 0.48 & 0.46 & 0.33 \\
Paragraph Length & \textbf{0.54**} & \textbf{1.17**} & \textbf{0.35**} & \textbf{0.58**} & \textbf{0.59**} & \textbf{0.58**} & \textbf{0.45**} \\
Code Block & -0.07 & -0.19 & -0.05 & -0.07 & -0.11 & -0.11 & -0.08 \\
Question Density & -0.20 & -0.45 & -0.14 & -0.19 & -0.13 & -0.14 & -0.15 \\
List Density & 0.12 & 0.30 & 0.05 & 0.20 & 0.21 & 0.25 & 0.15 \\
Exclamation Count & -0.01 & 0.00 & -0.02 & 0.03 & 0.02 & 0.01 & 0.05 \\
Paragraph Density & -0.15 & -0.42 & -0.13 & -0.15 & -0.25 & -0.26 & -0.22 \\
Character Density & 0.02 & 0.09 & 0.02 & 0.02 & 0.04 & 0.02 & 0.02 \\
Count Italic & -0.05 & -0.01 & 0.02 & -0.07 & -0.09 & -0.08 & 0.01 \\
Question Count & 0.04 & 0.13 & 0.02 & 0.02 & 0.06 & 0.05 & 0.05 \\
Positive Sentiment & \textbf{0.59*} & \textbf{1.23*} & \textbf{0.37*} & \textbf{0.59*} & \textbf{0.73**} & \textbf{0.70*} & \textbf{0.60**} \\
Compound Sentiment & -0.27 & -0.54 & -0.20 & -0.29 & -0.31 & -0.34 & -0.24 \\
Summary Markers & -0.09 & -0.11 & -0.05 & -0.06 & -0.07 & -0.05 & -0.07 \\
Lexical Ratio & -0.07 & -0.12 & 0.01 & -0.11 & -0.13 & -0.04 & -0.10 \\
Relative Italic & -0.21 & -0.46 & -0.15 & -0.23 & -0.25 & -0.23 & -0.23 \\
Exclamation Density & -1.36 & -2.18 & -0.81 & -1.43 & -1.60 & -1.53 & -1.41 \\
Header Density & -0.04 & -0.05 & 0.00 & 0.01 & -0.03 & -0.05 & -0.02 \\
Headers & 0.06 & 0.07 & 0.03 & 0.07 & 0.05 & 0.07 & 0.03 \\
Sentence Length & 0.11 & 0.19 & 0.02 & 0.13 & 0.13 & 0.10 & 0.04 \\
\bottomrule
\multicolumn{8}{l}{\footnotesize Significance: \textbf{\textit{***}} $p<0.01$, \textbf{**} $p<0.05$, \textit{*} $p<0.10$.} \\
\end{tabular}
}
\end{table}

\begin{table}[H]
\centering
\caption{Human-LLM judgement discrepancies on technical Chatbot Arena queries (with Benjamini-Yekutieli correction)}
\label{tab:arena3-with-BYcorr}
\rowcolors{2}{white}{gray!10}
\resizebox{\textwidth}{!}{
\begin{tabular}{lS[table-format=2.2]S[table-format=2.2]S[table-format=2.2]S[table-format=2.2]S[table-format=2.2]S[table-format=2.2]S[table-format=2.2]}
\toprule
 & {GPT4-turbo} & {GPT4.1-nano} & {GPT4.1} & {GPT4o-mini} & {LLaMa-3.1-8B-It} & {Selene-1-Mini} & {Prometheus-v2} \\
\midrule
Text Length & -1.06 & -2.36 & -0.60 & -1.03 & -1.10 & -1.07 & -0.92 \\
Creativity/Engagement & -0.60 & -1.32 & -0.27 & -0.52 & -0.62 & -0.58 & -0.48 \\
Readability Grade & 0.27 & 0.58 & 0.17 & 0.23 & 0.32 & 0.31 & 0.25 \\
Consistency & 0.08 & 0.18 & 0.04 & 0.12 & 0.10 & 0.13 & 0.07 \\
Bold Text & 0.45 & 0.90 & 0.28 & 0.52 & 0.62 & 0.65 & 0.52 \\
Causal Markers & 0.04 & 0.12 & 0.04 & 0.05 & 0.05 & 0.07 & 0.03 \\
Language Quality & 0.12 & 0.17 & 0.05 & 0.17 & 0.12 & 0.06 & 0.16 \\
Example Markers & 0.30 & 0.64 & 0.21 & 0.33 & 0.35 & 0.35 & 0.28 \\
Conciseness & -0.05 & -0.16 & 0.00 & -0.10 & -0.05 & -0.03 & -0.04 \\
Structure Counts & 0.15 & 0.41 & 0.12 & 0.19 & 0.14 & 0.20 & 0.15 \\
Additive Markers & 0.06 & 0.16 & 0.05 & 0.07 & 0.09 & 0.08 & 0.08 \\
Polarity & -0.03 & -0.11 & -0.05 & -0.06 & -0.09 & -0.09 & -0.06 \\
Contrast Markers & -0.12 & -0.22 & -0.08 & -0.16 & -0.13 & -0.20 & -0.12 \\
Sentiment & 0.40 & 0.95 & 0.22 & 0.45 & 0.44 & 0.46 & 0.37 \\
Coherence & -1.00 & -2.24 & -0.55 & -1.10 & -1.19 & -1.15 & -0.94 \\
Linebreak Density & 0.33 & 0.67 & 0.22 & 0.31 & 0.43 & 0.36 & 0.28 \\
Lists & 0.09 & 0.07 & 0.08 & 0.03 & -0.01 & 0.02 & 0.03 \\
Subjectivity & 0.40 & 0.70 & 0.21 & 0.40 & 0.46 & 0.48 & 0.34 \\
Readability Ease & 0.42 & 0.96 & 0.26 & 0.39 & 0.48 & 0.46 & 0.33 \\
Paragraph Length & 0.54 & 1.17 & 0.35 & 0.58 & 0.59 & 0.58 & 0.45 \\
Code Block & -0.07 & -0.19 & -0.05 & -0.07 & -0.11 & -0.11 & -0.08 \\
Question Density & -0.20 & -0.45 & -0.14 & -0.19 & -0.13 & -0.14 & -0.15 \\
List Density & 0.12 & 0.30 & 0.05 & 0.20 & 0.21 & 0.25 & 0.15 \\
Exclamation Count & -0.01 & 0.00 & -0.02 & 0.03 & 0.02 & 0.01 & 0.05 \\
Paragraph Density & -0.15 & -0.42 & -0.13 & -0.15 & -0.25 & -0.26 & -0.22 \\
Character Density & 0.02 & 0.09 & 0.02 & 0.02 & 0.04 & 0.02 & 0.02 \\
Count Italic & -0.05 & -0.01 & 0.02 & -0.07 & -0.09 & -0.08 & 0.01 \\
Question Count & 0.04 & 0.13 & 0.02 & 0.02 & 0.06 & 0.05 & 0.05 \\
Positive Sentiment & 0.59 & 1.23 & 0.37 & 0.59 & 0.73 & 0.70 & 0.60 \\
Compound Sentiment & -0.27 & -0.54 & -0.20 & -0.29 & -0.31 & -0.34 & -0.24 \\
Summary Markers & -0.09 & -0.11 & -0.05 & -0.06 & -0.07 & -0.05 & -0.07 \\
Lexical Ratio & -0.07 & -0.12 & 0.01 & -0.11 & -0.13 & -0.04 & -0.10 \\
Relative Italic & -0.21 & -0.46 & -0.15 & -0.23 & -0.25 & -0.23 & -0.23 \\
Exclamation Density & -1.36 & -2.18 & -0.81 & -1.43 & -1.60 & -1.53 & -1.41 \\
Header Density & -0.04 & -0.05 & 0.00 & 0.01 & -0.03 & -0.05 & -0.02 \\
Headers & 0.06 & 0.07 & 0.03 & 0.07 & 0.05 & 0.07 & 0.03 \\
Sentence Length & 0.11 & 0.19 & 0.02 & 0.13 & 0.13 & 0.10 & 0.04 \\
\bottomrule
\multicolumn{8}{l}{\footnotesize Significance: \textbf{\textit{***}} $p<0.01$, \textbf{**} $p<0.05$, \textit{*} $p<0.10$.} \\
\end{tabular}
}
\end{table}

\newpage
\section{Additional theoretical results and proofs}

\subsection{Conditions}
\label{append:conds}

The following two conditions are required for Proposition \ref{prop:stage1_clt}.

\begin{condition}
\label{asmp:ident}
The true parameter \( (\eta^*, Z^{l,*}_{1:n})\) lies in the interior of a compact subset of 
\(\Theta_\eta\times\mathcal Z^n\subset\mathbb R^{K+n}\), and is the unique minimizer of the population loss $Q_n(\eta, z_{1:n}).$
\end{condition}

Denote
\[
  A_n
  := \sqrt{\frac{2}{\pi}} \sum_{i=1}^{n}\sum_{k=0}^{K} \frac{\nabla_{(\eta,z_{1:n})}p_k(\eta^*, Z_i^{l,*}) \nabla_{(\eta,z_{1:n})}p_k(\eta^*, Z_i^{l,*})^{ \top}} {\sqrt{p_{ik}(1-p_{ik})}},
\]
where $p_{ik} = p_k(\eta^*, Z_i^{l,*})$. Let $\xi_{ik}\in\{-1,1\},$ $i=1,\dots, n,\ k=0,\dots, K$ follow i.i.d.\ Rademacher$(1/2)$ distribution. Define a vector $S_n :=\sum_{i=1}^{n}\sum_{k=0}^K \xi_{ik}\ \nabla_{(\eta,z_{1:n})}p_k(\eta^*,Z_i^{l,*}), $ and its variance $B_n=\Var(S_n) $. Denote $\Sigma=A_n^{-1} B_n A_n^{-1}.$
\begin{condition}\label{asmp:hessian}
Let matrix $A_n$ be positive definite, and the variance $B_n=\Var(S_n)$
be finite.
\end{condition}

Below we state the conditions of Theorem \ref{thm:beta_gamma_normality}.
\begin{condition}\label{A1}
    The observations $\{(Y_i^h,X_i,I_i,O_i)\}_{i=1}^n$ are i.i.d.\ and the ordinal‐logit model
    \[
      \Pr(Y_i^h = k \mid I_i,O_i)
      = l_k(\theta; Z^l_i,X_i)= p_k(\alpha_1, \dots, \alpha_K, (1/\beta)Z_i^l - (1/\beta) \gamma^T X_i )
    \]
    holds for some true cut-points $\alpha_1^*<\cdots<\alpha_K^*$, coefficients $\beta^*$ and $\gamma^*$, and latent scores $Z_{i}^{l,*}$. 
\end{condition}
\begin{condition}\label{A3}
    The true parameter vector $\theta^*=(\alpha_1^*,\dots,\alpha_K^*,\beta^*,\gamma^*)$ lies in the interior of a parameter space $\Theta$, and \(\theta\mapsto\bbE[\nabla_\theta \log l_{Y_i^h}(\theta; Z^l_i,X_i)]\) has a unique root at \(\theta^*\). 
\end{condition}
\begin{condition}\label{A4}
    Within a neighborhood of $\theta^*$, the expectations $\bbE\bigl[\|\nabla_\theta \log l_{Y_i^h}(\theta; Z^l_i,X_i)\|^2\bigr]$ and $\bbE \bigl[\|\nabla^2_{\theta} \log l_{Y_i^h}(\theta; Z^l_i,X_i)\|\bigr]$ are finite. Moreover, the Fisher information matrix
    $
      \mathcal I(\theta^*)
      = - \bbE\bigl[\nabla^2_{\theta}\log l_{Y_i^h}\bigl(\theta^*;Z_{i}^{l,*},X_i\bigr)\bigr]
    $
    is positive definite, and
    the expected mixed derivative $G(\theta^*)=\bbE \bigl[ \nabla_{(\eta, Z_i^l)} \nabla_\theta \log l_{Y_i^h}\bigl(\theta^*;Z_{i}^{l,*},X_i\bigr) \bigr]$ exists and is finite.
\end{condition}

\subsection{Extended Theorem \ref{thm:beta_gamma_normality}}\label{append:thm_extended}

We present an extended version of Theorem \ref{thm:beta_gamma_normality}, addressing a more general case where the CoT sample size $m_n$ used to estimate $\Pr(Y_i^h=k\mid I_i, O_i),\  i=1,\ldots, n$ grows proportionally with $n$.
\begin{theorem}[Asymptotic normality of $(\hat\beta,\hat\gamma)$]\label{thm:beta_gamma_normality-ext}
Under Conditions \ref{A1}--\ref{A4}, form the MLE
$\hat\theta_n=(\hat\alpha_{1,n},\dots,\hat\alpha_{K,n}, \hat\beta_n,\hat\gamma_n)$
by maximizing the log‐likelihood
\[
\ell_n(\theta;\hat Z^l, X) = \sum_{i=1}^n \sum_{k=0}^K\mathbf1\{Y_i^h=k\}\log l_k(\theta; \hat Z_{i}^l,X_i).
\] 
w.r.t.
$\theta=(\alpha_1,\dots,\alpha_K, \beta, \gamma)$. 

\begin{enumerate}[label=(\alph*),wide, labelindent=0pt]
\item If \(\eta_1,\dots,\eta_K\) and $\{Z^l_i\}_{i=1}^n$ were known, then
\[
  \sqrt{n}\bigl(\hat\theta_n-\theta_0\bigr)
    \ \xrightarrow{d}\ 
  \cN \bigl(0,\cI(\theta^*)^{-1}\bigr),
  \qquad
  \sqrt n\begin{pmatrix}\hat\beta_n-\beta_0\\\hat\gamma_n-\gamma_0\end{pmatrix}
    \ \xrightarrow{d}\ 
  \cN \bigl(0,\ \{\mathcal I(\theta^*)^{-1}\}_{(\beta,\gamma)}\bigr).
\]

\item If the CoT prompting strategy is used to estimate $\Pr(Y^l_i=k\mid I_i,O_i),$ also assume Conditions \ref{asmp:ident} and \ref{asmp:hessian}, and let \(n/m_n\to c\in[0,\infty)\) as $n\to\infty$. Then
\[
  \sqrt{n}\bigl(\hat\theta_n-\theta_0\bigr)
    \ \xrightarrow{d}\ 
  \cN\bigl(0,\ \mathcal I(\theta^*)^{-1}U \ \mathcal I(\theta^*)^{-1}\bigr),
  \quad
  U:=\Var \bigl[s_i(\theta_0)\bigr] +c\,G\Sigma G^\top.
\]
Consequently,
\[
  \sqrt n\begin{pmatrix}
  \hat\beta_n-\beta_0\\
  \hat\gamma_n-\gamma_0\end{pmatrix}
    \ \xrightarrow{d}\ 
  \cN\Bigl(0,\ \{\mathcal I(\theta^*)^{-1}U \ \mathcal I(\theta^*)^{-1}\}_{(\beta,\gamma)} \Bigr).
\]
When \(c=0\) (that is, \(m_n\gg n\)), the extra
variance term drops out and the estimators attain the efficiency
bound of part (a).  For any fixed \(c>0\) the variance is
inflated by the second term, quantifying the price of estimating
\((\eta,Z^l)\).
\end{enumerate}
\end{theorem}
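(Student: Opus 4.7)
The plan is to dispatch part (a) via standard M-estimation theory and then handle part (b) by propagating the first-stage error in $\hat Z^l$ (and $\hat\eta$) through the second-stage score equation.

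For part (a), when $\eta$ and $Z^l_{1:n}$ are treated as known, $\hat\theta_n$ is an ordinary MLE for a smooth parametric i.i.d.\ model with log-likelihood $\sum_i \log l_{Y_i^h}(\theta; Z^{l,*}_i, X_i)$. Consistency follows from uniform convergence of the empirical criterion together with the identification assumption in Condition \ref{A3}. A classical Taylor expansion of the score around $\theta^*$, combined with the CLT applied to $n^{-1/2}\sum_i s_i(\theta^*; Z^{l,*}_i)$ (where $s_i := \nabla_\theta \log l_{Y_i^h}$) and the LLN for $-n^{-1}\nabla^2_\theta \ell_n \to \mathcal{I}(\theta^*)$, delivers $\sqrt n(\hat\theta_n-\theta^*)\xrightarrow{d}\mathcal{N}(0,\mathcal{I}(\theta^*)^{-1})$. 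The marginal statement for $(\hat\beta,\hat\gamma)$ is then obtained by extracting the appropriate sub-block of the inverse Fisher information.

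For part (b), I would start from the first-order condition $\nabla_\theta \ell_n(\hat\theta_n;\hat Z^l)=0$ but perform a joint Taylor expansion in $\theta$ around $\theta^*$ and in $(\hat\eta,\hat Z^l)$ around $(\eta^*, Z^{l,*})$. Because each per-observation score $s_i(\theta;z)$ depends only on the $i$-th latent score $z$, the expansion produces
\[
0 = \frac{1}{\sqrt n}\sum_{i=1}^n s_i(\theta^*; Z^{l,*}_i) + \frac{1}{\sqrt n}\sum_{i=1}^n G_i\bigl(\hat\eta - \eta^*,\ \hat Z^l_i - Z^{l,*}_i\bigr) - \mathcal{I}(\theta^*)\sqrt n(\hat\theta_n - \theta^*) + o_p(1),
\]
where $G_i := \nabla_{(\eta, Z^l_i)} s_i(\theta^*; Z^{l,*}_i, X_i)$ has mean $G(\theta^*)$ by Condition \ref{A4}. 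Proposition \ref{prop:stage1_clt} gives $\hat\eta-\eta^*$ and $\hat Z^l_i - Z^{l,*}_i$ of order $m_n^{-1/2}$, so the correction term has order $\sqrt{n/m_n}\to\sqrt c$ and only survives when $c>0$. Solving the display for $\sqrt n(\hat\theta_n-\theta^*)$ reduces the task to identifying the joint limit of the score and correction. Because the human labels $\{Y_i^h\}$ driving the score are independent of the CoT replicates $\{Y^l_{i,m}\}$ driving $\hat Z^l$, the two summands are asymptotically uncorrelated, so the total covariance equals $\mathrm{Var}[s_i(\theta^*)] + c\,G\Sigma G^\top$; pre- and post-multiplying by $\mathcal{I}(\theta^*)^{-1}$ yields the sandwich form, and the marginal $(\hat\beta,\hat\gamma)$ result follows by selecting the corresponding block.

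The main obstacle is establishing the joint convergence rigorously, since Proposition \ref{prop:stage1_clt} gives the first-stage CLT only for fixed $n$ as $m_n\to\infty$, whereas here both indices diverge simultaneously. I would strengthen it to a double-limit statement by linearizing $\frac{1}{\sqrt n}\sum_i G_i(\hat Z^l_i - Z^{l,*}_i)$ in terms of the per-example Rademacher-type summands appearing in $B_n$ from Condition \ref{asmp:hessian}, applying a triangular-array CLT, and controlling the across-$i$ dependence induced by the shared $\hat\eta$ via a Lyapunov condition on the $\eta$-block of $G_i$. Uniform control of the Taylor remainders, ensured by the moment bounds in Condition \ref{A4} together with the compactness in Condition \ref{asmp:ident}, provides the $o_p(1)$ approximation and is more routine; finally, the case $c=0$ collapses the second term in $U$, recovering the efficient variance of part (a).
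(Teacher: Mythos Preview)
Your proposal is correct and follows essentially the same route as the paper: standard MLE asymptotics for part (a), and for part (b) a Taylor expansion of the estimated score in both $\theta$ and the first-stage nuisance $(\hat\eta,\hat Z^l)$, combined with the independence of the human labels $\{Y_i^h\}$ and the CoT replicates to obtain the additive variance $U$ and hence the sandwich form. If anything, your handling of the double-limit issue and the across-$i$ dependence induced by the shared $\hat\eta$ is slightly more careful than the paper's proof, which simply invokes Proposition~\ref{prop:stage1_clt} and an i.i.d.\ linearization $\hat Z^l_i - Z^{l,*}_i = m_n^{-1/2}\Sigma^{1/2}e_{i,m}+r_{i,n}$ without dwelling on these points.
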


\subsection{Inference for a differentiable function of the parameter}\label{append:diff_funct_infer}
Consider a differentiable function $m(\theta)$. Under previous conditions, the delta-method yields
\[
\sqrt{n} \bigl(\hat m - m(\theta^*)\bigr)
\xrightarrow{d} \cN \Bigl(0, \nabla_\theta m(\theta^*)^\top \cI(\theta^*)^{-1} \nabla_\theta m(\theta^*)\Bigr).
\]
Set
\(
\hat\sigma^2 =\frac{1}{n} \nabla_\theta m(\hat\theta_n)^\top \hat V  \nabla_\theta m(\hat\theta_n).
\)
An approximate \(100(1-\alpha)\%\) confidence interval (CI) for \(m(\theta^*)\) is
\(
\bigl[ \hat m \pm z_{1-\alpha/2} \hat\sigma\,\bigr],
\)
where \(z_{1-\alpha/2}\) is the standard normal quantile.
Under previous regularity conditions, the coverage probability of this CI converges to $1-\alpha$ as $n$ grows.

\textbf{Prediction interval.}
This result can be used to build a CI for the prediction of a new observation \((I_{\rm new}, O_{\rm new}, X_{\rm new}, \hat Z^l_{\rm new})\), defined as \(\hat m = \sum_{k=0}^K k \hat p_k\), where \(\hat p_k = p_k(\hat\alpha_1, \dots, \hat\alpha_K, z_{\rm new})\), and \(z_{\rm new} = \hat\beta^{-1} \hat Z^l_{\rm new} - \hat\beta^{-1} \hat\gamma^\top X_{\rm new}\). This corresponds to the specific function \(m(\theta) = \sum_{k=0}^K k\, p_k(\alpha_1, \dots, \alpha_K, z(\theta))\), with \(z(\theta) = \beta^{-1} Z^l_{\rm new} - \beta^{-1} \gamma^\top X_{\rm new}\).

\textbf{Partial effect of a covariate.}
We also consider the ``partial effect" of covariate \(X_j\) on the probability of class \(k\):
\[
PE_{k,j}
=\frac{\partial}{\partial X_j} p_k\bigl(\hat\alpha, \ \hat\beta^{-1}Z^l-\hat\beta^{-1}\hat\gamma^\top X \bigr).
\]
It is the local, ceteris paribus change in the model’s predicted probability of class $k$ when covariate $X_j$ is nudged by one unit, holding the latent index $Z_l$ constant. In other words, it answers the question: All else equal, how much does the LLM-judge’s probability of assigning class $k$ change when feature $X_j$ is perturbed by one unit?
At \(\hat\theta=(\hat\alpha,\hat\beta,\hat\gamma)\),
\(
PE_{k,j}
=p'_k\bigl(\hat\alpha,\hat z\bigr) (-{\hat\gamma_j}/{\hat\beta}),\) where 
\(
p'_k(\alpha,z)=-\sigma'(\alpha_{k+1}-z)+\sigma'(\alpha_k-z), \sigma'(t)=\sigma(t)(1-\sigma(t)),
\)
with \(\hat z=z(Z^l_{new},X_{new};\hat\beta,\hat\gamma)\).
Define the scalar mapping 
\(m(\theta)=PE_{k,j}(\theta)\).  Then
\(
\sqrt{n}\bigl(m(\hat\theta)-m(\theta^*)\bigr)
\xrightarrow{d}
\mathcal N\bigl(0, \nabla m(\theta^*)^\top\,V\,\nabla m(\theta^*)\bigr),
\)
where the gradient \(\nabla m\) is taken w.r.t.\ \(\theta=(\alpha,\beta,\gamma)\).
An approximate \(1-\alpha\) confidence interval for \(PE_{k,j}\) is
\[
PE_{k,j}(\hat\theta)
\pm
z_{1-\alpha/2} 
\sqrt{\frac{1}{n}\nabla m(\hat\theta)^\top \hat V \ 
\nabla m(\hat\theta)},
\]
where \(\hat V\) estimates the asymptotic covariance of \(\hat\theta\) and
\(z_{1-\alpha/2}\) is the standard normal quantile.

\subsection{Proofs}

\subsubsection{Proposition \ref{prop:stage1_clt}}
\begin{proof}[Proof of Proposition \ref{prop:stage1_clt}]
Define, for each $i$,
\[
  g_i(\eta,z_{1:n})
    :=\sum_{k=0}^{K}
        \operatorname{sgn}\bigl(
           p_k(\eta,z_i)-\hat p_{ik,m_n}\bigr)\;
        \nabla_{(\eta,z_{1:n})}p_k(\eta,z_i),
\]
so that
$G_{n,m_n}(\eta,z_{1:n})
   :=\sum_{i=1}^{n}g_i(\eta,z_{1:n})$
is a measurable sub-gradient of $Q_{n,m_n}$. First-order optimality of $(\hat\eta,\hat Z^{l}_{1:n})$ gives $0\in G_{n,m_n}(\hat\eta,\hat Z^{l}_{1:n})$. Write
\[
  r_{ik}(\eta,z_{1:n})
    :=p_k(\eta,z_i)-\hat p_{ik,m_n},
  \qquad
  \varepsilon_{ik}
    :=\hat p_{ik,m_n}-p_{ik}.
\]
By the model assumptions, $r_{ik}(\eta^*,Z^{l,*}_{1:n})=-\varepsilon_{ik}$.
Because $p_k$ is continuous and differentiable, and $p_{ik} = p_k(\eta^*, Z^{l,*}_{1:n})\in(0,1)$ which rules out kinks of the absolute value,
\begin{align*}
    &\operatorname{sgn}(r_{ik}(\eta,z_{1:n}))
     \\
     &\ =\operatorname{sgn}(-\varepsilon_{ik})
      +2f_{ik}(0)\ \nabla_{(\eta,z_{1:n})}p_k(\eta^*,z_i^{*})^{ \top} \left(\begin{array}{c}
          \eta - \eta^* \\
          z_{1:n} - Z^{l,*}_{1:n}
      \end{array}\right)
      +o_p \left(\left\|\left(\begin{array}{c}
          \eta - \eta^* \\
          z_{1:n} - Z^{l,*}_{1:n}
    \end{array}\right)\right\|\right),
\end{align*}
where
$f_{ik}(0)=\sqrt{m_n} \bigl[2\pi p_{ik}(1-p_{ik})\bigr]^{-1/2}$ is the asymptotic density of $\sqrt{m_n} \varepsilon_{ik}$ at $0$.
Setting $(\eta,z_{1:n})=(\hat\eta,\hat Z^{l}_{1:n})$ and summing over $k$ and $i$,
\[
  0
  = S_{n,m_n}
     +A_{n,m_n}\left(\begin{array}{c}
          \hat\eta - \eta^* \\
          \hat Z^l_{1:n} - Z^{l,*}_{1:n}
      \end{array}\right)
     +r_{n,m_n},
\]
where
\[
  S_{n,m_n} : =\sum_{i=1}^{n}\sum_{k=0}^{K}
       \operatorname{sgn}(-\varepsilon_{ik})\,
       \nabla_{(\eta,z_{1:n})}p_k(\eta^*,z_i^{*}),
\] \[
  A_{n,m_n} :=2\sum_{i,k} f_{ik}(0)\ \nabla_{(\eta,z_{1:n})}p_k(\eta^*,Z_i^{l,*}) \nabla_{(\eta,z_{1:n})}p_k(\eta^*,Z_i^{l,*})^{ \top},
\]
and
$$r_{n,m_n}=o_p\left(\left\|
      \left(\begin{array}{c}
          \hat\eta - \eta^* \\
          \hat Z^l_{1:n} - Z^{l,*}_{1:n}
      \end{array}\right)\right\|\right).$$
Since
$\operatorname{sgn}(-\varepsilon_{ik})\in\{-1,1\}$ and $n(K+1)$ is fixed, $S_{n,m_n}=O_p(1)$. For every $i$ and $k$, we have $p_{ik}\in(0,1)$, so %$f_{ik}(0)=\Theta(\sqrt{m_n})$, and 
$A_{n,m_n}=\sqrt{m_n}A_n + o(\sqrt{m_n})$ with $A_n$ in the statement.
Rearranging the expansion,
\[
  \sqrt{m_n}\left(\begin{array}{c}
          \hat\eta - \eta^* \\
          \hat Z^l_{1:n} - Z^{l,*}_{1:n}
      \end{array}\right)
    =-A_n^{-1} S_{n,m_n}+o_p(1).
\]
Because $\sqrt{m_n}\varepsilon_{ik} \xrightarrow{d}
  N \bigl(0,p_{ik}(1-p_{ik})\bigr)$,
the continuous-mapping theorem gives
$\operatorname{sgn}(-\varepsilon_{ik}) \xrightarrow{d}\xi_{ik}$
with $\xi_{ik}\sim\operatorname{Rad}  (1/2),$ the Rademacher distribution, and independence across $(i,k)$. Therefore $S_{n,m_n}\xrightarrow{d}S_n:=\sum_{i=1}^{n}\sum_{k=0}^K
\xi_{ik} \nabla_{(\eta,z_{1:n})}p_k(\eta^*,z_i^{*}),$ $ \xi_{ik}\stackrel{\text{i.i.d.}}{\sim}\operatorname{Rad}(1/2)$ and the continuous mapping theorem yields the desired convergence in distribution to
$-A_n^{-1}S_n$. Therefore, the asymptotic distribution of $\sqrt{m_n}\bigl[(\hat\eta, \hat Z^l_{1:n}) - (\eta^*, Z^{l,*}_{1:n}) \bigr]$ has mean 0 and variance $\Sigma=A_n^{-1}\Var(S_n) A_n^{-1}.$
\end{proof}

\subsubsection{Theorem \ref{thm:beta_gamma_normality}}

Since Theorem~\ref{thm:beta_gamma_normality} can be derived from Theorem~\ref{thm:beta_gamma_normality-ext}, we present the proof of the latter.

\begin{proof}[Proof of Theorem~\ref{thm:beta_gamma_normality-ext}]
The proof basically follows the M-estimation consistency framework~\citep[][chapter 5]{Vaart_1998}. Denote 
$$ \ell_{n,i}(\theta; Z_{i}^l, X_i) =   \sum_{k=0}^K\mathbf1\{Y_i^h=k\}\log l_k(\theta; Z_{i}^l,X_i), \quad
s_i(\theta,Z) = \nabla_\theta \ell_{n,i}(\theta; Z, X_i) .
$$
Let \(S_n(\theta):=n^{-1}\sum_{i=1}^n s_i(\theta, Z_{i}^l)\),
\(\hat S_n(\theta):=n^{-1} \sum_{i=1}^n
s_i(\theta, \hat Z_{i}^l)\),
and similarly define the (negative)
Hessians:
$$
H_n(\theta) = -\frac1n \sum_{i=1}^n \nabla^2_\theta \ell_{n,i}(\theta; Z_{i}^l, X_i), \quad
\hat H_n(\theta)= -\frac1n \sum_{i=1}^n \nabla^2_\theta \ell_{n,i}(\theta; \hat Z_{i}^l, X_i).
$$

When the true values of $\{Z^l_i\}_{i=1}^n$ are known (when the log-probabilities are used), by Assumptions~\ref{A1}--\ref{A4} and a uniform
law of large numbers (ULLN),
\(S_n(\theta)\to\bbE[s_i(\theta)]\) uniformly on
\(\Theta\).  Since the limit has a unique zero at
\(\theta^*\) (Condition \ref{A3}), the arg-max theorem yields
\(\hat\theta_n\xrightarrow{p}\theta^*\).

Using a mean-value expansion around \(\theta_0\),
\[
  0
  =\sqrt n\,S_n(\hat\theta_n)
  =\sqrt n\,S_n(\theta^*)
    -H_n(\bar\theta_n)\,\sqrt n\bigl(\hat\theta_n-\theta^*\bigr),
\]
where \(\bar\theta_n\) lies on the segment
\([\theta^*,\hat\theta_n]\).
By Conditions \ref{A3}--\ref{A4},
\(H_n(\bar\theta_n)\xrightarrow{p}\mathcal I (\theta^*)\).
By CLT for the score,
\( \sqrt n\,S_n(\theta_0)\xrightarrow{d}
     \cN(0,\Var[s_i(\theta_0)])=
     \cN(0,\mathcal I).\)
Slutsky’s theorem gives
\[
  \sqrt n\bigl(\hat\theta_n-\theta_0\bigr)
   = H_n(\bar\theta_n)^{-1}\,\sqrt n\,S_n(\theta_0)
   \ \xrightarrow{d} \ \mathcal I(\theta^*)^{-1/2}\,\cN(0,I) = \cN(0,\mathcal I(\theta^*)^{-1}),
\]
establishing part (a).

When latent scores $\{Z^l_i\}_{i=1}^n$ are estimated by the CoT prompting strategy, by Conditions~\ref{A3} and \ref{A4},
$\ell_i(\theta;Z^l,X)$ is jointly continuous in $(\theta,Z^l)$ and bounded by an integrable envelope.  
By Proposition \ref{prop:stage1_clt}, $\sup_{1\le i\le n}\|\hat Z^l_i- Z^l_i\| =O_p \bigl(m_n^{-1/2}\bigr) =o_p(1)$,
and we have
\[
  \sup_{\theta\in\Theta}
  \left|\frac1n \sum_{i=1}^n
     \ell_{n,i}(\theta;\hat Z^l_i,X_i) -\frac1n \sum_{i=1}^n \ell_{n,i}(\theta; Z^l_i,X_i)\right| =o_p(1).
\]
Combining this with the uniform law of large numbers for the
known regressors $(Z^l_i,X_i)$,
\[
  \sup_{\theta\in\Theta} \left|\frac1n\!\sum_{i=1}^n \ell_{n,i}(\theta;Z^l_i,X_i)-\bbE[\ell_{n,i}(\theta;Z^l_i,X_i)]\right|
  \;\xrightarrow{p}\;0,
\]
we obtain
\(
  \sup_{\theta\in\Theta}| n^{-1}\sum_{i=1}^n\ell_{n,i}(\theta;\hat Z^l_i,X_i) -\bbE [\ell_{n,i}(\theta; Z^l_i,X_i)]|
  \xrightarrow{p} 0.
\)
Hence 
\begin{equation}
    \hat\theta_n \xrightarrow{p}\theta^*
    \label{eq:thm-proof-ulln}
\end{equation} by the arg-max theorem
and uniqueness of $\theta^*$.
For every $i$, by the mean-value theorem in variable $Z^l$,
\[
  s_i\bigl(\theta^*,\hat Z^l_i\bigr) - s_i\bigl(\theta^*, Z^l_i\bigr) =\nabla_{Z} s_i\bigl(\theta^*,\tilde Z^l_i\bigr)^\top \bigl(\hat Z^l_i- Z^l_i\bigr)
\]
for some $\tilde Z^l_i$ on the segment $[Z^l_i,\hat Z^l_i]$.  Summing over $i$ and dividing by $n$ gives
\begin{align}
  \hat S_n(\theta^*)-S_n(\theta^*)
  &=\frac1n\sum_{i=1}^n \nabla_{Z} s_i\bigl(\theta^*,\tilde Z^l_i\bigr)^\top \bigl(\hat Z^l_i- Z^l_i\bigr).
  \label{eq:score_decomp}
\end{align}
By Condition~\ref{A3},
\(
  \sup_{w} \|\nabla_Z s_i(\theta^*,Z)\|\le C_i
\)
for some integrable $C_i$ (dominated convergence applies).
Since $\|\tilde Z^l_i-Z^l_i\|\le\|\hat Z^l_i-Z^l_i\|=O_p(m_n^{-1/2})$,
\[
  \frac1n\sum_{i=1}^n
    \bigl\| \nabla_Z s_i(\theta^*,\tilde Z^l_i) -\nabla_Z s_i(\theta^*, Z^l_i)
    \bigr\|
    =O_p \bigl(m_n^{-1/2}\bigr)
    =o_p \bigl(n^{-1/2}\bigr),
\]
because \(n/m_n\to c<\infty\).  Hence one can replace
$\nabla_w s_i(\theta^*,\tilde Z^l_i)$ by
$\nabla_w s_i(\theta^*, Z^l_i)$ in~\eqref{eq:score_decomp}
at the cost of an $o_p(n^{-1/2})$ term.  Define therefore
\[
  \Delta_n
  :=\frac1n\sum_{i=1}^n \nabla_Z s_i(\theta^*, Z^l_i)^\top (\hat Z^l_i -Z^l_i),
  \quad
  \text{so that }\;
  \hat S_n(\theta^*)=S_n(\theta^*)+\Delta_n+o_p(n^{-1/2}).
  %\tag{\theequation$'$} \label{eq:hatS_split}
\]
By Proposition~\ref{prop:stage1_clt}, there exist i.i.d.\ random vectors $e_{i,m}$ with mean 0 and variance $I$ such that
\[
  \hat Z^l_i- Z^l_i =\frac1{\sqrt{m_n}}\Sigma^{1/2} e_{i,m}+r_{i,n},
  \quad 
  \sup_{1\le i\le n}\|r_{i,n}\|=o_p \bigl(m_n^{-1/2}\bigr).
\]
Consequently,
\begin{equation}
    \Delta_n =\frac1{n\sqrt{m_n}}\sum_{i=1}^n \nabla_Z s_i(\theta^*, Z^l_i)^\top \Sigma^{1/2}e_{i,m} +o_p \bigl(n^{-1/2}\bigr).
  %\tag{\theequation$'$} 
  \label{eq:Delta_decomp}
\end{equation}
By the Lindeberg-Feller CLT and Condition~\ref{A4},
\[
  \sqrt{n}\,S_n(\theta^*)
   \xrightarrow{d}
   \cN \bigl(0, \Var[s_i(\theta^*, Z^{l,*}_i)]\bigr).
\]
Rewrite the leading term of~\eqref{eq:Delta_decomp} as
\[
  \sqrt{\frac{n}{m_n}}
  \left\{\frac1n\sum_{i=1}^n
\nabla_Z s_i(\theta^*, Z^l_i)^\top \Sigma^{1/2}e_{i,m}\right\}.
\]
As $e_{i,m}$ are i.i.d.\ and independent of
$(Y_i^h,X_i,Z^l_i)$, the inner average has mean
$G \Sigma^{1/2} \,n^{-1}\sum e_{i,m}=0$ and
variance
\(
  n^{-1} G \Sigma G^\top.
\)
Therefore, conditional on the data,
\[
  \sqrt n\,\Delta_n
    \xrightarrow{d}
  \sqrt c\,\cN \bigl(0,\ 
     G \Sigma G^\top\bigr),
\]
and this limit is independent of $\sqrt n S_n(\theta^*)$
by construction.
Adding the two independent Gaussian limits yields
\[
  \sqrt n\,\hat S_n(\theta^*)
  \xrightarrow{d}
  \cN \bigl(0,U\bigr),
  \quad
  U:=\Var[s_i(\theta^*, w_i)] +c\,G   \Sigma G^\top.
\]
With the same arguments using the uniform LLN as in \eqref{eq:thm-proof-ulln},  
\[
  \hat H_n(\theta) =\frac1n\sum_{i=1}^n -\nabla^2_{\theta} \ell_{n,i} \bigl(\theta; \hat Z^l_i,X_i\bigr)
  \ \xrightarrow{p}\ 
  \cI(\theta^*)
\]
uniformly on a neighborhood of $\theta^*$.
In particular, $\hat H_n(\bar\theta_n)^{-1}\xrightarrow{p} \cI(\theta^*)^{-1}$
for any random $\bar\theta_n$ between $\hat\theta_n$ and $\theta^*$.
Next, using a mean-value expansion of the first‐order
condition $\hat S_n(\hat\theta_n)=0$,
\[
  0 =\sqrt n\,\hat S_n(\theta^*)
   -\hat H_n(\bar\theta_n)\,
     \sqrt n(\hat\theta_n-\theta^*),
\]
whence
\(
  \sqrt n(\hat\theta_n-\theta^*)
   =\hat H_n(\bar\theta_n)^{-1} \sqrt n \hat S_n(\theta^*).
\)
Combine the convergence of $\hat H_n(\bar\theta_n)^{-1}$ with the Gaussian limit of $\sqrt n \hat S_n(\theta^*)$,
\[
  \sqrt n(\hat\theta_n - \theta^*)
   \ \xrightarrow{d} \ 
  \mathcal I(\theta^*)^{-1}\,
     \cN(0,\Sigma)
   =\cN \bigl(0, \ \cI(\theta^*)^{-1}U  \cI(\theta^*)^{-1}\bigr).
\]
Selecting the $(\beta,\gamma)$ block completes the proof of part (b).

\end{proof}

\section{Covariates}\label{append:covs}

We construct two types of covariates from each LLM-generated response.
The first type consists of LLM-scored covariates, derived using GPT-4o-mini based evaluations via prompts described in Appendix \ref{append:prompt_covariate}. These are numeric ratings ranging from 0 to 5 and include aspects including coherence, factuality, clarity, conciseness, creativity, consistency, engagement, fluency, appropriateness, and sentiment. Each of these captures a subjective quality of the response, as detailed in Table \ref{tab:llm-scored_features}. To obtain these scores,  we extract the top 20 most probable responses along with their log-probabilities from the model output. Then, calculate a probability-weighted average of the probable scores to gain a stable response.

The second type comprises automatically computed lightweight covariates, which are extracted using rule-based or statistical methods without human prompting or query LLMs. These features cover five broad dimensions of a response: length (e.g., word and sentence counts), readability (e.g., Flesch Reading Ease score), style and formatting (e.g., paragraph structure, use of markdown), sentiment (e.g., TextBlob and VADER sentiment scores), and discourse markers counts (e.g., counts of additive or causal connectors). Table \ref{tab:auto_features} provides variable names and descriptions for each of these lightweight covariates.
Some of the constructed covariates exhibit high correlation with others. We address this using clustering methods, as discussed in the following section.
\begin{table}[H]
\centering
\caption{LLM-scored covariates}
\label{tab:llm-scored_features}
\rowcolors{2}{white}{gray!10}
\begin{tabular}{l p{0.66\linewidth}}
\toprule
\textbf{Variable} & \textbf{Description} \\
\midrule
coherence & how coherent the response is \\
factuality & how factually accurate the response is \\
clarity & how clear the language of the response is \\
conciseness & how concise the response is \\
creativiey & how creative and original the response is \\
consistency & how consistent the style and content the response is\\
engagement & how engaging the response is to the reader \\
fluency & how fluent the response reads \\
appropriateness & how appropriate the tone and style of the response is for the intended content \\
sentiment & how positive the overall emotional tone of the response is\\
\bottomrule
\end{tabular}
\end{table}

\begin{table}[H]
\centering
\caption{Automatic lightweight covariates}
\label{tab:auto_features}
\rowcolors{2}{white}{gray!10}
\begin{tabular}{l p{0.66\linewidth}}
\toprule
\textbf{Variable} & \textbf{Description} \\
\midrule
\multicolumn{2}{l}{\emph{Length and Tokenization}} \\
response\_word\_count           & Total number of word tokens in the response. \\
response\_char\_count           & Total number of characters (including spaces and punctuation). \\
relative\_char\_count           & Average characters per token  (\texttt{response\_char\_count / response\_word\_count}). \\
response\_sentence\_count       & Total number of detected sentences. \\
response\_avg\_sentence\_length & Average tokens per sentence (\texttt{response\_word\_count / response\_sentence\_count}). \\[3pt]

\multicolumn{2}{l}{\emph{Readability and Lexical Diversity}} \\
flesch\_reading\_ease           & Flesch Reading Ease score (higher = easier to read)~\citep{flesch1948new}. \\
flesch\_kincaid\_grade          & Flesch-Kincaid grade‐level estimate~\citep{kincaid1975derivation}. \\
gunning\_fog                    & Gunning Fog index (years of education needed)~\citep{gunning1952technique}. \\
smog                            & SMOG index emphasizing polysyllabic words~\citep{mc1969smog}. \\
lexical\_diversity              & Number of unique word types~\citep[e.g.,][]{johnson1944studies}. \\
relative\_lexical\_diversity    & Type-token ratio \cite{johnson1944studies, malvern1997new} (\texttt{lexical\_diversity / response\_word\_count}). \\[3pt]

\multicolumn{2}{l}{\emph{Style and Formatting}} \\
exclamation\_count              & Number of “!” characters. \\
relative\_exclamation\_count    & Exclamation marks per token. \\
question\_count                 & Number of “?” characters. \\
relative\_question\_count       & Question marks per token. \\
paragraph\_count                & Number of paragraphs (non‑empty newline‐separated blocks). \\
avg\_paragraph\_length          & Average tokens per paragraph. \\
relative\_paragraph\_count      & Paragraphs per token. \\
linebreak\_count                & Number of “\textbackslash n” newline characters. \\
relative\_linebreak\_count      & Newlines per token. \\
contains\_code\_block           & 1 if a Markdown code block (```...''') is present, else 0. \\
header\_count                   & Count of Markdown headers (lines starting with “\#”). \\
relative\_header\_count         & Headers per token. \\
bold\_count                     & Number of bold words (**bold** or \_\_bold\_\_). \\
relative\_bold\_count           & Bold words per token. \\
italic\_count                   & Number of italicized words (*italic* or \_italic\_, excluding bold). \\
relative\_italic\_count         & Italic words per token. \\
list\_count                     & Count of list items (-, *, or numbered markers). \\
relative\_list\_count           & List items per token. \\[3pt]

\multicolumn{2}{l}{\emph{Sentiment and Tonal Analysis}} \\
sentiment\_polarity             & TextBlob polarity score (-1 to 1)~\citep{de2012pattern, Loria2025textblob}. \\
sentiment\_subjectivity         & TextBlob subjectivity score (0 to 1)~\citep{de2012pattern, Loria2025textblob}. \\
sentiment\_scores\_comp         & VADER compound sentiment score~\citep{hutto2014vader}. \\
sentiment\_scores\_pos          & VADER positive‐sentiment proportion~\citep{hutto2014vader}. \\[3pt]

\multicolumn{2}{l}{\emph{Discourse and Coherence}} \\
discourse\_mk\_add              & Count of additive discourse markers (e.g., \emph{furthermore}). \\
discourse\_mk\_con              & Count of contrastive markers (e.g., \emph{however}). \\
discourse\_mk\_cau              & Count of causal markers (e.g., \emph{therefore}). \\
discourse\_mk\_ex               & Count of example markers (e.g., \emph{for example}). \\
discourse\_mk\_sum              & Count of summarizing markers (e.g., \emph{overall}). \\
\bottomrule
\end{tabular}
\end{table}

\subsection{Clustering covariates}

Algorithm \ref{alg:cluster_features} compresses a potentially large and collinear covariate set into smaller covariate groups that are less correlated. First, all columns of the design matrix $F$ are z-standardised. The dissimilarity matrix $D=1-\operatorname{corr}(F^\top)$ is then subjected to a top-down complete linkage agglomerative procedure: Starting with clusters $p\!-\!1$ and progressively merging features, we repeatedly (i) partition the covariates, (ii) extract the first principal component from each cluster, thus summarizing its shared signal in a single latent factor, and (iii) check whether any pair of resulting factors still exhibits an absolute correlation above the user-specified threshold $\tau$. The loop terminates at the coarsest partition whose principal components are mutually ``sufficiently uncorrelated,'' ensuring that the final matrix $\widehat{F}$ contains compact, approximately independent summaries of the original features. A final z-score standardisation puts these latent factors on a common scale, making them immediately suitable for downstream modelling or inference.

In Tables \ref{tab:cluster_desc1} and \ref{tab:cluster_desc2}, we describe the resulting covariate clusters for BGB and CA based on the initial covariates described in Tables \ref{tab:llm-scored_features} and \ref{tab:auto_features}.

\begin{algorithm}[H]
\caption{Covariate clustering}
\label{alg:cluster_features}
\begin{algorithmic}[1]
\Require Covariate matrix \(F\in\mathbb{R}^{n\times p}\); correlation threshold \(\tau\) (default \(0.7\))

\State Standardise \(F\) column-wise using z-scores.
\State \(D \leftarrow 1-\operatorname{corr}(F^\top)\) \Comment{feature-wise dissimilarity matrix}
\vspace{0.2em}

\For{$k \gets p-1,\,p-2,\,\dots,\,1$}                                \Comment{top-down search}
    \State Apply complete-linkage agglomerative clustering on \(D\) to obtain \(k\) clusters with labels \(\ell_1,\dots,\ell_p\).
    \State Initialise \(\widehat{F} \gets [\,]\).
    \ForAll{cluster \(c\in\{1,\dots,k\}\)}                              \Comment{per-cluster PCA}
        \State Fit PCA on the training columns with label \(c\); keep first principal component \(\mathbf z^{(c)}\).
        \State Append \(\mathbf z^{(c)}\) to \(\widehat{F}\).
    \EndFor
    \State $\Sigma \leftarrow \operatorname{corr}(\widehat{F}^\top)$; set $\Sigma_{ii}\gets -99$ for all \(i\).
    \If{$\max(\Sigma) < \tau$}      \Comment{stopping rule}
        \State \textbf{break}
    \EndIf
\EndFor
\vspace{0.2em}
\State Standardise  \(\widehat{F}\) column-wise using z-scores.
\State \Return \(\widehat{F}\).
\end{algorithmic}
\end{algorithm}

\begin{table}[H]
\small
\centering
\caption{Cluster definitions for BigGen Bench}
\label{tab:cluster_desc1}
\rowcolors{2}{gray!10}{white}
\begin{tabular}{l p{0.72\linewidth}}
\toprule
Cluster & Description \\ 
\midrule
Writing Quality      & Composite rating combining coherence, clarity, consistency, fluency, and appropriateness—broadly reflecting how well-structured, clear, and appropriate a response is ('coherence', 'clarity', 'consistency', 'fluency', 'appropriateness'). \\
Text Length          & Basic length and diversity features, including word, character, and sentence counts, as well as the count of unique word types ('response\_word\_count', 'response\_char\_count', 'response\_sentence\_count', 'lexical\_diversity'). \\
Italics              & Frequency of italicized text in markdown, measured as both the number of italic words and as a proportion of tokens ('italic\_count', 'relative\_italic\_count'). \\
Bold Text            & Usage of bold markdown formatting, captured as both total count and per-token frequency ('bold\_count', 'relative\_bold\_count'). \\
Lists                & Frequency and density of list items, including both the absolute number and relative count per token ('list\_count', 'relative\_list\_count'). \\
Headers              & Markdown header usage, including total header count and density per token ('header\_count', 'relative\_header\_count'). \\
Creativity/Engagement & Ratings for creativity and reader engagement, reflecting how original and captivating the response is ('creativity', 'engagement'). \\
Positive Sentiment   & Proportion of content flagged as positive sentiment by VADER ('sentiment\_scores\_pos'). \\
Conciseness          & Evaluation of how brief and to-the-point the response is ('conciseness'). \\
Contrast Markers     & Frequency of contrastive discourse cues such as “however” and “yet” ('discourse\_mk\_con'). \\
Layout Density       & Density of structural elements, capturing relative number of paragraphs and line breaks per token ('relative\_paragraph\_count', 'relative\_linebreak\_count'). \\
Causal Markers       & Frequency of causal discourse signals like “thus” or “therefore” ('discourse\_mk\_cau'). \\
Structure Counts     & Raw counts of paragraphs and explicit line breaks ('paragraph\_count', 'linebreak\_count'). \\
Sentiment            & Overall sentiment rating, typically on a 0–5 scale ('sentiment'). \\
Readability Grade    & Grade-level readability indices, such as Flesch-Kincaid, Gunning Fog, and SMOG, that estimate years of education needed ('flesch\_kincaid\_grade', 'gunning\_fog', 'smog'). \\
Exclamation Density  & Exclamation marks per token, reflecting the relative use of “!” for emphasis ('relative\_exclamation\_count'). \\
Readability Ease     & Flesch Reading Ease score, with higher values indicating easier reading ('flesch\_reading\_ease'). \\
Polarity             & Sentiment polarity from TextBlob, ranging from –1 (negative) to +1 (positive) ('sentiment\_polarity'). \\
Question Density     & Relative frequency of question marks, indicating tendency to ask questions ('relative\_question\_count'). \\
Paragraph Length     & Average number of tokens per paragraph, reflecting structural granularity ('avg\_paragraph\_length'). \\
Code Block           & Binary indicator for the presence of markdown code blocks (```...```) ('contains\_code\_block'). \\
Additive Markers     & Frequency of additive discourse markers such as “furthermore” or “moreover” ('discourse\_mk\_add'). \\
Summary Markers      & Frequency of summarizing discourse cues such as “overall” or “in conclusion” ('discourse\_mk\_sum'). \\
Character Density    & Average number of characters per word token ('relative\_char\_count'). \\
Exclamation Count    & Total number of exclamation marks in the response ('exclamation\_count'). \\
Lexical Ratio        & Relative lexical diversity, measured as the type-token ratio ('relative\_lexical\_diversity'). \\
Subjectivity         & Degree of subjectivity (0 = objective, 1 = subjective) as measured by sentiment analysis ('sentiment\_subjectivity'). \\
Sentence Length      & Average tokens per sentence, reflecting sentence complexity ('response\_avg\_sentence\_length'). \\
Example Markers      & Frequency of example-giving discourse markers such as “for example” ('discourse\_mk\_ex'). \\
Compound Sentiment   & VADER compound sentiment score summarizing overall tone ('sentiment\_scores\_comp'). \\
Question Count       & Absolute number of question marks ('question\_count'). \\
\bottomrule
\end{tabular}
\end{table}

\begin{table}[H]
\small
\centering
\caption{Cluster definitions for Chatbot Arena}
\label{tab:cluster_desc2}
\rowcolors{2}{gray!10}{white}
\begin{tabular}{l p{0.72\linewidth}}
\toprule
Cluster & Description \\ 
\midrule
Text Length        & Core size measures of the response—word, character, and sentence counts—plus raw lexical diversity (\texttt{response\_word\_count}, \texttt{response\_char\_count}, \texttt{response\_sentence\_count}, \texttt{lexical\_diversity}). \\
Creativity/Engagement & Ratings capturing originality and reader engagement (\texttt{creativity}, \texttt{engagement}). \\
Readability Grade  & Grade-level readability indices, including Flesch-Kincaid, Gunning Fog, and SMOG (\texttt{flesch\_kincaid\_grade}, \texttt{gunning\_fog}, \texttt{smog}). \\
Consistency        & Measures how consistent the style and content of the response is (\texttt{consistency}). \\
Bold Text          & Use of bold markdown formatting, both absolute and relative (\texttt{bold\_count}, \texttt{relative\_bold\_count}). \\
Causal Markers     & Frequency of causal discourse cues (e.g., “thus”, “therefore”) (\texttt{discourse\_mk\_cau}). \\
Language Quality   & Ratings related to clarity, fluency, and appropriateness of the response (\texttt{clarity}, \texttt{fluency}, \texttt{appropriateness}). \\
Example Markers    & Frequency of example cues (e.g., “for example”, “for instance”) (\texttt{discourse\_mk\_ex}). \\
Conciseness        & Assessment of brevity and avoidance of unnecessary verbosity (\texttt{conciseness}). \\
Structure Counts   & Absolute counts of paragraphs and explicit line breaks (\texttt{paragraph\_count}, \texttt{linebreak\_count}). \\
Additive Markers   & Frequency of additive discourse cues (e.g., “furthermore”, “moreover”) (\texttt{discourse\_mk\_add}). \\
Polarity           & Polarity score from sentiment analysis (TextBlob; -1 to 1) (\texttt{sentiment\_polarity}). \\
Contrast Markers   & Frequency of contrastive discourse cues (e.g., “however”, “yet”) (\texttt{discourse\_mk\_con}). \\
Sentiment          & Overall sentiment score (\texttt{sentiment}). \\
Coherence          & Measures how coherent the response is (\texttt{coherence}). \\
Linebreak Density  & Relative number of newlines per token (\texttt{relative\_linebreak\_count}). \\
Lists              & Absolute count of list items (\texttt{list\_count}). \\
Subjectivity       & Degree of subjectivity in sentiment analysis (0 = objective, 1 = subjective) (\texttt{sentiment\_subjectivity}). \\
Readability Ease   & Flesch Reading Ease score (higher values = easier to read) (\texttt{flesch\_reading\_ease}). \\
Paragraph Length   & Average number of tokens per paragraph (\texttt{avg\_paragraph\_length}). \\
Code Block         & Binary indicator for the presence of fenced code blocks (\texttt{contains\_code\_block}). \\
Question Density   & Question marks per token (relative measure) (\texttt{relative\_question\_count}). \\
List Density       & List items per token (\texttt{relative\_list\_count}). \\
Exclamation Count  & Total number of exclamation marks (\texttt{exclamation\_count}). \\
Paragraph Density  & Relative number of paragraphs per token (\texttt{relative\_paragraph\_count}). \\
Character Density  & Average characters per token (\texttt{relative\_char\_count}). \\
Count Italic       & Absolute number of italicized words (\texttt{italic\_count}). \\
Question Count     & Absolute number of question marks (\texttt{question\_count}). \\
Positive Sentiment & Proportion of text flagged as positive by VADER (\texttt{sentiment\_scores\_pos}). \\
Compound Sentiment & VADER compound sentiment score summarizing overall tone (\texttt{sentiment\_scores\_comp}). \\
Summary Markers    & Frequency of summarizing discourse cues (\texttt{discourse\_mk\_sum}). \\
Lexical Ratio      & Relative lexical diversity; type-token ratio (\texttt{relative\_lexical\_diversity}). \\
Relative Italic    & Italicized words per token (\texttt{relative\_italic\_count}). \\
Exclamation Density & Exclamation marks per token (\texttt{relative\_exclamation\_count}). \\
Header Density     & Headers per token (\texttt{relative\_header\_count}). \\
Headers            & Absolute count of Markdown headers (\texttt{header\_count}). \\
Sentence Length    & Average tokens per sentence (\texttt{response\_avg\_sentence\_length}). \\
\bottomrule
\end{tabular}
\end{table}

\section{Model extensions}\label{append:ext}
%%%
\subsection{Modeling the expected judgement}
This subsection is especially useful when $Y^h$ and $Y^l$ are ordinal with many possible values or continuous (\ie, could possibly assume any value inside a closed interval). Let us take $Y^h,Y^l\in [0,K]$. We assume
\begin{align*}
 &\Ex[Y^h\mid I,O] = K\sigma(Z^h)\text{ and }\Ex[Y^l\mid I,O] = K\sigma\left(Z^l\right)\text{ where }Z^l\triangleq \alpha+\beta Z^h+\gamma^\top X.
\end{align*}

In this setup, $\Ex[Y^l\mid I,O]$ can be computed exactly using the log probabilities or estimated using sampling. Take $Z^l=\sigma^{-1}(\Ex[Y^l\mid I,O]/K)$. Then,
\[
\Ex[Y^h\mid I,O] = K\sigma\left(-(1/\beta)\alpha +(1/\beta) Z^l - (1/\beta)\gamma^\top X\right),
\]
and, equivalently,
\[
Y^h = K\sigma\left(-(1/\beta)\alpha +(1/\beta) Z^l - (1/\beta)\gamma^\top X\right)+\varepsilon.
\]
For an error term $\varepsilon$ with $\Ex[\varepsilon\mid I,O] =0$. The model can be fitted using non-linear least squares using human labels. The asymptotic distribution of the estimators could be derived using the theory of M-estimation; the bootstrap could also be used for approximating it.

%%%
\subsection{Categorical/Multinomial judgements}
An extension of the ordinal model assumes $Y^h,Y^l\in\{0,\cdots,K\}$, but where no ordering is needed; we rely on the multinomial logistic regression model~\citep{wooldridge2010econometric}. This model can still be used in the case of ordered outputs, but it is harder to interpret. For this formulation, we assume
\[
\Pr(Y^h=k\mid I,O)=\left\{\begin{array}{lr}
\frac{1}{1+\sum_{j=1}^K\exp({Z^h}^{(j)})} \text{ if }k=0, \\[2pt]
\frac{\exp({Z^h}^{(k)})}{1+\sum_{j=1}^K\exp({Z^h}^{(j)})} \text{ if }1\leq k\leq K,
\end{array}\right.
\]
where the class $0$ is the base class; in practice, the practitioner could use any other class as the base one. As before, we assume a connection between human and LLM judgements; in this case, we have
\[
{Z^l}^{(j)}\triangleq \alpha_j+\beta_j {Z^h}^{(j)}+\gamma_j^\top X.
\]
In this formulation, we can obtain ${Z^l}^{(j)}$ by using the formula
\[
{Z^l}^{(j)} = \log\frac{\Pr(Y^l=k\mid I,O)}{\Pr(Y^l=0\mid I,O)},
\]
and then fit the model, using human labels, by expressing ${Z^h}^{(j)}$ in terms of ${Z^l}^{(j)}$ and maximizing the loglikelihood with respect to the model parameters. The asymptotic distribution of the estimators could be derived using the theory of maximum likelihood estimation; the bootstrap could also be used for approximating it.

\newpage
\newpage
\section*{NeurIPS Paper Checklist}

\begin{enumerate}

\item {\bf Claims}
    \item[] Question: Do the main claims made in the abstract and introduction accurately reflect the paper's contributions and scope?
    \item[] Answer: \answerYes{} % Replace by \answerYes{}, \answerNo{}, or \answerNA{}.
    \item[] Justification: we deliver everything we promise
    \item[] Guidelines:
    \begin{itemize}
        \item The answer NA means that the abstract and introduction do not include the claims made in the paper.
        \item The abstract and/or introduction should clearly state the claims made, including the contributions made in the paper and important assumptions and limitations. A No or NA answer to this question will not be perceived well by the reviewers. 
        \item The claims made should match theoretical and experimental results, and reflect how much the results can be expected to generalize to other settings. 
        \item It is fine to include aspirational goals as motivation as long as it is clear that these goals are not attained by the paper. 
    \end{itemize}

\item {\bf Limitations}
    \item[] Question: Does the paper discuss the limitations of the work performed by the authors?
    \item[] Answer: \answerYes{} % Replace by \answerYes{}, \answerNo{}, or \answerNA{}.
    \item[] Justification: in the conclusion section.
    \item[] Guidelines:
    \begin{itemize}
        \item The answer NA means that the paper has no limitation while the answer No means that the paper has limitations, but those are not discussed in the paper. 
        \item The authors are encouraged to create a separate "Limitations" section in their paper.
        \item The paper should point out any strong assumptions and how robust the results are to violations of these assumptions (e.g., independence assumptions, noiseless settings, model well-specification, asymptotic approximations only holding locally). The authors should reflect on how these assumptions might be violated in practice and what the implications would be.
        \item The authors should reflect on the scope of the claims made, e.g., if the approach was only tested on a few datasets or with a few runs. In general, empirical results often depend on implicit assumptions, which should be articulated.
        \item The authors should reflect on the factors that influence the performance of the approach. For example, a facial recognition algorithm may perform poorly when image resolution is low or images are taken in low lighting. Or a speech-to-text system might not be used reliably to provide closed captions for online lectures because it fails to handle technical jargon.
        \item The authors should discuss the computational efficiency of the proposed algorithms and how they scale with dataset size.
        \item If applicable, the authors should discuss possible limitations of their approach to address problems of privacy and fairness.
        \item While the authors might fear that complete honesty about limitations might be used by reviewers as grounds for rejection, a worse outcome might be that reviewers discover limitations that aren't acknowledged in the paper. The authors should use their best judgment and recognize that individual actions in favor of transparency play an important role in developing norms that preserve the integrity of the community. Reviewers will be specifically instructed to not penalize honesty concerning limitations.
    \end{itemize}

\item {\bf Theory assumptions and proofs}
    \item[] Question: For each theoretical result, does the paper provide the full set of assumptions and a complete (and correct) proof?
    \item[] Answer: \answerYes{} % Replace by \answerYes{}, \answerNo{}, or \answerNA{}.
    \item[] Justification: we detail everything in the appendix.
    \item[] Guidelines:
    \begin{itemize}
        \item The answer NA means that the paper does not include theoretical results. 
        \item All the theorems, formulas, and proofs in the paper should be numbered and cross-referenced.
        \item All assumptions should be clearly stated or referenced in the statement of any theorems.
        \item The proofs can either appear in the main paper or the supplemental material, but if they appear in the supplemental material, the authors are encouraged to provide a short proof sketch to provide intuition. 
        \item Inversely, any informal proof provided in the core of the paper should be complemented by formal proofs provided in appendix or supplemental material.
        \item Theorems and Lemmas that the proof relies upon should be properly referenced. 
    \end{itemize}

    \item {\bf Experimental result reproducibility}
    \item[] Question: Does the paper fully disclose all the information needed to reproduce the main experimental results of the paper to the extent that it affects the main claims and/or conclusions of the paper (regardless of whether the code and data are provided or not)?
    \item[] Answer: \answerYes{} % Replace by \answerYes{}, \answerNo{}, or \answerNA{}.
    \item[] Justification: we include all the info in the main paper or appendix.
    \item[] Guidelines:
    \begin{itemize}
        \item The answer NA means that the paper does not include experiments.
        \item If the paper includes experiments, a No answer to this question will not be perceived well by the reviewers: Making the paper reproducible is important, regardless of whether the code and data are provided or not.
        \item If the contribution is a dataset and/or model, the authors should describe the steps taken to make their results reproducible or verifiable. 
        \item Depending on the contribution, reproducibility can be accomplished in various ways. For example, if the contribution is a novel architecture, describing the architecture fully might suffice, or if the contribution is a specific model and empirical evaluation, it may be necessary to either make it possible for others to replicate the model with the same dataset, or provide access to the model. In general. releasing code and data is often one good way to accomplish this, but reproducibility can also be provided via detailed instructions for how to replicate the results, access to a hosted model (e.g., in the case of a large language model), releasing of a model checkpoint, or other means that are appropriate to the research performed.
        \item While NeurIPS does not require releasing code, the conference does require all submissions to provide some reasonable avenue for reproducibility, which may depend on the nature of the contribution. For example
        \begin{enumerate}
            \item If the contribution is primarily a new algorithm, the paper should make it clear how to reproduce that algorithm.
            \item If the contribution is primarily a new model architecture, the paper should describe the architecture clearly and fully.
            \item If the contribution is a new model (e.g., a large language model), then there should either be a way to access this model for reproducing the results or a way to reproduce the model (e.g., with an open-source dataset or instructions for how to construct the dataset).
            \item We recognize that reproducibility may be tricky in some cases, in which case authors are welcome to describe the particular way they provide for reproducibility. In the case of closed-source models, it may be that access to the model is limited in some way (e.g., to registered users), but it should be possible for other researchers to have some path to reproducing or verifying the results.
        \end{enumerate}
    \end{itemize}

\item {\bf Open access to data and code}
    \item[] Question: Does the paper provide open access to the data and code, with sufficient instructions to faithfully reproduce the main experimental results, as described in supplemental material?
    \item[] Answer: \answerYes{} % Replace by \answerYes{}, \answerNo{}, or \answerNA{}.
    \item[] Justification: we make everything public.
    \item[] Guidelines:
    \begin{itemize}
        \item The answer NA means that paper does not include experiments requiring code.
        \item Please see the NeurIPS code and data submission guidelines (\url{https://nips.cc/public/guides/CodeSubmissionPolicy}) for more details.
        \item While we encourage the release of code and data, we understand that this might not be possible, so “No” is an acceptable answer. Papers cannot be rejected simply for not including code, unless this is central to the contribution (e.g., for a new open-source benchmark).
        \item The instructions should contain the exact command and environment needed to run to reproduce the results. See the NeurIPS code and data submission guidelines (\url{https://nips.cc/public/guides/CodeSubmissionPolicy}) for more details.
        \item The authors should provide instructions on data access and preparation, including how to access the raw data, preprocessed data, intermediate data, and generated data, etc.
        \item The authors should provide scripts to reproduce all experimental results for the new proposed method and baselines. If only a subset of experiments are reproducible, they should state which ones are omitted from the script and why.
        \item At submission time, to preserve anonymity, the authors should release anonymized versions (if applicable).
        \item Providing as much information as possible in supplemental material (appended to the paper) is recommended, but including URLs to data and code is permitted.
    \end{itemize}

\item {\bf Experimental setting/details}
    \item[] Question: Does the paper specify all the training and test details (e.g., data splits, hyperparameters, how they were chosen, type of optimizer, etc.) necessary to understand the results?
    \item[] Answer: \answerYes{} % Replace by \answerYes{}, \answerNo{}, or \answerNA{}.
    \item[] Justification: in the main paper and appendix.
    \item[] Guidelines:
    \begin{itemize}
        \item The answer NA means that the paper does not include experiments.
        \item The experimental setting should be presented in the core of the paper to a level of detail that is necessary to appreciate the results and make sense of them.
        \item The full details can be provided either with the code, in appendix, or as supplemental material.
    \end{itemize}

\item {\bf Experiment statistical significance}
    \item[] Question: Does the paper report error bars suitably and correctly defined or other appropriate information about the statistical significance of the experiments?
    \item[] Answer: \answerYes{} % Replace by \answerYes{}, \answerNo{}, or \answerNA{}.
    \item[] Justification: we report error bard and significance tests.
    \item[] Guidelines:
    \begin{itemize}
        \item The answer NA means that the paper does not include experiments.
        \item The authors should answer "Yes" if the results are accompanied by error bars, confidence intervals, or statistical significance tests, at least for the experiments that support the main claims of the paper.
        \item The factors of variability that the error bars are capturing should be clearly stated (for example, train/test split, initialization, random drawing of some parameter, or overall run with given experimental conditions).
        \item The method for calculating the error bars should be explained (closed form formula, call to a library function, bootstrap, etc.)
        \item The assumptions made should be given (e.g., Normally distributed errors).
        \item It should be clear whether the error bar is the standard deviation or the standard error of the mean.
        \item It is OK to report 1-sigma error bars, but one should state it. The authors should preferably report a 2-sigma error bar than state that they have a 96\% CI, if the hypothesis of Normality of errors is not verified.
        \item For asymmetric distributions, the authors should be careful not to show in tables or figures symmetric error bars that would yield results that are out of range (e.g. negative error rates).
        \item If error bars are reported in tables or plots, The authors should explain in the text how they were calculated and reference the corresponding figures or tables in the text.
    \end{itemize}

\item {\bf Experiments compute resources}
    \item[] Question: For each experiment, does the paper provide sufficient information on the computer resources (type of compute workers, memory, time of execution) needed to reproduce the experiments?
    \item[] Answer: \answerNo{} % Replace by \answerYes{}, \answerNo{}, or \answerNA{}.
    \item[] Justification: we fit a standard statistical model (ordinal logistic regression). This model is known to be cheap to fit.
    \item[] Guidelines:
    \begin{itemize}
        \item The answer NA means that the paper does not include experiments.
        \item The paper should indicate the type of compute workers CPU or GPU, internal cluster, or cloud provider, including relevant memory and storage.
        \item The paper should provide the amount of compute required for each of the individual experimental runs as well as estimate the total compute. 
        \item The paper should disclose whether the full research project required more compute than the experiments reported in the paper (e.g., preliminary or failed experiments that didn't make it into the paper). 
    \end{itemize}
    
\item {\bf Code of ethics}
    \item[] Question: Does the research conducted in the paper conform, in every respect, with the NeurIPS Code of Ethics \url{https://neurips.cc/public/EthicsGuidelines}?
    \item[] Answer: \answerYes{} % Replace by \answerYes{}, \answerNo{}, or \answerNA{}.
    \item[] Justification: we made sure this is true.
    \item[] Guidelines:
    \begin{itemize}
        \item The answer NA means that the authors have not reviewed the NeurIPS Code of Ethics.
        \item If the authors answer No, they should explain the special circumstances that require a deviation from the Code of Ethics.
        \item The authors should make sure to preserve anonymity (e.g., if there is a special consideration due to laws or regulations in their jurisdiction).
    \end{itemize}

\item {\bf Broader impacts}
    \item[] Question: Does the paper discuss both potential positive societal impacts and negative societal impacts of the work performed?
    \item[] Answer: \answerNA{} % Replace by \answerYes{}, \answerNo{}, or \answerNA{}.
    \item[] Justification: there is no evident direct impact.
    \item[] Guidelines:
    \begin{itemize}
        \item The answer NA means that there is no societal impact of the work performed.
        \item If the authors answer NA or No, they should explain why their work has no societal impact or why the paper does not address societal impact.
        \item Examples of negative societal impacts include potential malicious or unintended uses (e.g., disinformation, generating fake profiles, surveillance), fairness considerations (e.g., deployment of technologies that could make decisions that unfairly impact specific groups), privacy considerations, and security considerations.
        \item The conference expects that many papers will be foundational research and not tied to particular applications, let alone deployments. However, if there is a direct path to any negative applications, the authors should point it out. For example, it is legitimate to point out that an improvement in the quality of generative models could be used to generate deepfakes for disinformation. On the other hand, it is not needed to point out that a generic algorithm for optimizing neural networks could enable people to train models that generate Deepfakes faster.
        \item The authors should consider possible harms that could arise when the technology is being used as intended and functioning correctly, harms that could arise when the technology is being used as intended but gives incorrect results, and harms following from (intentional or unintentional) misuse of the technology.
        \item If there are negative societal impacts, the authors could also discuss possible mitigation strategies (e.g., gated release of models, providing defenses in addition to attacks, mechanisms for monitoring misuse, mechanisms to monitor how a system learns from feedback over time, improving the efficiency and accessibility of ML).
    \end{itemize}
    
\item {\bf Safeguards}
    \item[] Question: Does the paper describe safeguards that have been put in place for responsible release of data or models that have a high risk for misuse (e.g., pretrained language models, image generators, or scraped datasets)?
    \item[] Answer: \answerNA{} % Replace by \answerYes{}, \answerNo{}, or \answerNA{}.
    \item[] Justification: we do not think this is a problem we face.
    \item[] Guidelines:
    \begin{itemize}
        \item The answer NA means that the paper poses no such risks.
        \item Released models that have a high risk for misuse or dual-use should be released with necessary safeguards to allow for controlled use of the model, for example by requiring that users adhere to usage guidelines or restrictions to access the model or implementing safety filters. 
        \item Datasets that have been scraped from the Internet could pose safety risks. The authors should describe how they avoided releasing unsafe images.
        \item We recognize that providing effective safeguards is challenging, and many papers do not require this, but we encourage authors to take this into account and make a best faith effort.
    \end{itemize}

\item {\bf Licenses for existing assets}
    \item[] Question: Are the creators or original owners of assets (e.g., code, data, models), used in the paper, properly credited and are the license and terms of use explicitly mentioned and properly respected?
    \item[] Answer: \answerYes{} % Replace by \answerYes{}, \answerNo{}, or \answerNA{}.
    \item[] Justification: everyone is credited.
    \item[] Guidelines:
    \begin{itemize}
        \item The answer NA means that the paper does not use existing assets.
        \item The authors should cite the original paper that produced the code package or dataset.
        \item The authors should state which version of the asset is used and, if possible, include a URL.
        \item The name of the license (e.g., CC-BY 4.0) should be included for each asset.
        \item For scraped data from a particular source (e.g., website), the copyright and terms of service of that source should be provided.
        \item If assets are released, the license, copyright information, and terms of use in the package should be provided. For popular datasets, \url{paperswithcode.com/datasets} has curated licenses for some datasets. Their licensing guide can help determine the license of a dataset.
        \item For existing datasets that are re-packaged, both the original license and the license of the derived asset (if it has changed) should be provided.
        \item If this information is not available online, the authors are encouraged to reach out to the asset's creators.
    \end{itemize}

\item {\bf New assets}
    \item[] Question: Are new assets introduced in the paper well documented and is the documentation provided alongside the assets?
    \item[] Answer: \answerYes{} % Replace by \answerYes{}, \answerNo{}, or \answerNA{}.
    \item[] Justification: main asset is code and it will be documented and released.
    \item[] Guidelines:
    \begin{itemize}
        \item The answer NA means that the paper does not release new assets.
        \item Researchers should communicate the details of the dataset/code/model as part of their submissions via structured templates. This includes details about training, license, limitations, etc. 
        \item The paper should discuss whether and how consent was obtained from people whose asset is used.
        \item At submission time, remember to anonymize your assets (if applicable). You can either create an anonymized URL or include an anonymized zip file.
    \end{itemize}

\item {\bf Crowdsourcing and research with human subjects}
    \item[] Question: For crowdsourcing experiments and research with human subjects, does the paper include the full text of instructions given to participants and screenshots, if applicable, as well as details about compensation (if any)? 
    \item[] Answer: \answerNA{} % Replace by \answerYes{}, \answerNo{}, or \answerNA{}.
    \item[] Justification: we did not use crowdsourcing.
    \item[] Guidelines:
    \begin{itemize}
        \item The answer NA means that the paper does not involve crowdsourcing nor research with human subjects.
        \item Including this information in the supplemental material is fine, but if the main contribution of the paper involves human subjects, then as much detail as possible should be included in the main paper. 
        \item According to the NeurIPS Code of Ethics, workers involved in data collection, curation, or other labor should be paid at least the minimum wage in the country of the data collector. 
    \end{itemize}

\item {\bf Institutional review board (IRB) approvals or equivalent for research with human subjects}
    \item[] Question: Does the paper describe potential risks incurred by study participants, whether such risks were disclosed to the subjects, and whether Institutional Review Board (IRB) approvals (or an equivalent approval/review based on the requirements of your country or institution) were obtained?
    \item[] Answer: \answerNA{} % Replace by \answerYes{}, \answerNo{}, or \answerNA{}.
    \item[] Justification: we did not have human participants.
    \item[] Guidelines:
    \begin{itemize}
        \item The answer NA means that the paper does not involve crowdsourcing nor research with human subjects.
        \item Depending on the country in which research is conducted, IRB approval (or equivalent) may be required for any human subjects research. If you obtained IRB approval, you should clearly state this in the paper. 
        \item We recognize that the procedures for this may vary significantly between institutions and locations, and we expect authors to adhere to the NeurIPS Code of Ethics and the guidelines for their institution. 
        \item For initial submissions, do not include any information that would break anonymity (if applicable), such as the institution conducting the review.
    \end{itemize}

\item {\bf Declaration of LLM usage}
    \item[] Question: Does the paper describe the usage of LLMs if it is an important, original, or non-standard component of the core methods in this research? Note that if the LLM is used only for writing, editing, or formatting purposes and does not impact the core methodology, scientific rigorousness, or originality of the research, declaration is not required.
    %this research? 
    \item[] Answer: \answerNA{} % Replace by \answerYes{}, \answerNo{}, or \answerNA{}.
    \item[] Justification: we used LLMs mainly for rephrasing.
    \item[] Guidelines:
    \begin{itemize}
        \item The answer NA means that the core method development in this research does not involve LLMs as any important, original, or non-standard components.
        \item Please refer to our LLM policy (\url{https://neurips.cc/Conferences/2025/LLM}) for what should or should not be described.
    \end{itemize}

\end{enumerate}

\end{document}